\DeclareMathAlphabet\mathbb{U}{msb}{m}{n}
\def\Nset{\mathbb{N}}
\def\Rset{\mathbb{R}}
\def\Hset{\mathbb{H}}
\let\Pr\undefined
\DeclareMathOperator*{\Pr}{\mathbb{P}}
\DeclareMathOperator*{\E}{\mathbb E}
\DeclareMathOperator*{\argmax}{argmax}
\DeclareMathOperator{\sign}{sign}
\DeclarePairedDelimiter{\abs}{\lvert}{\rvert} 
\DeclarePairedDelimiter{\bracket}{[}{]}
\DeclarePairedDelimiter{\curl}{\{}{\}}
\DeclarePairedDelimiter{\paren}{(}{)}
\DeclarePairedDelimiter{\norm}{\|}{\|}
\newcommand{\sD}{{\mathscr D}}
\newcommand{\sG}{{\mathscr G}}
\newcommand{\sH}{{\mathscr H}}
\newcommand{\sM}{{\mathscr M}}
\newcommand{\sR}{{\mathscr R}}
\newcommand{\sX}{{\mathscr X}}
\newcommand{\sY}{{\mathscr Y}}
\newcommand{\sZ}{{\mathscr Z}}
\newcommand{\sfp}{{\mathsf p}}
\newcommand{\sfq}{{\mathsf q}}
\newcommand{\sfr}{{\mathsf r}}
\newcommand{\sfD}{{\mathsf D}}
\newcommand{\sfL}{{\mathsf L}}
\newcommand{\1}{1}
\newcommand{\Rad}{\mathfrak R}
\newcommand{\brho}{{\boldsymbol \rho}}
\newcommand{\hh}{{\mathsf h}}
\newcommand{\IMMAX}{\textsc{immax}}
\newcommand{\FOCAL}{\textsc{focal}}
\newcommand{\LDAM}{\textsc{ldam}}
\newcommand{\CE}{\textsc{ce}}
\newcommand{\CB}{\textsc{cb}}
\newcommand{\BS}{\textsc{bs}}
\newcommand{\EQUAL}{\textsc{equal}}
\newcommand{\RW}{\textsc{rw}}
\newcommand{\LA}{\textsc{la}}
\newcommand{\hinge}{\textsc{hinge}}
\newcommand{\h}{\widehat}
\newcommand{\ov}{\overline}
\newcommand{\wt}{\widetilde}
\newcommand{\e}{\epsilon}
\newcommand{\ignore}[1]{}
\theoremstyle{plain}
\newtheorem{theorem}{Theorem}[section]
\newtheorem{lemma}[theorem]{Lemma}
\newtheorem{corollary}[theorem]{Corollary}
\theoremstyle{definition}
\newtheorem{definition}[theorem]{Definition}
\theoremstyle{remark}
\icmltitlerunning{Learning from Imbalanced Data}
\begin{document}

\twocolumn[
\icmltitle{Balancing the Scales: A Theoretical and Algorithmic Framework for\\ Learning from Imbalanced Data}

\begin{icmlauthorlist}
\icmlauthor{Corinna Cortes}{google}
\icmlauthor{Anqi Mao}{courant}
\icmlauthor{Mehryar Mohri}{google,courant}
\icmlauthor{Yutao Zhong}{google}
\end{icmlauthorlist}

\icmlaffiliation{google}{Google Research, New York, NY;}
\icmlaffiliation{courant}{Courant Institute of Mathematical Sciences, New York, NY}

\icmlcorrespondingauthor{Corinna Cortes}{corinna@google.com}
\icmlcorrespondingauthor{Anqi Mao}{aqmao@cims.nyu.edu}
\icmlcorrespondingauthor{Mehryar Mohri}{mohri@google.com}
\icmlcorrespondingauthor{Yutao Zhong}{yutaozhong@google.com}

\icmlkeywords{imbalanced data, consistency, margin bounds, learning theory}

\vskip 0.3in
]

\printAffiliationsAndNotice{}

\begin{abstract}

Class imbalance remains a major challenge in machine learning,
especially in multi-class problems with long-tailed distributions.
Existing methods, such as data resampling, cost-sensitive techniques,
and logistic loss modifications, though popular and often effective, lack solid theoretical foundations. As an example, we demonstrate that cost-sensitive methods  are not Bayes-consistent.
This paper introduces a novel theoretical framework for analyzing
generalization in imbalanced classification.
\ignore{We first demonstrate the
Bayes-inconsistency of cost-sensitive methods and show that a previous
approach based on balanced loss contradicts our theoretical findings
for the standard misclassification loss.} We \ignore{then} propose a new
class-imbalanced margin loss function for both binary and multi-class
settings, prove its strong $\sH$-consistency, and derive corresponding
learning guarantees based on empirical loss and a new notion of
class-sensitive Rademacher complexity. Leveraging these theoretical
results, we devise novel and general learning algorithms,
\IMMAX\ (\emph{Imbalanced Margin Maximization}), which incorporate
confidence margins and are applicable to various hypothesis
sets. While our focus is theoretical, we also present extensive
empirical results demonstrating the effectiveness of our algorithms
compared to existing baselines.

\end{abstract}

\section{Introduction}
\label{sec:introduction}

The class imbalance problem, defined by a significant disparity in the
number of instances across classes within a dataset, is a common
challenge in machine learning applications
\citep{Lewis:1994,Fawcett:1996,KubatMatwin1997,kang2021exploring,
  menonlong, liu2019large,cui2019class}.  This issue is prevalent
in many real-world binary classification scenarios, and arguably even
more so in multi-class problems with numerous classes. In such cases,
a few majority classes often dominate the dataset, leading to a
``long-tailed'' distribution.  Classifiers trained on these imbalanced
datasets often struggle on the minority classes, performing similarly to a naive baseline that
simply predicts the majority class.

The problem has been widely studied in the literature
\citep{CardieNowe1997,KubatMatwin1997,chawla2002smote,he2009learning,
  WallaceSmallBrodleyTrikalinos2011}.  While a comprehensive review is
beyond our scope, we summarize key strategies into broad categories
and refer readers to a recent survey by
\citet{ZhangKangHooiYanFeng2023} for further details.
The primary approaches include the following.

\textbf{Data modification methods.} Techniques such as oversampling the minority classes \citep{chawla2002smote}, undersampling the majority
classes \citep{WallaceSmallBrodleyTrikalinos2011,KubatMatwin1997}, or
generating synthetic samples (e.g., SMOTE
\citep{chawla2002smote,QiaoLiu2008,han2005borderline}), aim to
rebalance the dataset before training
\citep{chawla2002smote,estabrooks2004multiple,
  liu2008exploratory,zhang2021learning}.

\textbf{Cost-sensitive techniques.} These assign different
penalization costs to losses for different classes. They include
cost-sensitive SVM \citep{Iranmehr:2019,Masnadi-Shirazi:2010} and
other cost-sensitive methods
\citep{elkan2001foundations,zhou2005training, zhao2018adaptive,
  zhang2018online, zhang2019online,sun2007cost,Fan:2017,
  jamal2020rethinking}. The weights are often determined by the
relative number of samples in each class or a notion of effective
sample size \cite{cui2019class}.

These two approaches are closely related and can be equivalent
in the limit, with cost-sensitive methods offering a more efficient
and principled implementation of data sampling. However, both
approaches act by effectively modifying the underlying distribution
and risk overfitting minority classes, discarding majority class
information, and inherently biasing the training distribution. Very
importantly, these techniques may lead to Bayes-inconsistency (proven
in Section~\ref{sec:existing-methods}). So while effective in some
cases, their performance depends on the problem, data distribution,
predictors, and evaluation metrics \citep{VanHulse:2007}, and they
often require extensive hyperparameter tuning. Hybrid approaches aim
to combine these two techniques but inherit many of their
limitations.

\textbf{Logistic loss modifications.} Several recent methods modify
the logistic loss to address class imbalance.
Some add hyperparameters to logits, effectively
implementing cost-sensitive adjustments to the loss's exponential
terms.  Examples include the Balanced Softmax loss
\citep{jiawei2020balanced}, Equalization loss
\citep{tan2020equalization}, and LDAM loss \citep{cao2019learning}.
Other methods, such as logit adjustment
\citep{menonlong,khan2019striking}, use hyperparameters for each
pair of class labels, with \citet{menonlong} showing
calibration for their approach.
\ignore{Alternative multiplicative modifications were advocated by
\citet{Ye:2020}. The Vector-Scaling loss \citep{kini2021label}
combines additive and multiplicative modifications.  The authors
present an analysis of this method in the case of linear predictors,
underscoring the specific benefits of the multiplicative changes.
Such multiplicative changes are
equivalent to normalizing scoring functions (or feature vectors in the
case of linear predictors), which is a common method irrespective
of the imbalanced data context.}
Alternative multiplicative modifications were advocated by
\citet{Ye:2020}, while the Vector-Scaling loss \citep{kini2021label}
integrates both additive and multiplicative adjustments. The authors
analyze this approach for linear predictors, highlighting the specific
advantages of multiplicative modifications. These multiplicative
adjustments, however, are equivalent to normalizing scoring functions
or feature vectors in linear cases, a widely used technique, regardless
of class imbalance.

\textbf{Other methods.} Additional approaches for addressing
imbalanced data (see \citep{ZhangKangHooiYanFeng2023}) include
post-hoc adjustments of decision thresholds
\citep{Fawcett:1996,Collell:2016} or class weights
\citep{kang2019decoupling,Kim:2019}, and techniques like transfer
learning, data augmentation, and distillation \citep{LiLiYeZhang2024}.

Despite the many significant advances, these techniques continue to face
persistent challenges.
Most existing solutions are heuristic-driven and lack a solid
theoretical foundation, making their performance unpredictable across
diverse contexts. Among prior work, the most closely related is that of \cite{cao2019learning},
which provides an analysis of generalization guarantees for the \emph{balanced loss} \citep{cortes2025improved}, 
which equalizes the impact of each class by weighting errors inversely to class frequency. Their analysis also applies only to binary
classification under the separable case and does not address the 
target \emph{misclassification loss}. \citet{jiawei2020balanced} adopt classical margin theory to derive generalization bounds for multi-class softmax regression, while \citet{wang2023unified} establish fine-grained generalization bounds based on data-dependent contraction. Despite these advances, all of these works focus exclusively on the balanced loss. In contrast, our work establishes generalization guarantees with respect to the standard zero-one misclassification loss.

\textbf{Loss functions and fairness considerations.} This work focuses
on the standard zero-one misclassification loss, which remains the
primary objective in many machine learning applications.
While the balanced loss is sometimes advocated for fairness,
particularly when labels correlate with demographic attributes, such
correlations are absent in many tasks. Moreover, fairness
involves broader considerations, and selecting the appropriate
criterion requires complex trade-offs.
Evaluation metrics like F1-score and AUC are also widely used in the
context of imbalanced data. However, these metrics can obscure the
model's performance on the standard zero-one misclassification tasks, especially in
scenarios with extreme imbalances or when the minority class exhibits
high variability.

\ignore{
\textbf{Balanced loss and fairness considerations.}  Many of the
aforementioned techniques assess their performance on the balanced
loss, arguing that the rebalancing of classes offer fairness
advantages by treating all class errors equally, needed when class
labels correlate with demographic attributes. The evaluation of these
methods frequently also emphasizes alternative metrics, such as
F1-score, AUC, or other measures that weight false or true positive
rates differently. \ignore{Unfortunately, despite the correction
  factors, these methods often seem to struggle with extreme
  imbalances or when the minority class exhibits high intra-class
  variability.} However, for many machine learning applications the
standard misclassification loss remains the primary objective, the
focus of this paper.
}

\ignore{

\textbf{Loss functions and fairness considerations.}  The target
loss function considered in this work is the standard zero-one
misclassication loss.
While the balanced loss has been argued to
offer fairness advantages when class labels correlate with
  demographic attributes by treating all class errors equally, such
  correlations are not always relevant. Moreover, fairness in machine
  learning often considers a range of criteria beyond the balanced
  loss, while the standard misclassification loss remains the primary
  objective in many applications.
Other method evaluations in the literature frequently emphasizes
alternative metrics, such as F1-score, AUC, or other measures that
weight false or true positive rates differently. This focus can
obscure their true performance on standard misclassification tasks.
Furthermore, these methods often seem to struggle with extreme
imbalances or when the minority class exhibits high intra-class
variability.

}

\textbf{Our contributions.} This paper presents a comprehensive
theoretical analysis of generalization for classification loss in the
context of imbalanced classes.

In Section~\ref{sec:binary}, we introduce a \emph{class-imbalanced
margin loss function} and provide a novel theoretical analysis for
binary classification. We establish strong $\sH$-consistency bounds
and derive learning guarantees based on empirical class-imbalanced
margin loss and class-sensitive Rademacher complexity.
Section~\ref{sec:algorithms} details new learning algorithms,
\IMMAX\ (\emph{Imbalanced Margin Maximization}), inspired by our
theoretical insights. These algorithms generalize margin-based methods
by incorporating both positive and negative \emph{confidence
margins}. In the special case where the logistic loss is used, our
algorithms can be viewed as a logistic loss modification
method. However, they differ from previous approaches, including
multiplicative logit modifications, as our parameters are applied
multiplicatively to differences of logits, which naturally aligns with
the concept of margins.

In Section~\ref{sec:multiclass}, we extend our results to multi-class
classification, introducing a generalized multi-class class-imbalanced
margin loss, proving its $\sH$-consistency, and deriving
generalization bounds via confidence margin-weighted class-sensitive
Rademacher complexity. We also present new \IMMAX\ algorithms for
imbalanced multi-class problems based on these guarantees.
In Section~\ref{sec:existing-methods}, we analyze two core methods for
addressing imbalanced data.  We prove that cost-sensitive methods lack
Bayes-consistency and show that the analysis of
\citet{cao2019learning} in the separable binary case (for the balanced
loss) leads to margin values conflicting with our theoretical results
(for the misclassification loss). \ignore{Their approach also leads to
inferior empirical results.}
Finally, while the focus of our work is theoretical and algorithmic,
Section~\ref{sec:experiments} includes extensive empirical
evaluations, comparing our methods against several baselines.

\section{Preliminaries}
\label{sec:preliminaries}

\textbf{Binary classification.} Let $\sX$ represent the input space,
and $\sY = \curl*{-1, +1}$ the binary label space. Let $\sD$ be a
distribution over $\sX \times \sY$, and $\sH$ a hypothesis set of
functions mapping from $\sX$ to $\Rset$. Denote by
$\sH_{\mathrm{all}}$ the set of all measurable functions, and by $\ell
\colon \sH_{\mathrm{all}} \times \sX \times \sY \to \Rset$ a loss
function. The \emph{generalization error} of a hypothesis $h \in \sH$
and the \emph{best-in-class generalization error} of $\sH$ for a loss
function $\ell$ are defined as follows: $\sR_{\ell}(h) = \E_{(x,y)\sim
  \sD}\bracket*{\ell(h,x,y)}$, and $\sR_{\ell}^*(\sH) = \inf_{h \in
  \sH} \sR_{\ell}(h)$. The target loss function in binary
classification is the zero-one loss function defined for all $h \in
\sH$ and $(x, y) \in \sX \times \sY$ by $\ell_{0-1}(h, x, y) \coloneqq
\1_{\sign(h(x)) \neq y}$, where $\sign(\alpha) = \1_{\alpha \geq 0} -
\1_{\alpha < 0}$. For a labeled example $(x, y) \in \sX \times \sY$,
the \emph{margin} $\rho_{h}(x, y)$ of a predictor $h \in \sH$ is
defined by $\rho_h(x, y) = y h(x)$.  \ignore{ We denote the $\ell_p$
  norm by $\norm*{\cdot}_p$ with $p \in [1, \infty)$ and, for
    convenience, often use $\norm*{\cdot}$ to represent the case $p =
    2$.  }

\textbf{Consistency.} A fundamental property of a surrogate
loss $\ell_A$ for a target loss function $\ell_B$ is its
\emph{Bayes-consistency} \citep{Zhang2003,bartlett2006convexity,chen2004support}.  Specifically, if a sequence of predictors
$\{h_n\}_{n\in \Nset}\subset \sH_{\rm{all}}$ achieves the optimal
$\ell_A$-loss asymptotically, then it also achieves the optimal
$\ell_B$-loss in the limit: $\lim_{n \to +\infty} \sR_{\ell_A}(h_n) =
\sR^*_{\ell_A}(\sH_{\rm{all}}) \Rightarrow \lim_{n \to +\infty}
\sR_{\ell_B}(h_n) = \sR^*_{\ell_B}(\sH_{\rm{all}})$. While
Bayes-consistency is a natural and desirable property, it is
inherently asymptotic and applies only to the family of all measurable
functions $\sH_{\rm{all}}$.  A more applicable and informative notion
is that of \emph{$\sH$-consistent bounds}, which account for the
specific hypothesis class $\sH$ and provide non-asymptotic guarantees
\citep{awasthi2022h,awasthi2022multi,awasthi2021calibration,awasthi2021finer,AwasthiMaoMohriZhong2023theoretically,awasthi2023dc,mao2023cross,MaoMohriZhong2023ranking,MaoMohriZhong2023rankingabs,MaoMohriZhong2023structured,MaoMohriMohriZhong2023twostage,MaoMohriZhong2023characterization,zheng2023revisiting,MaoMohriZhong2024score,MaoMohriZhong2024predictor,MaoMohriZhong2024deferral,mao2024h,mao2024regression,mao2024multi,mao2024realizable,MohriAndorChoiCollinsMaoZhong2024learning,cortes2024cardinality,MaoMohriZhong2025mastering,MaoMohriZhong2025principled,mao2025theory,montreuil2024learning,montreuil2025two,montreuil2025adversarial,montreuil2025ask,montreuil2025one,desalvo2025budgeted,zhong2025fundamental}). In the realizable
setting, these bounds are of the form:
\begin{align*}
\forall h \in \sH, \quad & \sR_{\ell_B}(h) - \sR^*_{\ell_B}(\sH)
\leq \Gamma \paren*{\sR_{\ell_A}(h) - \sR^*_{\ell_A}(\sH)},
\end{align*}
where $\Gamma$ is a non-increasing concave function with $\Gamma(0) =
0$. In the general non-realizable setting, each side of the bound is
augmented with a \emph{minimizabily gap} $\sM_{\ell}(\sH) =
\sR_{\ell}^*(\sH) - \E_{x} \bracket*{ \inf_{h \in \sH} \E_{y}
  \bracket*{\ell(h, x, y) \mid x} }$, which measures the difference
between the best-in-class error and the expected best-in-class
conditional error.  The resulting bound is: $\sR_{\ell_B}(h) -
\sR^*_{\ell_B}(\sH) + \sM_{\ell_B}(\sH) \leq \Gamma
\paren*{\sR_{\ell_A}(h) - \sR^*_{\ell_A}(\sH) + \sM_{\ell_A}(\sH)}$.
$\sH$-consistency bounds imply Bayes-consistency when $\sH =
\sH_{\rm{all}}$ \citep{mao2024universal,mao2025enhanced,mohri2025beyond} and provide stronger and more
applicable guarantees.

\section{Theoretical Analysis of Imbalanced Binary Classification}
\label{sec:binary}

Our theoretical analysis addresses imbalance by introducing distinct
\emph{confidence margins} for positive and negative points. This
allows us to explicitly account for the effects of class imbalance.
We begin by defining a general class-imbalanced margin loss function
based on these confidence margins. Subsequently, we prove that, unlike
previously studied cost-sensitive loss functions in the literature,
this new loss function satisfies $\sH$-consistency
bounds. Furthermore, we establish general margin bounds for imbalanced
binary classification in terms of the proposed class-imbalanced margin
loss.
While our use of margins bears some resemblance to the approach of \citet{cao2019learning}, their analysis is limited to
\emph{geometric margins} in the separable case, making ours
fundamentally distinct.

\subsection{Imbalanced $(\rho_{+}, \rho_{-})$-Margin Loss Function}
\label{sec:imbalanced-loss}

We first extend the $\rho$-margin loss function
\citep{MohriRostamizadehTalwalkar2018} to accommodate the imbalanced
setting. To account for different confidence margins for instances
with label $+$ and label $-$, we define the \emph{class-imbalanced
$(\rho_{+}, \rho_{-})$-margin loss function} as follows:
\begin{definition}[Class-imbalanced margin loss function]
\label{def:imbalanced-loss}
Let $\Phi_{\rho} \colon u \mapsto \min \paren*{1, \max \paren*{0, 1 -
    \frac{u}{\rho}}}$ be the $\rho$-margin loss function. For any $\rho_{+} > 0$ and $\rho_{-} > 0$, the \emph{class-imbalanced
$(\rho_{+}, \rho_{-})$-margin loss} is the function $\sfL_{\rho_{+},
  \rho_{-}} \colon \sH_{\mathrm{all}} \times \sX \times \sY \to
\Rset$, defined as follows:
\begin{equation*}
  \sfL_{\rho_{+}, \rho_{-}}(h, x, y) = \Phi_{\rho_{+}}(y h(x)) \1_{y  = +1}
  + \Phi_{\rho_{-}}(y h(x)) \1_{y  = -1}.
\end{equation*}
\ignore{where $\Phi_{\rho} \colon u \mapsto \min \paren*{1, \max \paren*{0, 1 -
    \frac{u}{\rho}}}$ is the $\rho$-margin loss function.}
\end{definition}
The main margin bounds in this section are expressed in terms of this
loss function. The parameters $\rho_{+}$ and $\rho_{-}$, both greater
than 0, represent the confidence margins imposed by a hypothesis $h$
for positive and negative instances, respectively.  The following
result provides an equivalent expression for the class-imbalanced margin
loss function, see proof in Appendix~\ref{app:margin-loss}.

\begin{restatable}{lemma}{MarginLoss}
\label{lemma:margin-loss}
The class-imbalanced $(\rho_{+}, \rho_{-})$-margin loss function can be
equivalently expressed as follows:
\begin{equation*}
  \sfL_{\rho_{+}, \rho_{-}}\!(h, x, y) \!=\! \Phi_{\rho_{+}}\!(y h(x)) \1_{h(x) \geq 0}
  \!+\! \Phi_{\rho_{-}}\!(y h(x)) \1_{h(x) < 0}. \mspace{-8mu}
\end{equation*}
\end{restatable}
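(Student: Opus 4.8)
The plan is to reduce the identity to a short case analysis driven by the single observation that the $\rho$-margin loss saturates at $1$ on the nonpositive axis, independently of $\rho$. Concretely, for any $\rho > 0$ and any $u \le 0$ we have $1 - u/\rho \ge 1$, hence $\Phi_\rho(u) = \min\paren{1, \max\paren{0, 1 - u/\rho}} = 1$; in particular $\Phi_{\rho_+}(u) = \Phi_{\rho_-}(u) = 1$ whenever $u \le 0$. This is the only property of $\Phi_\rho$ that the argument will use.

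Next I would compare the two expressions term by term according to whether the indicator masks agree. The pair $(\1_{y = +1}, \1_{y = -1})$ and the pair $(\1_{h(x) \ge 0}, \1_{h(x) < 0})$ coincide exactly when $\sign(h(x)) = y$ (with the convention $\sign(0) = +1$ built into the definition of $\sign$), and in that case the two formulas are literally the same sum, so there is nothing to prove. They disagree precisely in the two ``misaligned'' situations: (i) $y = +1$ and $h(x) < 0$, and (ii) $y = -1$ and $h(x) \ge 0$. In situation (i) the first formula contributes $\Phi_{\rho_+}(y h(x)) = \Phi_{\rho_+}(h(x))$ while the second contributes $\Phi_{\rho_-}(y h(x)) = \Phi_{\rho_-}(h(x))$, and since $h(x) < 0$ both equal $1$ by the saturation property. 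In situation (ii) the first formula contributes $\Phi_{\rho_-}(-h(x))$ and the second $\Phi_{\rho_+}(-h(x))$, and since $-h(x) \le 0$ both again equal $1$. Hence in every case the two expressions take the same value, which establishes the claimed identity.

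I do not anticipate a genuine obstacle here; the only point requiring a little care is the boundary value $h(x) = 0$, where the event $\curl{h(x) \ge 0}$ is active. There $y h(x) = 0$ and the saturation property gives $\Phi_{\rho_+}(0) = \Phi_{\rho_-}(0) = 1$, so both formulas again agree (this is precisely why the equivalent form in the lemma uses $\1_{h(x) \ge 0}$ rather than $\1_{h(x) > 0}$). An equally short alternative is a direct four-way split on $\sign(y) \times \sign(h(x))$, invoking $\Phi_\rho(u) = 1$ for $u \le 0$ in the two sign-mismatch cases; I would present whichever reads more cleanly in Appendix~\ref{app:margin-loss}.
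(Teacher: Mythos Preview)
Your proof is correct and takes essentially the same approach as the paper's. The paper's argument is just a terser version of the same case split: it partitions on whether $yh(x) \le 0$ (where both $\Phi$-values saturate to $1$) or $yh(x) > 0$ (where the two indicator pairs coincide), which is exactly your observation reorganized.
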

\ignore{The proof is included in Appendix~\ref{app:margin-loss}. }\ignore{ The empirical
class-imbalanced margin loss is similarly defined as the imbalanced margin
loss computed over the training sample.}

\subsection{$\sH$-Consistency}
\label{sec:H-consistency}

The following result provides a strong consistency guarantee for the
class-imbalanced margin loss introduced in relation to the zero-one loss. We
say a hypothesis set is complete when the scoring values spanned by
$\sH$ for each instance cover $\Rset$: for all $x \in \sX$,
$\curl*{h(x) \colon h \in \sH} = \Rset$. Most hypothesis sets widely
considered in practice are all complete.

\begin{restatable}[$\sH$-consistency bound for class-imbalanced margin loss]
  {theorem}
  {HConsistencyBinary}
\label{thm:H-consistency-binary}
Let $\sH$ be a complete hypothesis set. Then, for all $h \in \sH$,
$\rho_{+} > 0$, and $\rho_{-} > 0$, the following bound holds:
\ifdim\columnwidth=\textwidth
{
\begin{equation}
  \sR_{\ell_{0-1}}(h) - \sR^*_{\ell_{0-1}}(\sH) + \sM_{\ell_{0-1}}(\sH)
  \leq \sR_{\sfL_{\rho_{+}, \rho_{-}}}(h) - \sR^*_{\sfL_{\rho_{+}, \rho_{-}}}(\sH)
  + \sM_{\sfL_{\rho_{+}, \rho_{-}}}(\sH).
\end{equation}
}\else
{
\begin{multline*}
\sR_{\ell_{0-1}}(h) - \sR^*_{\ell_{0-1}}(\sH) + \sM_{\ell_{0-1}}(\sH)\\
\leq \sR_{\sfL_{\rho_{+}, \rho_{-}}}(h) - \sR^*_{\sfL_{\rho_{+}, \rho_{-}}}(\sH)
  + \sM_{\sfL_{\rho_{+}, \rho_{-}}}(\sH).
\end{multline*}
}\fi
\end{restatable}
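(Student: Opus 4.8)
The plan is to reduce the stated inequality to a pointwise comparison of conditional risks and then close it by a two-case argument on $\sign(h(x))$. For any loss $\ell$, write $\cC_{\ell}(h,x) = \E_{y}\bracket*{\ell(h,x,y)\mid x}$ for the conditional $\ell$-risk and $\cC_{\ell}^{*}(\sH,x) = \inf_{h'\in\sH}\cC_{\ell}(h',x)$. Directly from the definition of the minimizability gap, $\sM_{\ell}(\sH) = \sR_{\ell}^{*}(\sH) - \E_{x}\bracket*{\cC_{\ell}^{*}(\sH,x)}$, hence
\begin{equation*}
\sR_{\ell}(h) - \sR_{\ell}^{*}(\sH) + \sM_{\ell}(\sH) = \E_{x}\bracket*{\cC_{\ell}(h,x) - \cC_{\ell}^{*}(\sH,x)}.
\end{equation*}
Applying this to both $\ell_{0-1}$ and $\sfL_{\rho_{+},\rho_{-}}$, it suffices to prove that, for every $x$, $\cC_{\ell_{0-1}}(h,x) - \cC_{\ell_{0-1}}^{*}(\sH,x) \leq \cC_{\sfL_{\rho_{+},\rho_{-}}}(h,x) - \cC_{\sfL_{\rho_{+},\rho_{-}}}^{*}(\sH,x)$, and then take $\E_{x}[\cdot]$; in the language of $\sH$-consistency bounds this is exactly the statement that the calibration function may be taken to be the identity.

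Fix $x$ and set $\eta = \Pr(y=+1\mid x)$ and $t = h(x)$. The conditional risks are $\cC_{\ell_{0-1}}(h,x) = (1-\eta)\1_{t\geq 0} + \eta\1_{t<0}$ and, by Definition~\ref{def:imbalanced-loss}, $\cC_{\sfL_{\rho_{+},\rho_{-}}}(h,x) = \eta\,\Phi_{\rho_{+}}(t) + (1-\eta)\,\Phi_{\rho_{-}}(-t)$. Since $\sH$ is complete, the infima defining $\cC^{*}$ range over all of $\Rset$, so $\cC_{\ell_{0-1}}^{*}(\sH,x) = \min(\eta,1-\eta)$, while evaluating $t\mapsto \eta\Phi_{\rho_{+}}(t) + (1-\eta)\Phi_{\rho_{-}}(-t)$ at $t=\rho_{+}$ and $t=-\rho_{-}$ (where it takes the values $1-\eta$ and $\eta$) gives $\cC_{\sfL_{\rho_{+},\rho_{-}}}^{*}(\sH,x) \leq \min(\eta,1-\eta)$ — equality in fact holds, but only this upper bound is needed. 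It then remains to compare the two conditional risks directly, splitting on the sign of $t$: if $t\geq 0$ then $\Phi_{\rho_{-}}(-t) = 1$ (its argument is $\leq 0$) and $\Phi_{\rho_{+}}(t)\geq 0$, so $\cC_{\sfL_{\rho_{+},\rho_{-}}}(h,x)\geq 1-\eta = \cC_{\ell_{0-1}}(h,x)$; symmetrically, if $t<0$ then $\Phi_{\rho_{+}}(t) = 1$ and $\Phi_{\rho_{-}}(-t)\geq 0$, so $\cC_{\sfL_{\rho_{+},\rho_{-}}}(h,x)\geq \eta = \cC_{\ell_{0-1}}(h,x)$. Combining $\cC_{\ell_{0-1}}(h,x)\leq\cC_{\sfL_{\rho_{+},\rho_{-}}}(h,x)$ with $\cC_{\sfL_{\rho_{+},\rho_{-}}}^{*}(\sH,x)\leq\cC_{\ell_{0-1}}^{*}(\sH,x)$ yields the desired pointwise bound, and integrating over $x$ completes the argument.

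I do not anticipate a genuine obstacle: the computation is elementary precisely because, on whichever side of the origin $h(x)$ lies, the ``wrong-class'' margin term of $\sfL_{\rho_{+},\rho_{-}}$ already attains its maximum value $1$ — matching the zero-one conditional risk on that side — while the ``right-class'' term only adds a nonnegative amount. The one step that warrants a moment of care is confirming that using two distinct confidence margins $\rho_{+}\neq\rho_{-}$ leaves the Bayes-optimal conditional surrogate risk equal to $\min(\eta,1-\eta)$, which is exactly what allows the bound to hold with coefficient $1$ in front of the surrogate excess risk rather than some $(\rho_{+},\rho_{-})$-dependent factor.
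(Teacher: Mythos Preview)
Your proof is correct and follows essentially the same approach as the paper: reduce to a pointwise comparison of conditional excess risks via the identity $\sR_{\ell}(h) - \sR_{\ell}^{*}(\sH) + \sM_{\ell}(\sH) = \E_{x}\bracket*{\cC_{\ell}(h,x) - \cC_{\ell}^{*}(\sH,x)}$, then close it by a case analysis on the sign of $h(x)$. The only organizational difference is that the paper assumes WLOG $\eta(x)\leq \tfrac12$ and computes the conditional excess risk of each loss explicitly before comparing, whereas you handle both signs of $\eta - \tfrac12$ symmetrically by separating the comparison into two simpler pieces, $\cC_{\ell_{0-1}}(h,x)\leq \cC_{\sfL_{\rho_{+},\rho_{-}}}(h,x)$ and $\cC_{\sfL_{\rho_{+},\rho_{-}}}^{*}(\sH,x)\leq \cC_{\ell_{0-1}}^{*}(\sH,x)$; this is a marginally cleaner packaging but not a different argument.
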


The proof is presented in Appendix~\ref{app:H-consistency-binary}. Note that our $\sH$-consistency bounds in Theorem~\ref{thm:H-consistency-binary} can be extended directly to the uniformly bounded hypothesis sets considered in Theorem~\ref{thm:rad-linear} below. In this case, the bounds would depend on the complexity of the hypothesis class, similar to the $\sH$-consistency bounds presented in \citep{awasthi2022h}.
The next section presents generalization bounds based on the empirical
class-imbalanced margin loss, along with the \emph{$(\rho_{+},
\rho_{-})$-class-sensitive Rademacher complexity} and its empirical
counterpart defined below. Given a sample $S = \paren*{x_1, \ldots,
  x_m}$, we define $I_+ = \curl*{i \in \curl*{1, \ldots, m} \mid y_i =
  +1}$ and $m_{+} = |I_+|$ as the number of positive
instances. Similarly, we define $I_{-} = \curl*{i \in \curl*{1,
    \ldots, m} \mid y_i = -1}$ and $m_{-} = |I_{-}|$ as the number of
negative instances.

\begin{definition}[$(\rho_{+}, \rho_{-})$--class-sensitive Rademacher complexity]
Let $\sG$ be a family of functions mapping from $\sZ$ to $[a, b]$ and
$S = \paren*{z_1, \ldots, z_m}$ a fixed sample of size $m$ with
elements in $\sZ$. Fix $\rho_{+} > 0$ and $\rho_{-} > 0$. Then, the
\emph{empirical $(\rho_{+}, \rho_{-})$-class-sensitive
Rademacher complexity of $\sG$} with respect to the sample $S$ is
defined as:
\[
\h \Rad_{S}^{\rho_{+}, \rho_{-}}(\sG)
= \frac{1}{m} \E_{\sigma} \bracket*{\sup_{g \in \sG}
  \curl*{ \sum_{i \in I_+} \frac{\sigma_i g(z_i)}{\rho_{+}}
    + \sum_{i \in I_{-}}  \frac{\sigma_i g(z_i)}{\rho_{-}} }},
\]
where $\sigma = \paren*{\sigma_1, \ldots, \sigma_m}^{\top}$, with
$\sigma_i$s independent uniform random variables taking values in
$\curl*{-1, +1}$.
For any integer $m \geq 1$, the \emph{$(\rho_{+},
\rho_{-})$-class-sensitive Rademacher complexity of $\sG$} is the
expectation of the empirical $(\rho_{+},
\rho_{-})$--class-sensitive Rademacher complexity over all samples of
size $m$ drawn according to $\sD$: $\Rad_{m}^{\rho_{+}, \rho_{-}}(\sG)
= \E_{S \sim \sD^m} \bracket*{\h \Rad_{S}^{\rho_{+}, \rho_{-}}(\sG)}$.
\end{definition}

\subsection{Margin-Based Guarantees}
\label{sec:margin-bound}

Next, we will prove a general margin-based generalization bound, which
will serve as the foundation for deriving new algorithms for
imbalanced binary classification.

Given a sample $S = \paren*{x_1, \ldots, x_m}$ and a hypothesis $h$,
the \emph{empirical class-imbalanced margin loss} is defined by $\h
\sR_{S}^{\rho_{+}, \rho_{-}}(h) = \frac{1}{m} \sum_{i = 1}^m
\sfL_{\rho_{+}, \rho_{-}}(h, x_i, y_i)$.  Note that the zero-one loss
function $\ell_{0-1}$ is upper-bounded by the
class-imbalanced margin loss function $\sfL_{\rho_{+}, \rho_{-}}$:
$\sR_{\ell_{0-1}}(h) \leq \sR_{\sfL_{\rho_{+}, \rho_{-}}}(h)$.

\begin{restatable}[Margin bound for imbalanced binary classification]{theorem}
  {MarginBoundBinary}
\label{thm:margin-bound-binary}
Let $\sH$ be a set of real-valued functions. Fix $\rho_{+} > 0$ and
$\rho_{-} > 0$, then, for any $\delta > 0$, with probability at least
$1 - \delta$, each of the following holds for all $h \in \sH$:
\begin{align*}
  \sR_{\ell_{0-1}}(h) &\leq \h \sR_{S}^{\rho_{+}, \rho_{-}}(h)
  + 2 \Rad_{m}^{\rho_{+}, \rho_{-}}(\sH) + \sqrt{\frac{\log \frac{1}{\delta}}{2m}}\\
  \sR_{\ell_{0-1}}(h) &\leq \h \sR_{S}^{\rho_{+}, \rho_{-}}(h)
  + 2 \h \Rad_{S}^{\rho_{+}, \rho_{-}}(\sH) + 3 \sqrt{\frac{\log \frac{2}{\delta}}{2m}}.
\end{align*}
\end{restatable}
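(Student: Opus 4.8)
The plan is to follow the classical Rademacher-complexity argument for margin bounds, adapted to the class-sensitive complexity defined above. First, recall from Lemma~\ref{lemma:margin-loss} and the definition of $\sfL_{\rho_{+}, \rho_{-}}$ that $\ell_{0-1}(h, x, y) \leq \sfL_{\rho_{+}, \rho_{-}}(h, x, y)$ pointwise, since $\Phi_{\rho}$ dominates $u \mapsto \1_{u \leq 0}$ for any $\rho > 0$ and the label indicators partition the two cases. Hence $\sR_{\ell_{0-1}}(h) \leq \sR_{\sfL_{\rho_{+}, \rho_{-}}}(h) = \E_S\bracket*{\h\sR_S^{\rho_+,\rho_-}(h)}$, and it suffices to bound $\sR_{\sfL_{\rho_{+}, \rho_{-}}}(h) - \h\sR_S^{\rho_+,\rho_-}(h)$ uniformly over $h \in \sH$.

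For this I would apply the standard one-sided uniform-convergence bound via McDiarmid's inequality: since each $\sfL_{\rho_{+}, \rho_{-}}(h, x_i, y_i) \in [0,1]$, changing one sample point changes $\sup_h(\sR_{\sfL} - \h\sR_S)$ by at most $1/m$, so with probability $\geq 1-\delta$,
\[
\sup_{h \in \sH}\paren*{\sR_{\sfL_{\rho_+,\rho_-}}(h) - \h\sR_S^{\rho_+,\rho_-}(h)} \leq 2\,\Rad_m\paren*{\sfL_{\rho_+,\rho_-}\circ\sH} + \sqrt{\tfrac{\log(1/\delta)}{2m}},
\]
where $\Rad_m(\sfL_{\rho_+,\rho_-}\circ\sH)$ is the ordinary Rademacher complexity of the loss-composed class. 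The second inequality follows by additionally invoking the standard concentration of $\h\Rad_S$ around $\Rad_m$ (another bounded-differences application costing $\sqrt{\log(2/\delta)/(2m)}$ and a union bound, yielding the factor $3$).

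The crux is then to show $\Rad_m(\sfL_{\rho_+,\rho_-}\circ\sH) \leq \Rad_m^{\rho_+,\rho_-}(\sH)$, i.e.\ that passing through the class-imbalanced margin loss contracts the complexity to exactly the $(\rho_+,\rho_-)$-weighted form. Fix a sample and split the Rademacher sum over $I_+$ and $I_-$. On $I_+$ the relevant scalar function is $u \mapsto \Phi_{\rho_+}(u)$ composed with $h(x_i) = y_i h(x_i)$ (since $y_i = +1$), which is $\tfrac{1}{\rho_+}$-Lipschitz; on $I_-$ it is $u \mapsto \Phi_{\rho_-}(-u)$ applied to $h(x_i)$, again $\tfrac{1}{\rho_-}$-Lipschitz. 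Applying Talagrand's contraction lemma separately to the two index blocks — which is legitimate because the sign symmetry of the Rademacher variables lets us absorb the per-coordinate Lipschitz constants $1/\rho_+$ and $1/\rho_-$ into the respective sums — gives exactly
\[
\Rad_m\paren*{\sfL_{\rho_+,\rho_-}\circ\sH} \leq \frac{1}{m}\E_{S,\sigma}\bracket*{\sup_{h\in\sH}\curl*{\sum_{i\in I_+}\frac{\sigma_i h(x_i)}{\rho_+} + \sum_{i\in I_-}\frac{\sigma_i h(x_i)}{\rho_-}}} = \Rad_m^{\rho_+,\rho_-}(\sH).
\]
The main obstacle is justifying this block-wise contraction cleanly: the standard contraction lemma is stated for a single Lipschitz constant applied uniformly, so I would either invoke a vector-contraction / per-coordinate version, or argue directly by a sequential (one-coordinate-at-a-time) symmetrization peeling argument, taking care that the two different constants $1/\rho_+,1/\rho_-$ are handled on their respective coordinates and that a constant offset in $\Phi_\rho$ (it is not anchored at $0$) is removed first using $\E[\sigma_i] = 0$. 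Combining the contraction bound with the two concentration steps yields both displayed inequalities.
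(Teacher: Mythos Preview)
Your proposal is correct and follows essentially the same route as the paper: upper-bound $\ell_{0-1}$ by $\sfL_{\rho_+,\rho_-}$, apply the standard Rademacher generalization bound (McDiarmid plus symmetrization, i.e.\ \citep[Theorem~3.3]{MohriRostamizadehTalwalkar2018}) to the loss-composed class, and then contract via Talagrand's lemma with per-coordinate Lipschitz constants $1/\rho_+$ on $I_+$ and $1/\rho_-$ on $I_-$. The paper handles your ``main obstacle'' by invoking \citep[Lemma~5.7]{MohriRostamizadehTalwalkar2018} directly---whose proof is already the one-coordinate-at-a-time peeling you describe---so your proposed resolution is exactly what is used.
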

The proof is presented in Appendix~\ref{app:margin-bound-binary}. The
generalization bounds in Theorem~\ref{thm:margin-bound-binary} suggest
a trade-off: increasing $\rho_{+}$ and $\rho_{-}$ reduces the
complexity term (second term) but increases the empirical class-imbalanced
margin loss $\h \sR_{S}^{\rho_{+}, \rho_{-}}(h)$ (first term) by
requiring higher confidence margins from the hypothesis
$h$. Therefore, if the empirical class-imbalanced margin loss of $h$ remains
small for relatively large values of $\rho_{+}$ and $\rho_{-}$, $h$
admits a particularly favorable guarantee on its generalization error.

For Theorem~\ref{thm:margin-bound-binary}, the margin parameters
$\rho_{+}$ and $\rho_{-}$ must be selected beforehand. But, the bounds
of the theorem can be generalized to hold uniformly for all $\rho_{+}
\in (0, 1]$ and $\rho_{-} \in (0, 1]$ at the cost of modest additional
  terms $\sqrt{\frac{\log \log_2 \frac{2}{\rho_{+}}}{m}}$ and
  $\sqrt{\frac{\log \log_2 \frac{2}{\rho_{-}}}{m}}$, as shown in
  Theorem~\ref{thm:margin-bound-binary-uniform} in
  Appendix~\ref{app:margin-bound-binary-uniform}.

\section{Algorithm for Binary Classification}
\label{sec:algorithms}

In this section, we derive algorithms for binary classification in
imbalanced settings, building on the theoretical analysis from the
previous section.

\textbf{Explicit guarantees.}
Let $S \subseteq \curl*{x \colon \norm*{x} \leq r}$ denote a sample of
size $m$. Define $r_{+} = \sup_{i \in I_+}
\norm*{x_i}$ and $r_{-} = \sup_{i \in I_{-}} \norm*{x_i}$. We assume
that the empirical class-sensitive Rademacher complexity $\h
\Rad_{S}^{\rho_{+}, \rho_{-}}(\sH)$ can be bounded as:
\begin{align*}
  \h \Rad_{S}^{\rho_{+}, \rho_{-}}(\sH)
  & \leq \frac{\Lambda_{\sH}}{m} \sqrt{\frac{m_{+} r_{+}^2}{\rho_{+}^2}
    + \frac{m_{-} r_{-}^2}{\rho_{-}^2}}
  \leq \frac{\Lambda_{\sH} r}{m} \sqrt{\frac{m_{+}}{\rho_{+}^2}
    + \frac{m_{-}}{\rho_{-}^2}}, \mspace{-6mu}
\end{align*}
where $\Lambda_{\sH}$ depends on the complexity of the hypothesis set
$\sH$. This bound holds for many commonly used hypothesis sets.  As an
example, for a family of neural networks, $\Lambda_\sH$ can be
expressed as a Frobenius norm
\citep{CortesGonzalvoKuznetsovMohriYang2017,NeyshaburTomiokaSrebro2015}
or spectral norm complexity with respect to reference weight matrices
\cite{BartlettFosterTelgarsky2017}.  More generally, for the analysis
that follows, we will assume that $\sH$ can be defined by $\sH =
\curl*{h \in \ov \sH \colon \norm{h} \leq \Lambda_\sH}$, for some
appropriate norm $\norm*{\, \cdot \,}$ on some space $\ov \sH$.  For
the class of linear hypotheses with bounded weight vector, $\sH =
\curl*{x \mapsto w \cdot x \colon \norm*{w} \leq \Lambda}$, we provide
the following explicit guarantee. The proof is presented in
Appendix~\ref{app:rad-linear}.

\begin{restatable}{theorem}{RadLinear}
\label{thm:rad-linear}
Let $S \subseteq \curl*{x \colon \norm*{x} \leq r}$ be a sample of
size $m$ and let $\sH = \curl*{x \mapsto w \cdot x \colon \norm*{w}
  \leq \Lambda}$. Let $r_{+} = \sup_{i \in I_+} \norm*{x_i}$ and
$r_{-} = \sup_{i \in I_{-}} \norm*{x_i}$. Then, the following bound
holds for all $h \in \sH$:
\begin{equation*}
  \h \Rad_{S}^{\rho_{+}, \rho_{-}}(\sH)
  \leq \frac{\Lambda}{m} \sqrt{\frac{m_{+} r_{+}^2}{\rho_{+}^2}
    + \frac{m_{-} r_{-}^2}{\rho_{-}^2}}
  \leq \frac{\Lambda r}{m} \sqrt{\frac{m_{+}}{\rho_{+}^2}
    + \frac{m_{-}}{\rho_{-}^2}}.
\end{equation*}
\end{restatable}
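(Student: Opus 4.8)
The plan is a direct computation following the standard template for bounding the Rademacher complexity of linear classes, adapted to the class-weighted sum appearing in the definition of $\h \Rad_{S}^{\rho_{+}, \rho_{-}}$.

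First, I would substitute $g(z_i) = w \cdot x_i$ into the definition and pull the weight vector outside the sums, writing the inner expression as $w \cdot v$ where $v = \sum_{i \in I_+} \tfrac{\sigma_i x_i}{\rho_{+}} + \sum_{i \in I_{-}} \tfrac{\sigma_i x_i}{\rho_{-}}$. Taking the supremum over $\norm*{w} \le \Lambda$, Cauchy--Schwarz gives $\sup_{\norm*{w}\le\Lambda} w\cdot v = \Lambda\norm*{v}$, so that $\h \Rad_{S}^{\rho_{+}, \rho_{-}}(\sH) = \frac{\Lambda}{m}\,\E_{\sigma}\!\bracket*{\norm*{v}}$. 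Next, by Jensen's inequality and concavity of the square root, $\E_{\sigma}\bracket*{\norm*{v}} \le \sqrt{\E_{\sigma}\bracket*{\norm*{v}^2}}$.

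Then I would expand $\norm*{v}^2 = \sum_{i,j}\sigma_i\sigma_j\,\tfrac{x_i\cdot x_j}{\rho_{(i)}\rho_{(j)}}$, where $\rho_{(i)}$ denotes $\rho_{+}$ if $i\in I_+$ and $\rho_{-}$ if $i\in I_{-}$. Since the $\sigma_i$ are independent, mean-zero, and $\sigma_i^2 = 1$, the cross terms vanish in expectation and only the diagonal survives: $\E_{\sigma}\bracket*{\norm*{v}^2} = \sum_{i\in I_+}\tfrac{\norm*{x_i}^2}{\rho_{+}^2} + \sum_{i\in I_{-}}\tfrac{\norm*{x_i}^2}{\rho_{-}^2}$. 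Using $\norm*{x_i}\le r_{+}$ for $i\in I_+$ and $\norm*{x_i}\le r_{-}$ for $i\in I_{-}$ bounds this by $\tfrac{m_{+}r_{+}^2}{\rho_{+}^2} + \tfrac{m_{-}r_{-}^2}{\rho_{-}^2}$, which yields the first inequality. The second inequality is immediate from $r_{+}\le r$ and $r_{-}\le r$.

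There is no genuine obstacle here; the argument is entirely routine. The only points requiring a little care are the direction of the Cauchy--Schwarz/Jensen chain (both go the right way) and the bookkeeping of the per-class weights $\rho_{(i)}$ so that the diagonal of the expanded quadratic form is grouped correctly into the $I_+$ and $I_-$ contributions.
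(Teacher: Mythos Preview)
Your proposal is correct and follows essentially the same route as the paper's proof: write the supremum as $w\cdot v$, apply Cauchy--Schwarz to get $\Lambda\norm*{v}$, use Jensen to pass to $\sqrt{\E_\sigma\norm*{v}^2}$, kill the cross terms via $\E[\sigma_i\sigma_j]=0$, and then bound each $\norm*{x_i}$ by $r_\pm$ and by $r$. There is no substantive difference between your argument and the one in the paper.
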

Combining the upper bound of Theorem~\ref{thm:rad-linear}
and Theorem~\ref{thm:margin-bound-binary} gives directly the following
general margin bound:
\begin{align*}
\mspace{-10mu} \sR_{\ell_{0-1}}\mspace{-3mu} (h) 
& \mspace{-3mu} \leq \mspace{-3mu} \h \sR_{S}^{\rho_{+}, \rho_{-}}\mspace{-3mu}  (h)
\mspace{-3mu} + \mspace{-3mu} \frac{2 \Lambda_\sH}m \sqrt{\mspace{-3mu} \frac{m_{+} r_{+}^2}{\rho_{+}^2} 
\mspace{-3mu} + \mspace{-3mu} \frac{m_{-} r_{-}^2}{\rho_{-}^2}}
\mspace{-3mu} + \mspace{-3mu} 3 \sqrt{\frac{\log \frac{2}{\delta}}{2m}}. \mspace{-10mu}
\end{align*}
As with Theorem~\ref{thm:margin-bound-binary}, this bound
can be generalized to hold uniformly for all $\rho_{+} \in
(0, 1]$ and $\rho_{-} \in (0, 1]$ at the cost of additional terms
    $\sqrt{\frac{\log \log_2 \frac{2}{\rho_{+}}}{m}}$ and
    $\sqrt{\frac{\log \log_2 \frac{2}{\rho_{-}}}{m}}$ by combining
    the bound on the class-sensitive Rademacher complexity and
    Theorem~\ref{thm:margin-bound-binary-uniform}.
The bound suggests that a small generalization error can be achieved
when the second term $\frac{\Lambda_\sH}m \sqrt{\frac{m_{+}
    r_{+}^2}{\rho_{+}^2} + \frac{m_{-} r_{-}^2}{\rho_{-}^2}} $ or
$\frac{\Lambda_\sH r}{m} \sqrt{\frac{m_{+}}{\rho_{+}^2} +
  \frac{m_{-}}{\rho_{-}^2}}$ is small while the empirical class-imbalanced
margin loss (first term) remains low.

Now, consider a margin-based loss function $(h, x, y) \mapsto \Psi(y h(x))$
defined using a non-increasing convex function $\Psi$ such that
$\Phi_{\rho}(u) \leq \Psi\left(\frac{u}{\rho}\right)$ for all $u \in
\Rset$. Examples of such $\Psi$ include: the hinge loss, $\Psi(u) =
\max(0, 1 - u)$, the logistic loss, $\Psi(u) = \log_2(1 + e^{-u})$,
and the exponential loss, $\Psi(u) = e^{-u}$.

Then, choosing $\Lambda_\sH = 1$, with probability at least $1 -
\delta$, the following holds for all $h \in \curl*{h \in \ov \sH
  \colon \norm{h} \leq 1}$, $\rho_{+} \in (0, r_{+}]$ and $\rho_{-}
  \in (0, r_{-}]$:
\begin{align*}
\sR_{\ell_{0-1}}(h)
& \leq \frac{1}{m} 
\bracket*{ \sum_{i \in I_+} \Psi \paren*{\frac{y_i h(x_i)}{\rho_{+}}}
  + \sum_{i \in I_-}\Psi \paren*{\frac{y_i h(x_i)}{\rho_{-}}}}\\
& \quad + \frac{4 r}{m} \sqrt{\frac{m_{+}}{\rho_{+}^2} + \frac{m_{-}}{\rho_{-}^2}}
+ O\paren*{\frac{1}{\sqrt{m}}},
\end{align*}
where the last term includes the $\log$-$\log$ terms and the
$\delta$-confidence term.

Since for any $\rho > 0$, $h / \rho$ admits the same generalization
error as $h$, with probability at least $1 - \delta$, the following
holds for all $h \in \curl*{h \in \ov \sH \colon \norm*{h} \leq
  \frac{1}{\rho_{+} + \rho_{-}}}$, $\rho_{+} \ignore{\in (0, r_{+}]}$ and
  $\rho_{-} \ignore{\in (0, r_{-}]}$:
\begin{align*}
& \sR_{\ell_{0-1}}(h) 
\leq \frac{1}{m} \bracket[\bigg]{
  \sum_{i \in I_+} \Psi\paren*{y_i h(x_i) \frac{\rho_{+}
+ \rho_{-}}{\rho_{+}}} \\ 
& + \sum_{i \in I_-} \Psi\paren*{y_i h(x_i) \frac{\rho_{+}
+ \rho_{-}}{\rho_{-}}}}
+ \frac{4 r}{m} \sqrt{\frac{m_{+}}{\rho_{+}^2}
  + \frac{m_{-}}{\rho_{-}^2}}  + O\paren*{\frac{1}{\sqrt{m}}}. \mspace{-12.5mu}
\end{align*}
\textbf{Algorithm.}
Now, since only the first term of the right-hand side depends on $h$,
the bound suggests selecting $h$,
with $\norm*{h}^2 \leq \paren*{\frac{1}{\rho_{+} + \rho_{-}}}^2$ as a solution of:
\begin{align*}
  \min_{h \in \ov \sH}
\frac{1}{m} \bracket[\bigg]{
  & \sum_{i \in I_+} \Psi\paren*{y_i h(x_i) \tfrac{\rho_{+}
+ \rho_{-}}{\rho_{+}}}
   + \sum_{i \in I_-} \Psi\paren*{y_i h(x_i) \tfrac{\rho_{+}
+ \rho_{-}}{\rho_{-}}}}.
\end{align*}
Introducing a Lagrange multiplier $\lambda \geq 0$ and a free variable
$\alpha = \frac{\rho_{+}}{\rho_{+} + \rho_{-}} > 0$, the optimization
problem can be written as
\begin{equation}
\label{eqn:binary}
  \min_{h \in \ov \sH} \lambda \norm*{h}^2  +
  \frac{1}{m} \bracket*{
  \sum_{i \in I_+} \Psi\paren*{\frac{h(x_i)}{\alpha}}
  + \sum_{i \in I_-} \Psi\paren*{\frac{- h(x_i)}{1 - \alpha}}},   
\end{equation}
where $\lambda$ and $\alpha$ can be selected via cross-validation.

This formulation provides a general algorithm for binary
classification in imbalanced settings, called \textbf{\IMMAX\
(\emph{Imbalanced Margin Maximization})}, supported by strong
theoretical guarantees derived in the previous section. This provides
a solution for optimizing the decision boundaries in imbalanced
settings based on confidence margins. In the specific case of linear
hypotheses (Appendix~\ref{app:linear}), choosing $\Psi$ as the
Hinge loss yields a strict generalization of the SVM algorithm which
can be used with positive definite kernels, or a strict generalization
of the logistic regression algorithm when $\Psi$ defines the logistic
loss.

Beyond linear models, this algorithm readily extends to neural
networks with various regularization terms and other complex
hypothesis sets.  This makes it a general solution for tackling
imbalanced binary classification problems.

\textbf{Separable case}. When the training sample is separable, we can
denote by $\rho_{\rm{geom}}$ the geometric margin, that is the
smallest distance of a training sample point to the decision boundary
measured in the Euclidean distance or another metric appropriate for
the feature space. As an example, for linear hypotheses,
$\rho_{\rm{geom}}$ corresponds to the familiar Euclidean distance to
the separating hyperplane.

The confidence margin parameters $\rho_{+}$ and $\rho_{-}$ can then be
chosen so that $\rho_{+} + \rho_{-} = 2\rho_{\rm{geom}}$, ensuring
that the empirical class-imbalanced margin loss term is zero.
Minimizing the right-hand side of the bound then yields the following
expressions for $\rho_+$ and $\rho_-$:
\begin{align*}
& \rho_{+} = \frac{2m^{\frac{1}{3}}_{+}r_{+}^{\frac{2}{3}}}{m^{\frac{1}{3}}_{+}r_{+}^{\frac{2}{3}} + m^{\frac{1}{3}}_{-}r_{-}^{\frac{2}{3}}} \rho_{\rm{geom}}
& \rho_{-} = \frac{2m^{\frac{1}{3}}_{-} r_{-}^{\frac{2}{3}}}{m^{\frac{1}{3}}_{+}r_{+}^{\frac{2}{3}} + m^{\frac{1}{3}}_{-}r_{-}^{\frac{2}{3}}} \rho_{\rm{geom}}.
\end{align*}
For $r_+ = r_-$, these expressions simplify to:
\begin{align}
\label{eq:optimal-separable}
  & \rho_{+} = \frac{2m^{\frac{1}{3}}_{+}}{m^{\frac{1}{3}}_{+} + m^{\frac{1}{3}}_{-}}
  \rho_{\rm{geom}}
  & \rho_{-} = \frac{2m^{\frac{1}{3}}_{-}}{m^{\frac{1}{3}}_{+} + m^{\frac{1}{3}}_{-}}
  \rho_{\rm{geom}}.
\end{align}
Note that  the optimal positive margin $\rho_+$ is larger than the negative
one $\rho_-$ when there are more positive samples than negative ones
($m_+ > m_-$). Thus, in the linear case, this suggests selecting a
hyperplane with a large positive margin in that case, see
Figure~\ref{fig:sep} for an illustration.

Finally, note that, while $\alpha = \frac{\rho_{+}}{\rho_{+} + \rho_{-}} >0$ in the optimization problem (\ref{eqn:binary}) can be freely searched over a range
of values in our general (non-separable case) algorithm, it can be
beneficial to focus the search around the optimal values identified in
the separable case.

\section{Extension to Multi-Class Classification}
\label{sec:multiclass}

In this section, we extend our results to multi-class classification,
with full details provided in Appendix~\ref{app:multiclass}. Below, we
present a concise overview.

We will adopt the same notation and definitions as previously
described, with some slight adjustments. In particular, we denote the
multi-class label space by $\sY = [c] \coloneqq \curl*{1, \ldots, c}$
and a hypothesis set of functions mapping from $\sX \times \sY$ to
$\Rset$ by $\sH$. For a hypothesis $h \in \sH$, the label $\hh(x)$
assigned to $x \in \sX$ is the one with the largest score, defined as
$\hh(x) = \argmax_{y \in \sY} h(x, y)$, using the highest index for
tie-breaking. For a labeled example $(x, y) \in \sX \times \sY$, the
\emph{margin} $\rho_h(x, y)$ of a hypothesis $h \in \sH$ is given by
$\rho_h(x, y) = h(x, y) - \max_{y' \neq y} h(x, y')$, which is the
difference between the score assigned to $(x, y)$ and that of the
next-highest scoring label. We define the multi-class zero-one loss
function as $\ell^{\rm{multi}}_{0-1} \coloneqq \1_{\hh(x) \neq
  y}$. This is the target loss of interest in multi-class
classification.

We define the \emph{multi-class class-imbalanced
margin loss function} as follows:
\begin{definition}[Multi-class class-imbalanced margin loss]
For any $\brho = [\rho_k]_{k \in [c]}$, the \emph{multi-class
class-imbalanced $\brho$-margin loss} is the function $\sfL_{\brho} \colon
\sH_{\mathrm{all}} \times \sX \times \sY \to \Rset$, defined by:
\begin{equation}
  \sfL_{\brho}(h, x, y) = \sum_{k = 1}^c
  \Phi_{\rho_k}\paren*{\rho_h(x, y)} \1_{y  = k}.
\end{equation}
\end{definition}
The main margin bounds in this section are expressed in terms of this
loss function. The parameters $\rho_k > 0$, for $k \in [c]$, represent
the confidence margins imposed by a hypothesis $h$ for instances
labeled $k$. As in the binary case, we establish an equivalent
expression for this class-imbalanced margin loss function.
\begin{restatable}{lemma}{MarginLossMulti}
\label{lemma:margin-loss-multi}
The multi-class class-imbalanced $\brho$-margin loss can be equivalently
expressed as follows:
\begin{equation*}
  \sfL_{\brho}(h, x, y) = \sum_{k = 1}^c
  \Phi_{\rho_k}\paren*{\rho_h(x, y)} \1_{\hh(x)  = k}.
\end{equation*}
\end{restatable}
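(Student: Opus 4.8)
The plan is to establish the pointwise identity
$\sum_{k=1}^c \Phi_{\rho_k}\paren*{\rho_h(x,y)} \1_{y=k}
= \sum_{k=1}^c \Phi_{\rho_k}\paren*{\rho_h(x,y)} \1_{\hh(x)=k}$
for every $h \in \sH_{\mathrm{all}}$ and every $(x,y) \in \sX \times \sY$; since both sides are single-term sums, this amounts to showing $\Phi_{\rho_y}\paren*{\rho_h(x,y)} = \Phi_{\rho_{\hh(x)}}\paren*{\rho_h(x,y)}$. This is the exact multi-class analogue of the binary Lemma~\ref{lemma:margin-loss}, with the test ``$h(x) \geq 0$ versus $h(x) < 0$'' replaced by ``$\hh(x) = y$ versus $\hh(x) \neq y$'' and $y h(x)$ replaced by the multi-class margin $\rho_h(x,y)$, and the result is obtained by a short case analysis rather than any limiting or expectation argument.

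First I would record the elementary observation that for any $\rho > 0$ and any $u \leq 0$, $\Phi_\rho(u) = \min\paren*{1, \max\paren*{0, 1 - \tfrac{u}{\rho}}} = 1$, because $1 - \tfrac{u}{\rho} \geq 1$; in particular the value $\Phi_\rho(u)$ is independent of $\rho$ once $u \leq 0$. Next I would split into two cases. If $\hh(x) = y$, the two indicator sums select the same summand and the identity is immediate. If $\hh(x) \neq y$, then since $\hh(x) = \argmax_{y'} h(x, y')$ the maximizer lies in $\curl*{y' : y' \neq y}$, so $\max_{y' \neq y} h(x, y') \geq h(x, \hh(x)) \geq h(x, y)$, which gives $\rho_h(x,y) = h(x,y) - \max_{y' \neq y} h(x,y') \leq 0$; applying the elementary observation to both $\rho_y > 0$ and $\rho_{\hh(x)} > 0$ yields $\Phi_{\rho_y}\paren*{\rho_h(x,y)} = 1 = \Phi_{\rho_{\hh(x)}}\paren*{\rho_h(x,y)}$. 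Combining the two cases establishes the identity for all $(x,y)$, and hence the claimed equivalent expression for $\sfL_{\brho}$.

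There is no substantive obstacle here; the only point needing a little care is the tie-breaking convention in the definition of $\hh(x)$ (highest index among maximizers). One must check that $\hh(x) \neq y$ still forces $\rho_h(x,y) \leq 0$ even when several labels attain the maximal score — and it does, since $h(x, \hh(x)) \geq h(x, y)$ holds regardless of how ties are broken, so the inequality chain above goes through unchanged. Thus the proof reduces to the two-line case split plus the trivial fact that $\Phi_\rho$ saturates at $1$ on the nonpositive axis.
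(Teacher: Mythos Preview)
Your proof is correct and essentially identical to the paper's: both reduce to the observation that $\Phi_{\rho}(u) = 1$ for all $\rho > 0$ whenever $u \leq 0$, together with the fact that $\hh(x) \neq y$ forces $\rho_h(x,y) \leq 0$. The only cosmetic difference is that the paper splits on the sign of $\rho_h(x,y)$ while you split on whether $\hh(x) = y$; these are logically equivalent organizations of the same two-line argument, and your extra remark on tie-breaking is a nice touch the paper omits.
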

The proof is included in
Appendix~\ref{app:margin-loss-multi}.
We also prove that our multi-class class-imbalanced $\brho$-margin
loss is $\sH$-consistent for any \emph{complete} hypothesis set $\sH$. We say a hypothesis set is complete when
the scoring values spanned by $\sH$ for each instance cover $\Rset$:
for all $(x, y) \in \sX \times \sY$, $\curl*{h(x, y) \colon h \in \sH}
= \Rset$. This covers all commonly used
function classes in practice, such as linear classifiers and neural
network architectures.

\begin{restatable}[$\sH$-Consistency bound for multi-class class-imbalanced
    margin loss]{theorem}{HConsistencyMulti}
\label{thm:H-consistency-multi}
Let $\sH$ be a complete hypothesis set. Then, for all $h \in \sH$ and
$\brho = [\rho_k]_{k \in [c]} > \mathbf{0}$, the following bound
holds: \ifdim\columnwidth=\textwidth {
\begin{equation}
  \sR_{\ell^{\rm{multi}}_{0-1}}(h) - \sR^*_{\ell^{\rm{multi}}_{0-1}}(\sH)
  + \sM_{\ell^{\rm{multi}}_{0-1}}(\sH)
  \leq \sR_{\sfL_{\brho}}(h) - \sR^*_{\sfL_{\brho}}(\sH) + \sM_{\sfL_{\brho}}(\sH).
\end{equation}
}\else
{
\begin{multline*}
\sR_{\ell^{\rm{multi}}_{0-1}}(h) - \sR^*_{\ell^{\rm{multi}}_{0-1}}(\sH)
  + \sM_{\ell^{\rm{multi}}_{0-1}}(\sH)\\
\leq \sR_{\sfL_{\brho}}(h) - \sR^*_{\sfL_{\brho}}(\sH) + \sM_{\sfL_{\brho}}(\sH).
\end{multline*}
}\fi
\end{restatable}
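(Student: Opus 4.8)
The plan is to follow the standard conditional‑risk (calibration) route for $\sH$‑consistency bounds, as in \citet{awasthi2022multi}: reduce the claimed inequality between excess risks augmented by minimizability gaps to a single pointwise inequality between \emph{calibration gaps}, and then integrate over $x$. Fix $x \in \sX$ and write $p_y = \P_{\sD}[\, Y = y \mid X = x \,]$ for $y \in [c]$. For a loss $\ell$, let $\cC_\ell(h,x) = \E_y[\ell(h,x,y)\mid x]$ be the conditional risk, $\cC_\ell^*(\sH,x) = \inf_{h \in \sH}\cC_\ell(h,x)$ the best‑in‑class conditional risk, and $\Delta\cC_\ell(h,x) = \cC_\ell(h,x) - \cC_\ell^*(\sH,x)$ the calibration gap. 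Unwinding the definitions of $\sR_\ell$, $\sR_\ell^*(\sH)$ and the minimizability gap $\sM_\ell(\sH)$ yields the exact identity $\sR_\ell(h) - \sR_\ell^*(\sH) + \sM_\ell(\sH) = \E_x[\Delta\cC_\ell(h,x)]$. Applying this to $\ell = \ell^{\rm{multi}}_{0-1}$ and to $\ell = \sfL_{\brho}$, it therefore suffices to prove, for every $h \in \sH$ and every $x$, the pointwise bound $\Delta\cC_{\ell^{\rm{multi}}_{0-1}}(h,x) \le \Delta\cC_{\sfL_{\brho}}(h,x)$; taking expectations then gives the theorem. (Here the comparison function is the identity, so the concavity/Jensen step is trivial.)

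For the zero‑one loss, $\cC_{\ell^{\rm{multi}}_{0-1}}(h,x) = 1 - p_{\hh(x)}$ and, using completeness of $\sH$, $\cC_{\ell^{\rm{multi}}_{0-1}}^*(\sH,x) = 1 - \max_{y} p_y$, so $\Delta\cC_{\ell^{\rm{multi}}_{0-1}}(h,x) = \max_y p_y - p_{\hh(x)}$. For the surrogate I would start from the equivalent form of Lemma~\ref{lemma:margin-loss-multi}, $\sfL_{\brho}(h,x,y) = \Phi_{\rho_{\hh(x)}}(\rho_h(x,y))$. If $y \neq \hh(x)$, then $\hh(x)$ is one of the competitors in the max defining $\rho_h(x,y)$, so $\rho_h(x,y) = h(x,y) - \max_{y'\neq y} h(x,y') \le h(x,y) - h(x,\hh(x)) \le 0$, hence $\Phi_{\rho_{\hh(x)}}(\rho_h(x,y)) = 1$; if $y = \hh(x)$, then $\rho_h(x,\hh(x)) \ge 0$ and $\Phi_{\rho_{\hh(x)}}(\rho_h(x,\hh(x))) \in [0,1]$. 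Summing against $p$,
\[
\cC_{\sfL_{\brho}}(h,x) = p_{\hh(x)}\,\Phi_{\rho_{\hh(x)}}\!\big(\rho_h(x,\hh(x))\big) + \big(1 - p_{\hh(x)}\big).
\]
Since for any label $k$ completeness lets us pick a hypothesis predicting $k$ at $x$ with arbitrarily large margin there, driving $\Phi_{\rho_k}$ to $0$, we get $\cC_{\sfL_{\brho}}^*(\sH,x) = \min_k (1 - p_k) = 1 - \max_y p_y$, exactly the zero‑one optimum. Therefore $\Delta\cC_{\sfL_{\brho}}(h,x) = p_{\hh(x)}\Phi_{\rho_{\hh(x)}}(\rho_h(x,\hh(x))) + \big(\max_y p_y - p_{\hh(x)}\big) \ge \max_y p_y - p_{\hh(x)} = \Delta\cC_{\ell^{\rm{multi}}_{0-1}}(h,x)$, the required pointwise bound.

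The two conditional‑risk computations and the inequality $\Phi_{\rho} \ge 0$ are routine; the step that needs genuine care is the evaluation of the best‑in‑class conditional surrogate risk $\cC_{\sfL_{\brho}}^*(\sH,x)$ under the completeness hypothesis — one must argue that $\sH$ is rich enough to simultaneously realize any prescribed predicted label at $x$ together with an unboundedly large margin there, so that $\cC_{\sfL_{\brho}}^*(\sH,x)$ coincides with $\cC_{\ell^{\rm{multi}}_{0-1}}^*(\sH,x)$. This coincidence of conditional optima is exactly what forces the comparison function to be the identity and makes the bound tight (unit coefficient, no additive slack beyond the minimizability gaps). The tie‑breaking rule enters only through the definition of $\hh(x)$ and is applied uniformly to both losses, so it causes no difficulty; the whole argument parallels the binary proof of Theorem~\ref{thm:H-consistency-binary}, with $\hh(x)$ playing the role of $\sign(h(x))$.
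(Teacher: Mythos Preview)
Your proposal is correct and follows essentially the same route as the paper's proof: compute the conditional risk and best-in-class conditional risk for both $\ell^{\rm{multi}}_{0-1}$ and $\sfL_{\brho}$, observe that the two optima coincide at $1-\max_y p_y$ by completeness, and then compare the calibration gaps pointwise using $\Phi_\rho\ge 0$ before taking expectations. The only cosmetic difference is that you invoke Lemma~\ref{lemma:margin-loss-multi} to write $\sfL_{\brho}(h,x,y)=\Phi_{\rho_{\hh(x)}}(\rho_h(x,y))$, whereas the paper works directly from the $y$-indexed definition $\Phi_{\rho_y}(\rho_h(x,y))$; both collapse to the identical expression $1-p_{\hh(x)}+p_{\hh(x)}\Phi_{\rho_{\hh(x)}}(\rho_h(x,\hh(x)))$ once you note $\rho_h(x,y)\le 0$ for $y\neq\hh(x)$.
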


The proof is included in Appendix~\ref{app:H-consistency-multi}.
Our generalization bounds are expressed in terms of the following
notions of $\brho$-class-sensitive Rademacher complexity.

\begin{definition}[$\brho$-class-sensitive Rademacher complexity]
Let $\sH$ be a family of functions mapping from $\sX \times \sY$ to
$\Rset$ and $S = \paren*{(x_1, y_1) \ldots, (x_m, y_m)}$ a fixed
sample of size $m$ with elements in $\sX \times \sY$. Fix $\brho =
[\rho_k]_{k \in [c]} > \mathbf{0}$. Then, the \emph{empirical
$\brho$-class-sensitive Rademacher complexity of $\sH$} with respect to
the sample $S$ is defined as:
\begin{equation}
  \h \Rad_{S}^{\brho}(\sH)
  = \frac{1}{m} \E_{\e}\bracket*{\sup_{h \in \sH}
    \curl*{ \sum_{k = 1}^c \sum_{i \in I_k}
      \sum_{y \in \sY} \e_{iy} \frac{h(x_i, y)}{\rho_{k}} }},
\end{equation}
where $\e = \paren*{\e_{i y}}_{i, y}$ with $\e_{iy}$s being
independent variables uniformly distributed over $\curl*{-1, +1}$.
For any integer $m \geq 1$, the \emph{$\brho$-class-sensitive
Rademacher complexity of $\sH$} is the expectation of the empirical
$\brho$-class-sensitive Rademacher complexity over all samples
of size $m$ drawn according to $\sD$: $\Rad_{m}^{\brho}(\sH) = \E_{S
  \sim \sD^m} \bracket*{\h \Rad_{S}^{\brho}(\sH)}$.
\end{definition}

\textbf{Margin bound.} We establish a general multi-class margin-based
generalization bound in terms of the empirical multi-class
class-imbalanced $\brho$-margin loss and the empirical
$\brho$-class-sensitive Rademacher complexity
(Theorem~\ref{thm:margin-bound-multi}). The bound takes the following
form:
\[
\sR_{\ell^{\rm{multi}}_{0-1}}(h)
\leq \h \sR_{S}^{\brho}(h)
  + 4 \sqrt{2c}\, \Rad_{m}^{\brho}(\sH) + O(1/\sqrt{m}).
\]
This serves as the foundation for deriving new algorithms for
imbalanced multi-class classification.

\textbf{Explicit guarantees.} Let $\Phi$ be a feature mapping from
$\sX \times \sY$ to $\Rset^{d}$. Let $S \subseteq \curl*{(x, y) \colon
  \norm*{\Phi(x, y)} \leq r}$ denote a sample of size $m$, for some
appropriate norm $\norm*{\, \cdot \,}$ on $\Rset^d$. Define $r_{k} =
\sup_{i \in I_k, y \in \sY} \norm*{\Phi(x_i, y)}$, for any $k \in
    [c]$. As in the binary case, we assume that the empirical
    class-sensitive Rademacher complexity $\h \Rad_{S}^{\brho}(\sH)$
    can be bounded as:
\begin{equation*}
  \h \Rad_{S}^{\brho}(\sH)
  \leq \frac{\Lambda_{\sH} \sqrt{c}}{m} \sqrt{\sum_{k = 1}^c \frac{m_k r_{k}^2}{\rho_k^2}}
  \leq \frac{\Lambda_{\sH} r \sqrt{c}}{m} \sqrt{\sum_{k = 1}^c \frac{m_k}{\rho_k^2}},
\end{equation*}
where $\Lambda_{\sH}$ depends on the complexity of the hypothesis set
$\sH$. This bound holds for many commonly used hypothesis sets.  For a
family of neural networks, $\Lambda_\sH$ can be expressed as a
Frobenius norm
\citep{CortesGonzalvoKuznetsovMohriYang2017,NeyshaburTomiokaSrebro2015}
or spectral norm complexity with respect to reference weight matrices
\cite{BartlettFosterTelgarsky2017}. Additionally,
Theorems~\ref{thm:rad-kernel-1} and \ref{thm:rad-kernel-2} in
Appendix~\ref{app:linear-multi} address kernel-based hypotheses. More
generally, for the analysis that follows, we will assume that $\sH$
can be defined by $\sH = \curl*{h \in \ov \sH \colon \norm{h} \leq
  \Lambda_\sH}$, for some appropriate norm $\norm*{\, \cdot \,}$ on
some space $\ov \sH$.  Combining such an upper bound and
Theorem~\ref{thm:margin-bound-multi} or
Theorem~\ref{thm:margin-bound-multi-uniform}, gives directly the
following general margin bound:
\begin{align*}
  \sR_{\ell^{\rm{multi}}_{0-1}}(h) &\leq \h \sR_{S}^{\brho}(h)
  + \frac{4 \sqrt{2} \Lambda_{\sH} r  c }{m}
  \sqrt{\sum_{k = 1}^c \frac{m_k}{\rho_k^2}} + O\paren*{\frac{1}{\sqrt{m}}},
\end{align*}
where the last term includes the $\log$-$\log$ terms and the
$\delta$-confidence term. Let $\Psi$ be a non-increasing convex
function such that $\Phi_{\rho}(u) \leq
\Psi\left(\frac{u}{\rho}\right)$ for all $u \in \Rset$. Then, since
$\Phi_\rho$ is non-increasing, for any $(x, k)$, we have:
$\Phi_{\rho}(\rho_h(x, k))
 = \max_{j \neq k} \Phi_{\rho}(h(x, k) - h(x, j)).$

\textbf{Algorithm.} This suggests a regularization-based algorithm of
the following form:
\begin{equation}
\min_{h \in \ov \sH} \lambda \norm*{h}^2
+ \frac{1}{m} \bracket*{\sum_{k = 1}^c
  \sum_{i \in I_k} \max_{j \neq k}
  \Psi \paren*{\tfrac{h(x, k) - h(x, j)}{\rho_k}}},
\end{equation}
where $\lambda$ and $\rho_k$s are chosen via cross-validation.  In
particular, choosing $\Psi$ to be the logistic loss and upper-bounding
the maximum by a sum yields the following form for our
\IMMAX\ (\emph{Imbalanced Margin Maximization}) algorithm:
\begin{equation}
\label{eq:alg-gen}
\min_{h \in \ov \sH} \lambda \norm*{h}^2
+ \frac{1}{m} \sum_{k = 1}^c
\sum_{i \in I_k} \mspace{-2mu} \log \bracket*{\sum_{j = 1}^c
  \exp\paren*{\tfrac{h(x_i, j) - h(x_i, k)}{\rho_{k}}}},
\end{equation}
where $\lambda$ and $\rho_k$s are chosen via cross-validation.
Let $\rho = \sum_{k = 1}^c \rho_k$ and $\ov r = \bracket*{\sum_{k =
    1}^c m_k^{\frac{1}{3}} r_{k, 2}^{\frac{2}{3}}}^{\frac{3}{2}}$, where as defined previously, $r_{k} =
\sup_{i \in I_k, y \in \sY} \norm*{\Phi(x_i, y)}$, for any $k \in
    [c]$.
Using Lemma~\ref{lemma:D3} (Appendix~\ref{app:lemma}), the term
under the square root in the second term of the generalization bound
can be reformulated in terms of the R\'enyi divergence of order 3 as:
$\sum_{k = 1}^c \frac{m_k r_{k, 2}^2}{\rho_k^2} = \frac{\ov
  r^2}{\rho^2} e^{2 \sfD_3\paren*{\sfr \, \| \, \frac{\brho}{\rho}
}}$, where $\sfr = \bracket[\bigg]{ \frac{m_k^{\frac{1}{3}} r_{k,
      2}^{\frac{2}{3}}} {\ov r^{\frac{2}{3}}}}_{k}$. 
      Thus, while
$\rho_k$s can be freely searched over a range of values in our general
algorithm (\ref{eq:alg-gen}), for experimental efficiency it may be beneficial to focus the search for the vector
$[\rho_k/\rho]_k$ near $\sfr$.
When the number of classes $c$ is very large, the search space can also be
significantly reduced by assigning identical $\rho_k$ values to
underrepresented classes while reserving distinct $\rho_k$ values for
the most frequently occurring classes.

\ignore{
When the number of classes $c$
is very large, the search space can be further reduced by constraining
the $\rho_k$ values for underrepresented classes to be identical and
allowing distinct $\rho_k$ values only for the most frequently
occurring classes.
}

\section{Formal Analysis of Some Core Methods}
\label{sec:existing-methods}

This section analyzes two popular methods presented in the
literature for tackling imbalanced data.

{\bf Resampling or cost-sensitive loss minimization.} A common approach for handling imbalanced data in practice is to
assign distinct costs to positive and negative samples. This
technique, implemented either explicitly or through resampling, is
widely used in empirical studies
\citep{chawla2002smote,he2009learning,he2013imbalanced,huang2016learning,
  buda2018systematic,cui2019class}. The associated target loss
$\sfL_{c_{+}, c_{-}}(h, x, y)$ can be expressed as follows, for any
$c_{+} > 0$, $c_{-} > 0$ and $(h, x, y) \in \sH_{\mathrm{all}} \times
\sX \times \sY$:
\begin{equation*}
c_{+} \ell_{0-1}(h, x, y) \1_{y  = +1} + c_{-} \ell_{0-1}(h, x, y) \1_{y  = -1}.
\end{equation*}
The following negative result, see also Appendix~\ref{app:negative}, shows that this loss function does not benefit
from a consistency, a motivating factor for our study of the class-imbalanced margin loss, Section~\ref{sec:binary}, with 
strong consistency guarantees.
\begin{restatable}[Negative results for resampling and cost-sensitive
    methods]{theorem}{Negative}
\label{thm:negative}
If $c_{+} \neq c_{-}$, then $\sfL_{c_{+}, c_{-}}$ is not
Bayes-consistent with respect to $\ell_{0-1}$.
\end{restatable}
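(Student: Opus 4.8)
The plan is to show that the pointwise (Bayes) minimizer of $\sfL_{c_{+}, c_{-}}$ implements a shifted decision threshold on the conditional probability $\eta(x) = \P(y = +1 \mid x)$, and then to exhibit a single elementary distribution on which this shift forces a predictor to be optimal for $\sfL_{c_{+}, c_{-}}$ yet strictly suboptimal for $\ell_{0-1}$, which directly contradicts the definition of Bayes-consistency recalled in \Cref{sec:preliminaries}.

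First I would compute the conditional $\sfL_{c_{+}, c_{-}}$-risk. Using $\ell_{0-1}(h, x, +1) = \1_{h(x) < 0}$ and $\ell_{0-1}(h, x, -1) = \1_{h(x) \geq 0}$, one gets, writing $\eta = \eta(x)$, that $\E_y\bracket*{\sfL_{c_{+}, c_{-}}(h, x, y) \mid x} = c_{+}\, \eta\, \1_{h(x) < 0} + c_{-}\,(1 - \eta)\, \1_{h(x) \geq 0}$. Hence the conditional risk is $c_{-}(1 - \eta)$ when $h$ predicts $+1$ at $x$ and $c_{+}\eta$ when it predicts $-1$, and the pointwise minimum is attained by predicting $+1$ exactly when $\eta > t^\star \coloneqq \tfrac{c_{-}}{c_{+} + c_{-}}$. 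Since $c_{+} \neq c_{-}$ we have $t^\star \neq \tfrac12$; assume without loss of generality $c_{+} > c_{-}$ (otherwise exchange the two labels, which merely swaps $c_{+}$ and $c_{-}$ and leaves the argument unchanged), so that $t^\star < \tfrac12$ and the interval $\paren*{t^\star, \tfrac12}$ is nonempty.

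Next I would fix any $\eta_0 \in \paren*{t^\star, \tfrac12}$ and let $\sD$ be the distribution whose marginal on $\sX$ is a point mass at a single $x_0$ (or, if a non-atomic marginal is preferred, the uniform distribution on $[0,1]$), with $\eta(x) \equiv \eta_0$. On $\sD$ we have $\sR^*_{\sfL_{c_{+}, c_{-}}}(\sH_{\mathrm{all}}) = c_{-}(1 - \eta_0)$, attained by the constant predictor $h \equiv 1$ that predicts $+1$ everywhere (here the strict inequality $\eta_0 > t^\star$ is what makes predicting $+1$ genuinely optimal), whereas $\sR^*_{\ell_{0-1}}(\sH_{\mathrm{all}}) = \min(\eta_0, 1 - \eta_0) = \eta_0$ since $\eta_0 < \tfrac12$. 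Taking the constant sequence $h_n \equiv 1$ for all $n$, we get $\sR_{\sfL_{c_{+}, c_{-}}}(h_n) = c_{-}(1 - \eta_0) = \sR^*_{\sfL_{c_{+}, c_{-}}}(\sH_{\mathrm{all}})$ for every $n$, so the antecedent of the Bayes-consistency implication holds; but $\sR_{\ell_{0-1}}(h_n) = \P(y = -1) = 1 - \eta_0 > \tfrac12 > \eta_0 = \sR^*_{\ell_{0-1}}(\sH_{\mathrm{all}})$, so the consequent fails. Thus the implication defining Bayes-consistency is violated, which is the claim.

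There is no real obstacle here: the only points needing care are verifying that the pointwise minimizer of the conditional $\sfL_{c_{+}, c_{-}}$-risk is exactly the shifted-threshold rule with a \emph{strict} inequality (so that $h \equiv 1$ is optimal at $x_0$), and observing that $\paren*{t^\star, \tfrac12}$ is nonempty precisely because $c_{+} \neq c_{-}$. The case $c_{+} < c_{-}$ follows by relabeling, and since the construction works verbatim with the uniform marginal on $[0,1]$ as well, the counterexample is robust to any reasonable regularity assumption one might wish to impose on $\sD$.
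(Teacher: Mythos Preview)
Your proof is correct and follows essentially the same approach as the paper: both construct a point-mass distribution with $\eta \in \paren*{\tfrac{c_-}{c_+ + c_-}, \tfrac12}$ (assuming $c_+ > c_-$), where the cost-sensitive Bayes predictor outputs $+1$ while the zero-one Bayes predictor outputs $-1$. Your presentation is slightly more explicit in framing the counterexample as a constant sequence $h_n \equiv 1$ to match the sequential definition of Bayes-consistency verbatim, whereas the paper simply observes that the two Bayes-optimal predictors differ.
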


{\bf Algorithms of \citep{cao2019learning}.} The theoretical analysis of \citet{cao2019learning} is limited to the
special case of binary classification with linear hypotheses in the
separable case. They propose an algorithm based on distinct positive
and negative \emph{geometric margins}, justified by their analysis. (Note that our  analysis is grounded in the more
general notion of \emph{confidence margins} and applies to both
separable and non-separable cases, and to general hypothesis
sets.)

\ignore{and yields inferior empirical performance on the standard zero-one
misclassification loss compared to our theory-based
algorithm, even in the separable binary classification case they
consider. }

\begin{figure}[t]
\vskip -0.05in
  \centering
  \includegraphics[scale=0.3]{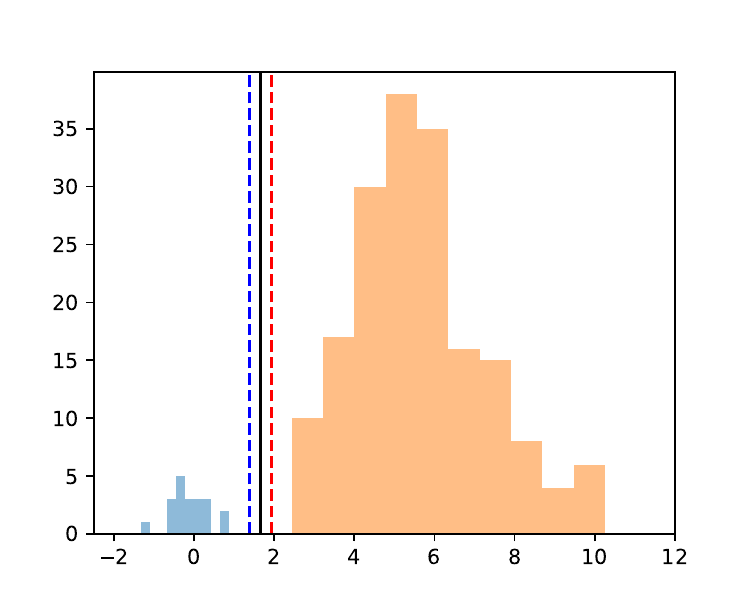}
  \hskip .02in
  \includegraphics[scale=0.3]{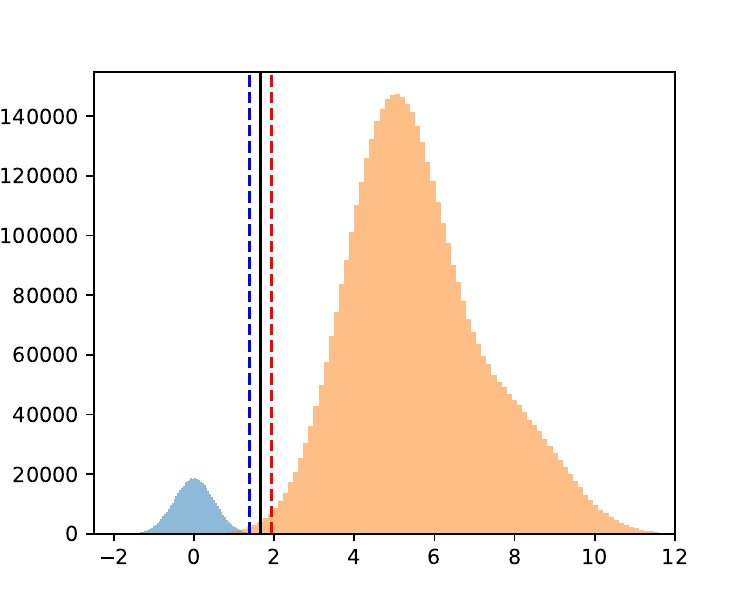}
 \vskip -0.20in
 \caption{Solutions in the separable case. Left:
    Empirical data with negative (blue) and positive (orange)
    points. The black line is the SVM solution, the red dashed line is
    \citet{cao2019learning}'s solution, and the blue dashed line is
    ours. Right: Full data distribution showing our solution achieves the 
    lowest generalization error.}
  \label{fig:sep}
\end{figure}
Their analysis contradicts the recommendations of our theory. Indeed, it is instructive to compare our margin values in the
separable case with those derived from the analysis of
\citet{cao2019learning}, in the special case they consider. The margin
values proposed in their work are:
\begin{align*}
  & \rho_{+} = \frac{2m^{\frac{1}{4}}_{-}}{m^{\frac{1}{4}}_{+} + m^{\frac{1}{4}}_{-}}
  \rho_{\rm{geom}},
  & \rho_{-} = \frac{2m^{\frac{1}{4}}_{+}}{m^{\frac{1}{4}}_{+} + m^{\frac{1}{4}}_{-}}
  \rho_{\rm{geom}}.
\end{align*}
Thus, disregarding the suboptimal exponent of $\frac{1}{4}$ compared
to $\frac{1}{3}$, which results from a less precise technical
analysis, the margin values recommended in their work directly
contradict those suggested by our analysis, see Eqn.~(\ref{eq:optimal-separable}). Specifically, their
analysis advocates for a smaller positive margin when $m_+ > m_-$,
whereas our theoretical analysis prescribes the opposite.  This
discrepancy stems from the analysis in \citep{cao2019learning}, which
focuses on the \emph{balanced loss}, which equalizes the impact of each class by weighting errors inversely to class frequency, and deviates fundamentally from the
standard zero-one loss we consider. 
Figure~\ref{fig:sep} illustrates these contrasting solutions in
a specific case of separable data. On the standard zero-one loss, our approach obtains a lower error.

Although their analysis is restricted to the linearly separable binary
case, the authors extend their work to the non-separable multi-class
setting by introducing a loss function (\LDAM) and algorithm\ignore{, without
any theoretical guarantees}.
Their loss function is an instance of the family of logistic loss
modifications, with an additive class label-dependent parameter
$\Delta_k = C/m_k^{1/4}$ inspired by their analysis in the
separable case, where $k$ denotes the label and $C$ a hyperparameter.
In the next section, we will compare our proposed algorithm with this technique as well as a number of other baselines.

\ignore{
The algorithm in \citep{cao2019learning} can be described
as follows:
\begin{equation*}
  \min_{w \in \Rset^d} \lambda \norm*{w}^2
  + \frac{1}{m} \sum_{k = 1}^c \sum_{i \in I_k}
  \log \bracket*{1 + \sum_{k' \neq k} e^{h(x_i, k') - h(x_i, k) + \Delta_k}},
\end{equation*}
where $\Delta_k = \frac{C}{m_k^{1/4}}$ for $k \in \curl*{1, \ldots,
  c}$, with $m_k$ being the number of training instances labeled as
$k$, and $C$ being a hyperparameter that can be selected via
cross-validation.
}

\begin{table}[t]
\caption{Accuracy of ResNet-34 on \emph{long-tailed} imbalanced CIFAR-10, CIFAR-100 and
  Tiny ImageNet; Means $\!\pm\!$ standard deviations over five runs for
  \IMMAX\ and a number of baseline techniques.}
   \ignore{
   \caption{Accuracy of ResNet-34 on long-tailed imbalanced CIFAR-10, CIFAR-100 and
  Tiny ImageNet; Mean $\pm$ standard deviation over five runs for the standard cross-entropy (\CE) loss, \FOCAL\ loss
  \citep{ross2017focal}, Re-Weighting (\RW), \LDAM\ loss \citep{cao2019learning}, Class-Balanced (\CB) loss \citep{cui2019class}, Balanced Softmax (\BS) loss \citep{jiawei2020balanced}, Equalization loss
\citep{tan2020equalization}, Logit Adjusted (\LA) loss \citep{menonlong} and \IMMAX\ loss;
  \IMMAX\ outperforms the baselines in all settings.}}
\centering
\resizebox{.98\columnwidth}{!}{
  \begin{tabular}{@{\hspace{0pt}}lllll@{\hspace{0pt}}}
  \toprule
    Method &  Ratio & CIFAR-10 & CIFAR-100 & Tiny ImageNet  \\
    \midrule
    \CE\ & \multirow{9}{*}{200} & 94.81 $\pm$ 0.38 & 78.78 $\pm$ 0.49 & 61.72 $\pm$ 0.68 \\
    \RW\ & & 92.36 $\pm$ 0.11 & 67.52 $\pm$ 0.76 & 48.16 $\pm$ 0.72  \\
    \BS\ & & 93.62 $\pm$ 0.25 & 72.27 $\pm$ 0.73 & 54.18 $\pm$ 0.65  \\
    \EQUAL\ & & 94.21 $\pm$ 0.21 & 76.23 $\pm$ 0.80 & 60.63 $\pm$ 0.85    \\
    \LA\ & & 94.59 $\pm$ 0.45 & 78.54 $\pm$ 0.49 & 61.83 $\pm$ 0.78 \\
    \CB\ & & 94.95 $\pm$ 0.46 & 79.36 $\pm$ 0.81 & 62.51 $\pm$ 0.71  \\
    \FOCAL\ & & 94.96 $\pm$ 0.39 & 79.53 $\pm$ 0.75 & 62.70 $\pm$ 0.79 \\
    \LDAM\ & & 95.45 $\pm$ 0.38 & 79.18 $\pm$ 0.71 & 63.70 $\pm$ 0.62 \\
    \textbf{\IMMAX} & & \textbf{96.11 $\pm$ 0.34} & \textbf{80.47 $\pm$ 0.68} & \textbf{65.20 $\pm$ 0.65} \\
    \cmidrule{1-1} \cmidrule{2-5}
    \CE\ & \multirow{9}{*}{100} & 95.65 $\pm$ 0.23 & 70.05 $\pm$ 0.36 & 51.17 $\pm$ 0.66\\
    \RW\ & & 93.32 $\pm$ 0.51 & 63.35 $\pm$ 0.26 & 43.73 $\pm$ 0.54 \\
    \BS\ & & 94.80 $\pm$ 0.26 & 65.36 $\pm$ 0.69 & 47.06 $\pm$ 0.73 \\
    \EQUAL\ & & 95.15 $\pm$ 0.39 & 68.81 $\pm$ 0.29  & 50.34 $\pm$ 0.78 \\
    \LA\ & & 95.75 $\pm$ 0.17 & 70.19 $\pm$ 0.78 & 51.27 $\pm$ 0.57 \\
    \CB\ & & 95.83 $\pm$ 0.11 & 69.85 $\pm$ 0.75 & 51.58 $\pm$ 0.65 \\
    \FOCAL\ & & 95.72 $\pm$ 0.11 & 70.33 $\pm$ 0.42 & 51.66 $\pm$ 0.78\\
    \LDAM\ & & 95.85 $\pm$ 0.10 & 70.43 $\pm$ 0.52 & 52.00 $\pm$ 0.53\\
    \textbf{\IMMAX} & & \textbf{96.56 $\pm$ 0.18} & \textbf{71.51 $\pm$ 0.34} & \textbf{53.47 $\pm$ 0.72}\\
    \cmidrule{1-1} \cmidrule{2-5}
    \CE\ & \multirow{9}{*}{10} & 93.05 $\pm$ 0.18 & 70.43 $\pm$ 0.27 & 53.22 $\pm$ 0.42\\
    \RW\ & & 91.45 $\pm$ 0.26 & 67.35 $\pm$ 0.51 & 48.46 $\pm$ 0.78 \\
    \BS\ & & 91.84 $\pm$ 0.30 & 66.52 $\pm$ 0.39 & 51.22 $\pm$ 0.53  \\
    \EQUAL\ & & 92.30 $\pm$ 0.18 & 68.64 $\pm$ 0.60 & 51.77 $\pm$ 0.30    \\
    \LA\ & & 92.84 $\pm$ 0.43 & 70.16 $\pm$ 0.58 & 53.75 $\pm$ 0.20  \\
    \CB\ & & 92.96 $\pm$ 0.27 & 70.31 $\pm$ 0.63 & 53.66 $\pm$ 0.58 \\
    \FOCAL\ & & 93.09 $\pm$ 0.33 & 70.70 $\pm$ 0.36 & 53.26 $\pm$ 0.50\\
    \LDAM\ & & 93.16 $\pm$ 0.25 & 70.94 $\pm$ 0.29 & 53.61 $\pm$ 0.20\\
    \textbf{\IMMAX} & & \textbf{93.68 $\pm$ 0.12} & \textbf{71.93 $\pm$ 0.36} & \textbf{54.89 $\pm$ 0.44}\\
    \bottomrule
    \end{tabular}
    }
\label{tab:comparison-long}
\end{table}

\begin{table}[t]
\caption{Accuracy of ResNet-34 on \emph{step-imbalanced} CIFAR-10, CIFAR-100 and
  Tiny ImageNet; Means $\!\pm\!$ standard deviations over five runs for
  \IMMAX\ and a number of baseline techniques.}
\centering
\resizebox{.98\columnwidth}{!}{
    \begin{tabular}{@{\hspace{0pt}}lllll@{\hspace{0pt}}}
    \toprule
      Method & Ratio & CIFAR-10 & CIFAR-100 & Tiny ImageNet \\
    \midrule
    \CE\ & \multirow{9}{*}{200} & 94.71 $\pm$ 0.24 & 77.07 $\pm$ 0.55 & 61.61 $\pm$ 0.53\\
    \RW\ & & 90.31 $\pm$ 0.38 & 72.59 $\pm$ 0.26 & 58.49 $\pm$ 0.61\\
    \BS\ & & 90.69 $\pm$ 0.41 & 74.18 $\pm$ 0.62 &  61.11 $\pm$ 0.32 \\
    \EQUAL\ & & 93.43 $\pm$ 0.23 & 76.85 $\pm$ 0.38 & 61.81 $\pm$ 0.39 \\
    \LA\ & & 94.85 $\pm$ 0.18 & 76.89 $\pm$ 0.74 &  61.51 $\pm$ 0.78\\
    \CB\ & & 94.92 $\pm$ 0.18 & 77.04 $\pm$ 0.13 & 61.55 $\pm$ 0.57 \\
    \FOCAL\ & & 94.78 $\pm$ 0.16 & 77.10 $\pm$ 0.62 & 61.77 $\pm$ 0.51\\
    \LDAM\ & & 94.85 $\pm$ 0.23 & 77.18 $\pm$ 0.50 & 62.54 $\pm$ 0.51\\
    \textbf{\IMMAX}  & & \textbf{95.42 $\pm$ 0.30} & \textbf{78.21 $\pm$ 0.48} & \textbf{63.57 $\pm$ 0.36} \\
    \cmidrule{1-1} \cmidrule{2-5}
    \CE\ & \multirow{9}{*}{100} & 95.03 $\pm$ 0.21 & 76.92 $\pm$ 0.27 & 60.62 $\pm$ 0.53 \\
    \RW\ & & 90.74 $\pm$ 0.19 & 68.17 $\pm$ 0.82 & 53.24 $\pm$ 0.65 \\
    \BS\ & & 93.24 $\pm$ 0.36 & 70.97 $\pm$ 0.35 & 60.07 $\pm$ 0.23 \\
    \EQUAL\ & & 94.04 $\pm$ 0.30 & 77.17 $\pm$ 0.20 & 60.46 $\pm$ 0.64 \\
    \LA\ & & 94.83 $\pm$ 0.11 & 77.27 $\pm$ 0.34 & 60.81 $\pm$ 0.46 \\
    \CB\ & & 95.08 $\pm$ 0.28 & 76.88 $\pm$ 0.44 & 60.63 $\pm$ 0.37 \\
    \FOCAL\ & & 95.07 $\pm$ 0.34 & 77.00 $\pm$ 0.34 & 60.72 $\pm$ 0.36\\
    \LDAM\ & & 95.17 $\pm$ 0.24 & 77.05 $\pm$ 0.45 & 62.33 $\pm$ 0.46\\
    \textbf{\IMMAX}  & & \textbf{96.05 $\pm$ 0.15} & \textbf{78.17 $\pm$ 0.35} & \textbf{63.04 $\pm$ 0.60}\\
    \cmidrule{1-1} \cmidrule{2-5}
    \CE\ & \multirow{9}{*}{10} & 92.95 $\pm$ 0.18 & 74.43 $\pm$ 0.38 & 59.68 $\pm$ 0.29 \\
    \RW\ & & 90.64 $\pm$ 0.15 & 68.65 $\pm$ 0.49 & 46.97 $\pm$ 0.73\\
    \BS\ & & 92.55 $\pm$ 0.26 & 69.55 $\pm$ 0.84 & 56.70 $\pm$ 0.34 \\
    \EQUAL\ & & 92.62 $\pm$ 0.24 & 72.64 $\pm$ 0.61 & 60.34 $\pm$ 0.52\\
    \LA\ & & 93.55 $\pm$ 0.30 & 74.60 $\pm$ 0.26 & 60.36 $\pm$ 0.28 \\
    \CB\ & & 93.54 $\pm$ 0.15 & 74.63 $\pm$ 0.36 & 59.88 $\pm$ 0.29\\
    \FOCAL\ & & 93.11 $\pm$ 0.16 & 74.51 $\pm$ 0.41 & 59.75 $\pm$ 0.44\\
    \LDAM\ & & 93.34 $\pm$ 0.16 & 74.82 $\pm$ 0.46 & 61.11 $\pm$ 0.30\\
    \textbf{\IMMAX}  & & \textbf{93.93 $\pm$  0.18} & \textbf{75.86 $\pm$ 0.26} & \textbf{61.93 $\pm$ 0.25}\\
    \bottomrule
    \end{tabular}
}
\label{tab:comparison-step}
\end{table}

\section{Experiments}
\label{sec:experiments}

In this section, we present experimental results for our
\IMMAX\ algorithm, comparing it to baseline methods in minimizing the standard
zero-one misclassification loss on CIFAR-10, CIFAR-100
\citep{Krizhevsky09learningmultiple} and Tiny ImageNet \citep{le2015tiny}
datasets. \ignore{Note that \IMMAX\ is not optimized
for other objectives, such as the balanced loss, and thus is not
expected to outperform state-of-the-art methods tailored to those
metrics.}.

Starting with multi-class classification, we strictly followed the experimental setup of \citet{cao2019learning}, adopting the same training procedure and neural network architectures. Specifically, we used ResNet-34 with ReLU activations \citep{he2016deep}, where ResNet-$n$ denotes a residual network with $n$ convolutional layers. For CIFAR-10 and CIFAR-100, we applied standard data augmentations, including 4-pixel padding followed by $32 \times 32$ random crops and random horizontal flips. For Tiny ImageNet, we used 8-pixel padding followed by $64 \times 64$ random crops.
All models were trained using Stochastic Gradient Descent (SGD) with Nesterov momentum \citep{nesterov1983method}, a batch size of $1,024$, and a weight decay of $1\times 10^{-3}$. Training spanned $200$ epochs, using a cosine decay learning rate schedule \citep{loshchilov2022sgdr} without restarts, with the initial learning rate set to $0.2$. For all the baselines and the \IMMAX\ algorithm, the hyperparameters were selected through cross-validation, see Appendix~\ref{app:experiments} for details.

To create imbalanced versions of the datasets, we reduced the percent of examples per class identically in the training and test sets. Following \citep{cao2019learning}, we consider two types of imbalances: long-tailed imbalance \citep{cui2019class} and step imbalance \citep{buda2018systematic}. The imbalance ratio, $ \frac{\max_{k = 1}^c m_k }{\min_{k = 1}^c m_k }$, represents the ratio of sample sizes between the most frequent and least frequent classes. In the long-tailed imbalance setting, class sample sizes decrease exponentially across classes. In the step setting, minority classes all have the same sample size, as do the frequent classes, creating a clear distinction between the two groups.

We compare our \IMMAX\ algorithm with widely used baselines, including the cross-entropy (\CE) loss, Re-Weighting (\RW) method \citep{xie1989logit,morik1999combining}, Balanced Softmax (\BS) loss \citep{jiawei2020balanced}, Equalization loss
\citep{tan2020equalization}, Logit Adjusted (\LA) loss \citep{menonlong}, Class-Balanced (\CB) loss \citep{cui2019class}, the \FOCAL\ loss in
  \citep{ross2017focal} and the \LDAM\ loss in \citep{cao2019learning} also detailed in Appendix~\ref{app:experiments}. We average
accuracies on the imbalanced test set  over five runs and report the means and standard deviations. \ignore{Experimental details on cross-validation are provided in Appendix~\ref{app:experiments}.} Note that \IMMAX\ is not optimized
for other objectives, such as the balanced loss, and thus is not
expected to outperform state-of-the-art methods tailored to those
metrics. 

Table~\ref{tab:comparison-long} and Table~\ref{tab:comparison-step} highlight that \IMMAX\ consistently outperforms all baseline methods on both the long-tailed and step-imbalanced datasets across all evaluated imbalance ratios (200, 100, and 10). 
In every scenario, \IMMAX\ achieves an absolute accuracy improvement of at least 0.6\% over the runner-up algorithm.
\ignore{To further demonstrate the value of our approach, we conducted experiments on additional datasets, including CIFAR-100 and Tiny ImageNet.
On CIFAR-100, \IMMAX\ outperforms \LDAM\ by at least 0.9\% in absolute accuracy for the long-tailed imbalanced version and by at least 1.0\% for the step-imbalanced version. Similarly, on Tiny ImageNet, \IMMAX\ surpassed \LDAM\ by 1.3\% on the long-tailed imbalanced version and by at least 0.7\% on the step-imbalanced version.}Note, that for the long-tailed distributions, the more imbalanced the dataset is, the more beneficial \IMMAX\  becomes compared to the baselines.

Finally, in Table~\ref{tab:comparison-binary}, we include binary classification results on CIFAR-10 obtained by classifying one category, e.g., airplane versus all the others using linear models. Table~\ref{tab:comparison-binary} shows that \IMMAX\ outperforms baselines.

Let us emphasize that our work is based on a novel, principled surrogate loss function designed for imbalanced data. Accordingly, we compare our new loss function directly against existing ones without incorporating additional techniques. However, all these loss functions, including ours, can be combined with existing data modification methods such as oversampling \citep{chawla2002smote} and undersampling \citep{WallaceSmallBrodleyTrikalinos2011,KubatMatwin1997}, as well as  optimization strategies like the deferred re-balancing schedule proposed in \citep{cao2019learning}, to further enhance performance. For a fair comparison of loss functions, we deliberately excluded these techniques from our experiments.

\section{Discussion}

\textbf{Balanced accuracy.} Our work focuses on the standard and unmodified zero-one misclassification loss, which remains the primary objective in many machine learning applications, as discussed in Section~\ref{sec:introduction}. Accordingly, we report standard accuracy based on this loss function in Section~\ref{sec:experiments}. In contrast, some previous studies (e.g., \citet{cao2019learning}) report ``balanced accuracy'', which equalizes the impact of each class by weighting errors inversely to class frequency.

\textbf{Selection of $\rho_k$.} 
As noted in Section~\ref{sec:multiclass}, while $\rho_k$ can be freely tuned, the search can be guided by the vector $[\rho_k/\rho]_k$ near $\sfr$, the optimal values in the separable case, a strategy we used in our experiments. For large multi-class settings, we further reduced the search space by assigning shared $\rho_k$ to rare classes and distinct ones to frequent classes, improving practicality with minimal performance impact.

\textbf{Balanced data.}
\IMMAX\ extends several classical algorithms, such as SVM \citep{CortesVapnik95} and logistic regression \citep{Verhulst1838}, 
to the more general setting of imbalanced data. When the training dataset is balanced, the theoretically optimal values of $\rho_k$ coincide across all classes, and \IMMAX\ reduces to standard methods. For example, with the logistic loss, it recovers the standard softmax cross-entropy loss with a suitable regularization parameter, thus yielding the same performance.

\textbf{Large-scale datasets.}  The dependency of our solution on sample size and dimensionality is similar to that of standard neural networks trained with cross-entropy loss (that is the logistic loss when softmax is applied to logits). Thus, our approach remains practical for large-scale datasets when using optimizers such as SGD \citep{bottou2010large}, AdaGrad \citep{duchi2011adaptive} or Adam \citep{kingma2014adam}. Our solution does depend on the number of classes, but this dependency is inherent to standard multi-class neural network solutions as well.

\textbf{Contrastive loss and temperature scaling}. The form of our \IMMAX\ loss function has some similarity with supervised contrastive losses (e.g., \citet{khosla2020supervised}), where a scalar temperature parameter is used in the inner product argument of the exponential. However, in our case, distinct parameters are introduced to allow different confidence margins across classes, serving a different purpose than in contrastive learning. Nevertheless, our margin analysis could provide a useful tool for analyzing contrastive learning. Alternatively, the \IMMAX\ loss can be viewed as a form of class-dependent temperature scaling derived from the logistic loss. In fact, our theoretical framework offers insight into the role of temperature parameters more broadly.

\section{Conclusion}
\label{sec:conclusion}

\begin{table}[t]
\vskip -0.075in
\caption{Accuracy of linear models on binarized version of CIFAR-10;  Means $\pm$ standard deviations for hinge loss, \IMMAX\ and \LDAM.}
\centering
\vskip 0.05in
\resizebox{.95\columnwidth}{!}{
    \begin{tabular}{@{\hspace{0pt}}llll@{\hspace{0pt}}}
    \toprule
     Method & Airplane & Automobile & Horse \\
    \midrule
    \hinge & 90.17 $\pm$ 0.09 & 91.01 $\pm$ 0.13 & 90.58 $\pm$ 0.11\\
    \LDAM\ & 90.37 $\pm$ 0.01 & 90.44 $\pm$ 0.02 & 90.17 $\pm$ 0.01\\
    \textbf{\IMMAX} & \textbf{91.02 $\pm$ 0.06} & \textbf{91.26 $\pm$ 0.05} & \textbf{91.03 $\pm$ 0.03}\\
    \bottomrule
    \end{tabular}
}
\vskip -0.28in
\label{tab:comparison-binary}
\end{table}

We introduced a rigorous theoretical framework for addressing class
imbalance, culminating in the class-imbalanced margin loss and
\IMMAX\ algorithms for binary and multi-class classification.
These algorithms are grounded in strong theoretical guarantees,
including $\sH$-consistency and robust generalization bounds.
Empirical results confirm that our algorithms outperform existing
methods while remaining aligned with key theoretical principles.
Our analysis is not limited to misclassification loss and can be
adapted to other objectives like balanced loss, offering broad
applicability. We believe these contributions offer a significant step
towards principled solutions for class imbalance across a diverse
range of machine learning applications.

\ignore{
\begin{table}[t]
\caption{Accuracy of ResNet-34 on long-tailed imbalanced CIFAR-10, CIFAR-100 and
  Tiny ImageNet; Mean $\pm$ standard deviation over five runs for the standard cross-entropy (\CE) loss, \FOCAL\ loss
  \citep{ross2017focal}, Re-Weighting (\RW), \LDAM\ loss \citep{cao2019learning}, Class-Balanced (\CB) loss \citep{cui2019class}, Balanced Softmax (\BS) loss \citep{jiawei2020balanced}, Equalization loss
\citep{tan2020equalization}, Logit Adjusted (\LA) loss \citep{menonlong} and \IMMAX\ loss;
  \IMMAX\ outperforms the baselines in all settings.}
\centering
\resizebox{.98\columnwidth}{!}{
  \begin{tabular}{@{\hspace{0pt}}lllll@{\hspace{0pt}}}
    Method &  Ratio & CIFAR-10 & CIFAR-100 & Tiny ImageNet  \\
    \midrule
    \CE\ & \multirow{9}{*}{200} & 94.81 $\pm$ 0.38 & 78.78 $\pm$ 0.49 & 61.72 $\pm$ 0.68 \\
    \RW\ & & 92.36 $\pm$ 0.11 & 67.52 $\pm$ 0.76 & 48.16 $\pm$ 0.72  \\
    \BS\ & & 93.62 $\pm$ 0.25 & 72.27 $\pm$ 0.73 & 54.18 $\pm$ 0.65  \\
    \EQUAL\ & & 94.21 $\pm$ 0.21 & 76.23 $\pm$ 0.80 & 60.63 $\pm$ 0.85    \\
    \LA\ & & 94.59 $\pm$ 0.45 & 78.54 $\pm$ 0.49 & 61.83 $\pm$ 0.78 \\
    \CB\ & & 94.95 $\pm$ 0.46 & 79.36 $\pm$ 0.81 & 62.51 $\pm$ 0.71  \\
    \FOCAL\ & & 94.96 $\pm$ 0.39 & 79.53 $\pm$ 0.75 & 62.70 $\pm$ 0.79 \\
    \LDAM\ & & 95.45 $\pm$ 0.38 & 79.18 $\pm$ 0.71 & 63.70 $\pm$ 0.62 \\
    \textbf{\IMMAX} & & \textbf{96.11 $\pm$ 0.34} & \textbf{80.47 $\pm$ 0.68} & \textbf{65.20 $\pm$ 0.65} \\
    \cmidrule{1-1} \cmidrule{2-5}
    \CE\ & \multirow{9}{*}{100} & 95.65 $\pm$ 0.23 & 70.05 $\pm$ 0.36 & 51.17 $\pm$ 0.66\\
    \RW\ & & 93.32 $\pm$ 0.51 & 63.35 $\pm$ 0.26 & 43.73 $\pm$ 0.54 \\
    \BS\ & & 94.80 $\pm$ 0.26 & 65.36 $\pm$ 0.69 & 47.06 $\pm$ 0.73 \\
    \EQUAL\ & & 95.15 $\pm$ 0.39 & 68.81 $\pm$ 0.29  & 50.34 $\pm$ 0.78 \\
    \LA\ & & 95.75 $\pm$ 0.17 & 70.19 $\pm$ 0.78 & 51.27 $\pm$ 0.57 \\
    \CB\ & & 95.83 $\pm$ 0.11 & 69.85 $\pm$ 0.75 & 51.58 $\pm$ 0.65 \\
    \FOCAL\ & & 95.72 $\pm$ 0.11 & 70.33 $\pm$ 0.42 & 51.66 $\pm$ 0.78\\
    \LDAM\ & & 95.85 $\pm$ 0.10 & 70.43 $\pm$ 0.52 & 52.00 $\pm$ 0.53\\
    \textbf{\IMMAX} & & \textbf{96.56 $\pm$ 0.18} & \textbf{71.51 $\pm$ 0.34} & \textbf{53.47 $\pm$ 0.72}\\
    \cmidrule{1-1} \cmidrule{2-5}
    \CE\ & \multirow{9}{*}{10} & 93.05 $\pm$ 0.18 & 70.43 $\pm$ 0.27 & 53.22 $\pm$ 0.42\\
    \RW\ & & 91.45 $\pm$ 0.26 & 67.35 $\pm$ 0.51 & 48.46 $\pm$ 0.78 \\
    \BS\ & & 91.84 $\pm$ 0.30 & 66.52 $\pm$ 0.39 & 51.22 $\pm$ 0.53  \\
    \EQUAL\ & & 92.30 $\pm$ 0.18 & 68.64 $\pm$ 0.60 & 51.77 $\pm$ 0.30    \\
    \LA\ & & 92.84 $\pm$ 0.43 & 70.16 $\pm$ 0.58 & 53.75 $\pm$ 0.20  \\
    \CB\ & & 92.96 $\pm$ 0.27 & 70.31 $\pm$ 0.63 & 53.66 $\pm$ 0.58 \\
    \FOCAL\ & & 93.09 $\pm$ 0.33 & 70.70 $\pm$ 0.36 & 53.26 $\pm$ 0.50\\
    \LDAM\ & & 93.16 $\pm$ 0.25 & 70.94 $\pm$ 0.29 & 53.61 $\pm$ 0.20\\
    \textbf{\IMMAX} & & \textbf{93.68 $\pm$ 0.12} & \textbf{71.93 $\pm$ 0.36} & \textbf{54.89 $\pm$ 0.44}
    \end{tabular}
    }
\label{tab:comparison-long-additional}
\end{table}

\begin{table}[t]
\caption{Accuracy of ResNet-34 on step-imbalanced CIFAR-10, CIFAR-100 and Tiny
  ImageNet; Mean $\pm$ standard deviation over five runs for the standard cross-entropy (\CE) loss, \FOCAL\ loss
  \citep{ross2017focal}, Re-Weighting, \LDAM\ loss \citep{cao2019learning}, Class-Balanced (\CB) loss \citep{cui2019class}, Balanced Softmax (\BS) loss \citep{jiawei2020balanced}, Equalization loss
\citep{tan2020equalization}, Logit Adjusted (\LA) loss \citep{menonlong} and \IMMAX\ loss;
  \IMMAX\ outperforms the baselines in all settings.}
\centering
\resizebox{.98\columnwidth}{!}{
    \begin{tabular}{@{\hspace{0pt}}lllll@{\hspace{0pt}}}
      Method & Ratio & CIFAR-10 & CIFAR-100 & Tiny ImageNet \\
    \midrule
    \CE\ & \multirow{9}{*}{200} & 94.71 $\pm$ 0.24 & 77.07 $\pm$ 0.55 & 61.61 $\pm$ 0.53\\
    \RW\ & & & & \\
    \BS\ & & & &   \\
    \EQUAL\ & & & &    \\
    \LA\ & & & &  \\
    \CB\ & & & &  \\
    \FOCAL\ & & 94.78 $\pm$ 0.16 & 77.10 $\pm$ 0.62 & 61.77 $\pm$ 0.51\\
    \LDAM\ & & 94.85 $\pm$ 0.23 & 77.18 $\pm$ 0.50 & 62.54 $\pm$ 0.51\\
    \textbf{\IMMAX}  & & \textbf{95.42 $\pm$ 0.30} & \textbf{78.21 $\pm$ 0.48} & \textbf{63.57 $\pm$ 0.36} \\
    \cmidrule{1-1} \cmidrule{2-5}
    \CE\ & \multirow{9}{*}{100} & 95.03 $\pm$ 0.21 & 76.92 $\pm$ 0.27 & 60.62 $\pm$ 0.53 \\
    \RW\ & & & &  \\
    \BS\ & & & &   \\
    \EQUAL\ & & & &    \\
    \LA\ & & & &  \\
    \CB\ & & & &  \\
    \FOCAL\ & & 95.07 $\pm$ 0.34 & 77.00 $\pm$ 0.34 & 60.72 $\pm$ 0.36\\
    \LDAM\ & & 95.17 $\pm$ 0.24 & 77.05 $\pm$ 0.45 & 62.33 $\pm$ 0.46\\
    \textbf{\IMMAX}  & & \textbf{96.05 $\pm$ 0.15} & \textbf{78.17 $\pm$ 0.35} & \textbf{63.04 $\pm$ 0.60}\\
    \cmidrule{1-1} \cmidrule{2-5}
    \CE\ & \multirow{9}{*}{10} & 92.95 $\pm$ 0.18 & 74.43 $\pm$ 0.38 & 59.68 $\pm$ 0.29 \\
    \RW\ & & & & \\
    \BS\ & & & &   \\
    \EQUAL\ & & & &    \\
    \LA\ & & & &  \\
    \CB\ & & & &  \\
    \FOCAL\ & & 93.11 $\pm$ 0.16 & 74.51 $\pm$ 0.41 & 59.75 $\pm$ 0.44\\
    \LDAM\ & & 93.34 $\pm$ 0.16 & 74.82 $\pm$ 0.46 & 61.11 $\pm$ 0.30\\
    \textbf{\IMMAX}  & & \textbf{93.93 $\pm$  0.18} & \textbf{75.86 $\pm$ 0.26} & \textbf{61.93 $\pm$ 0.25}\\
    \end{tabular}
}
\label{tab:comparison-step-additional}
\end{table}
}

\section*{Acknowledgements}

We thank the anonymous reviewers for their valuable feedback and constructive suggestions.

\section*{Impact Statement}

This paper presents work whose goal is to advance the field of 
Machine Learning. There are many potential societal consequences 
of our work, none which we feel must be specifically highlighted here.

\bibliography{lid,add}
\bibliographystyle{icml2025}

\newpage
\appendix
\onecolumn

\renewcommand{\contentsname}{Contents of Appendix}
\tableofcontents
\addtocontents{toc}{\protect\setcounter{tocdepth}{3}} 
\clearpage

\section{Related Work}
\label{app:related-work}

This section provides an expanded discussion of related work on class
imbalance in machine learning.

The class imbalance problem, defined by a significant disparity in the
number of instances across classes within a dataset, is a common
challenge in machine learning applications
\citep{Lewis:1994,Fawcett:1996,KubatMatwin1997,kang2021exploring,
  menonlong, liu2019large,cui2019class}.  This issue is prevalent
in many real-world binary classification scenarios, and arguably even
more so in multi-class problems with numerous classes. In such cases,
a few majority classes often dominate the dataset, leading to a
``long-tailed'' distribution.  Classifiers trained on these imbalanced
datasets often struggle, performing similarly to a naive baseline that
simply predicts the majority class.

The problem has been widely studied in the literature
\citep{CardieNowe1997,KubatMatwin1997,chawla2002smote,he2009learning,
  WallaceSmallBrodleyTrikalinos2011}. It includes numerous methods
including standard Softmax, class-sensitive learning, {Weighted
  Softmax}, weighted 0/1 loss
\citep{GabidollaZharmagambetovCarreiraPerpinan2024}, size-invariant
metrics for Imbalanced Multi-object Salient Object Detection studied
by \citet{LiXuBaoYangCongCaoHuang2024} as well as {Focal loss}
\citep{lin2017focal}, {LDAM} \citep{cao2019learning}, {ESQL}
\citep{tan2020equalization}, {Balanced Softmax}
\citep{jiawei2020balanced}, {LADE} \citep{hong2020disentangling}),
logit adjustment ({UNO-IC} \citep{tian2020posterior}, {LSC} \citep{weilearning}), transfer
learning ({SSP} \citep{yang2020rethinking}), data augmentation ({RSG}
\citep{wang2021rsg}, {BSGAL} \citep{zhugenerative}, {ELTA} \citep{liuelta}, {OT} \citep{gao2024enhancing}), representation learning ({OLTR}
\citep{liu2019large}, {PaCo} \citep{cui2021parametric}, {DisA} \citep{gao2024distribution}, {RichSem} \citep{meng2024learning}, {RBL} \citep{meng2024learning}, {WCDAS} \citep{han2023wrapped}),  classifier
design ({De-confound} \citep{tang2020long}, \citep{yang2022inducing,kasarla2022maximum},  {LIFT} \citep{shi2024long}, {SimPro} \citep{dusimpro}), decoupled training
({Decouple-IB-CRT} \citep{kang2019decoupling}, {CB-CRT}
\citep{kang2019decoupling}, {SR-CRT} \citep{kang2019decoupling},
      {PB-CRT} \citep{kang2019decoupling}, {MiSLAS}
      \citep{zhong2021improving}), ensemble learning ({BBN}
      \citep{zhou2020bbn}, {LFME} \citep{xiang2020learning}, {RIDE}
      \citep{wang2020long}, {ResLT} \citep{cui2021reslt}, {SADE}
      \citep{zhang2022self}, {DirMixE} \citep{yangharnessing}).  An interesting recent study
      characterizes the asymptotic performances of linear classifiers
      trained on imbalanced datasets for different metrics
      \citep{LoffredoPastoreCoccoMonasson2024}.
      
Due to space restrictions, we
cannot give a detailed discussion of all these methods. Instead, we
will describe and discuss several broad categories of existing methods
to tackle this problem and refer to reader to a recent survey of
\citet{ZhangKangHooiYanFeng2023} for more details.
These methods fall into the following broad categories.

\textbf{Data modification methods.} These include methods such as
oversampling the minority class \citep{chawla2002smote}, undersampling
the majority class
\citep{WallaceSmallBrodleyTrikalinos2011,KubatMatwin1997}, or
generating synthetic samples (e.g., SMOTE
\citep{chawla2002smote,QiaoLiu2008,han2005borderline}), aim to
rebalance the dataset before training
\citep{chawla2002smote,estabrooks2004multiple,
  liu2008exploratory,zhang2021learning,shi2023re}.

\textbf{Cost-sensitive techniques.} These techniques, including
cost-sensitive learning and the incorporation of class weights assign
different penalization costs to losses on different classes. They
include cost-sensitive SVM \citep{Iranmehr:2019,Masnadi-Shirazi:2010}
and other cost-senstive methods
\citep{elkan2001foundations,zhou2005training, zhao2018adaptive,
  zhang2018online, zhang2019online,sun2007cost,Fan:2017,
  jamal2020rethinking,zhang2022self,wang2022solar,xiao2024fed,suh2023long}. The weights are often determined by the
relative number of samples in each class or a notion of effective
sample size \cite{cui2019class}.

These two method categories are very related and can actually be shown
to be equivalent in the limit.  Cost-sensitive methods can be viewed
as more efficient, flexible and principled techniques for implementing
data sampling methods.  However, these methods often risk overfitting
the minority class or discarding valuable information from the
majority class.  Both methods inherently bias the input training data
distribution and suffer from Bayes-inconsistency (in Section~\ref{sec:existing-methods}, we prove
that cost-sensitive methods do not admit Bayes-consistency). While
they have been both reported to be effective in various instances,
this varies and depends on the problem, the distribution, the choice
of predictors, and the performance metric adopted and they have been
reported not to be effective in all cases \citep{VanHulse:2007}.
Additionally, cost-sensitive methods often resort to careful tuning of
hyperparameters.  Hybrid approaches attempt to combine the strengths
of data modification and cost-sensitive methods but often inherit
their respective limitations.

\textbf{Logistic loss modifications.}  A family of more recent methods
rely on logistic loss modifications.  They consist of modifying the
logistic loss by augmenting each logit (or predicted score) with an
additive hyperparameter.  They can be equivalently described as a
cost-sensitive modification of the exponential terms appearing in the
definition of the logistic loss.  They include the Balanced Softmax
loss \cite{jiawei2020balanced}, the Equalization loss
\cite{tan2020equalization}, and the \LDAM\ loss
\cite{cao2019learning}. Other similar additive change methods use
quadratically many hyperparameters with a distinct additive parameter
for each pair of logits. They include the logit adjustment methods of
\citet{menonlong} and \citet{khan2019striking}.
\citet{menonlong} argue that their specific choice of the
hyperparameter values is Bayes-consistent.
A multiplicative modification of the logits, with one hyperparameter
per class label is advocated by \citet{Ye:2020}. This can be
equivalently viewed as normalizing scoring functions (or feature
vectors in the linear case) beforehand, which is a standard method
used in many learning applications, irrespective of the presence of
imbalanced classes. The Vector-Scaling loss of \citet{kini2021label}
combines the additive modification of the logits with this
multiplicative change. These authors further present an analysis of
this method in the case of linear predictors, underscoring the
specific benefits of the multiplicative changes.  As already pointed
out, the multiplicative changes coincide with prior rescaling or
renormalization of the feature vectors, however.

\textbf{Other methods.} Additional approaches for tackling imbalanced
datasets (see \citet{ZhangKangHooiYanFeng2023}) include post-hoc
correction of decision thresholds \citep{Fawcett:1996,Collell:2016} or
weights \citep{kang2019decoupling,Kim:2019}], as well as information and data
  augmentation via transfer learning, or
  distillation \citep{LiLiYeZhang2024}.

Despite significant advances, these techniques face persistent
challenges.

First, most existing solutions are heuristic-driven and lack a solid
theoretical foundation, making their performance difficult to predict
across varying contexts. 

In fact, we are not aware of any analysis of
the generalization guarantees for these methods, with the exception of
that of \cite{cao2019learning,jiawei2020balanced,wang2023unified}. However, as further discussed in
Section~\ref{sec:existing-methods}, the analysis presented by these
authors is limited to the \emph{balanced loss} \citep{cortes2025improved}, which
equalizes the impact of each class by weighting errors inversely to class 
frequency. More specifically,
the analysis in \citep{cao2019learning} is limited to binary classification 
and holds only for the
separable case.
The balanced loss function differs from the target misclassification
loss. It has been argued, and that is important, that the balanced
loss admits beneficial fairness properties when class labels correlate
with demographic attributes as it treats all class errors equally.
The balanced loss is also the metric considered in the analysis of
several of the logistic loss modifications papers
\citep{cao2019learning,menonlong,Ye:2020,kini2021label}.
However, class labels do not alway relate to demographic attributes.
Furthermore, many other criteria are considered for fairness purposes
and in many machine learning applications, the misclassification
remains the key target loss function to minimize.  We will show that,
even in the special case of the analysis of \cite{cao2019learning},
the solution they propose is the opposite of the one corresponding to
our theoretical analysis for the standard misclassification loss. We
further show that their solution is empirically outperformed by ours.

Second, the evaluation of these methods is frequently biased toward
alternative metrics such as F1-measure, AUC, or other metrics
weighting false or true positive rate differently, which may obscure
their true effectiveness on standard misclassification.  Additionally,
these methods often seem to struggle with extreme imbalances or when
the minority class exhibits high intra-class variability.

We refer to \citet{ZhangKangHooiYanFeng2023} for more details about
work related to learning from imbalanced data. 

\section{Experimental details}
\label{app:experiments}

In this section, we provide further experimental details. We first discuss the loss functions for the baselines and then provide ranges of hyperparameters tested via cross-validation. 

\textbf{Baseline algorithms.}
In Section~\ref{sec:experiments}, we compared our \IMMAX\ algorithm with well-known baselines, including the cross-entropy (\CE) loss, Re-Weighting (\RW) method \citep{xie1989logit,morik1999combining}, Balanced Softmax (\BS) loss \citep{jiawei2020balanced}, Equalization loss
\citep{tan2020equalization}, Logit Adjusted (\LA) loss \citep{menonlong}, Class-Balanced (\CB) loss \citep{cui2019class}, the \FOCAL\ loss in
  \citep{ross2017focal} and the \LDAM\ loss in \citep{cao2019learning}.

The \IMMAX\ algorithm optimizes the loss function:
\begin{equation*}
    \forall (h, x, y), \quad \sfL_{\IMMAX}(h, x, y) = \log\paren*{\sum_{j = 1}^c e^{\frac{h(x, j) - h(x, y)}{\rho_y}}},
    \end{equation*}
    where $\rho_k  > 0$ for $k \in [c]$ are hyperparameters. In comparison, the baselines optimize the following loss functions: \begin{itemize}
    \item Cross-entropy (\CE) loss:
    \begin{equation*}
    \forall (h, x, y), \quad \sfL_{\CE}(h, x, y) = -\log\paren*{\frac{e^{h(x, y)}}{\sum_{j = 1}^c e^{h(x, j)}}}.
    \end{equation*}
    \item Re-Weighting (\RW) method \citep{xie1989logit,morik1999combining}: Each sample is re-weighted by the inverse of its class's sample size and subsequently normalized such that the average weight within each mini-batch is 1. This is equivalent to minimizing the loss function given below:
    \begin{equation*}
    \forall (h, x, y), \quad \sfL_{\RW}(h, x, y) = - \frac{m}{m_{y}}\log\paren*{\frac{e^{h(x, y)}}{\sum_{j = 1}^c e^{h(x, j)}}}.
    \end{equation*}
    \item Balanced Softmax (\BS) loss \citep{jiawei2020balanced}:
    \begin{equation*}
    \forall (h, x, y), \quad \sfL_{\BS}(h, x, y) = -\log\paren*{\frac{m_{y} e^{h(x, y)}}{\sum_{j = 1}^c m_{j} e^{h(x, j)}}}.
    \end{equation*}
    \item Equalization loss \citep{tan2020equalization}:
    \begin{equation*}
    \forall (h, x, y), \quad \sfL_{\EQUAL}(h, x, y) = -\log\paren*{\frac{e^{h(x, y)}}{\sum_{j = 1}^c w_{j} e^{h(x, j)}}},
    \end{equation*}
    with the weight $w_j$ computed by $w_j = 1 - \beta \1_{\frac{m_j}{m} < \lambda} \1_{y \neq j}$, where $\beta \sim \text{Bernoulli}(p)$ is a Bernoulli distribution. Here, $1 > p > 0$ and $ 1 > \lambda > 0$ are two hyperparameters. 
    \item Logit Adjusted (\LA) loss \citep{menonlong}:
    \begin{equation*}
    \forall (h, x, y), \quad \sfL_{\LA}(h, x, y) = -\log\paren*{\frac{e^{h(x, y) + \tau \log(m_y)} }{\sum_{j = 1}^c e^{h(x, j) + \tau \log(m_j)}}},
    \end{equation*}
    where $\tau > 0$ is a hyperparameter.
    \item Class-Balanced (\CB) loss \citep{cui2019class}:
    \begin{equation*}
    \forall (h, x, y), \quad \sfL_{\CB}(h, x, y) = -\frac{1 - \gamma}{1 - \gamma^{\frac{m_y}{m}}}\log\paren*{\frac{e^{h(x, y)} }{\sum_{j = 1}^c e^{h(x, j)}}},
    \end{equation*}
    where $1 > \gamma > 0$ is a hyperparameter.
    \item \FOCAL\ loss in
  \citep{ross2017focal}:
    \begin{equation*}
    \forall (h, x, y), \quad \sfL_{\FOCAL}(h, x, y) = -\paren*{1 - \frac{e^{h(x, y)} }{\sum_{j = 1}^c e^{h(x, j)}}}^{\gamma}\log\paren*{\frac{e^{h(x, y)} }{\sum_{j = 1}^c e^{h(x, j)}}},
    \end{equation*}
    where $\gamma \geq 0$ is a hyperparameter.
    \item \LDAM\ loss in \citep{cao2019learning}: \begin{equation*}
    \forall (h, x, y), \quad \sfL_{\LDAM}(h, x, y) = -\log\paren*{\frac{e^{h(x, y) - \Delta_y}}{e^{h(x, y) - \Delta_y} + \sum_{j \neq y} e^{h(x, j)}}},
    \end{equation*}
    where $\Delta_j = \frac{C}{m_j^{\frac14}}$ for $j \in [c]$ and $C > 0$ is a hyperparameter.
\end{itemize}
\textbf{Discussion.} Among these baselines, \RW\ method, \CB\ loss, and \FOCAL\ loss are cost-sensitive methods, while \BS\ loss, \EQUAL\ loss, \LA\ loss, and \LDAM\ loss are logistic loss modification methods.  Note that when $\tau = 1$, the \LA\ loss is the same as the \BS\ loss; when $\tau = 0$, the \FOCAL\ loss is the same as the \CE\ loss. Also note that in the balanced setting where $m_j = {m} / c$ for $j \in [c]$, the \RW\ method, \BS\ loss, \LA\ loss and \CB\ loss are the same as the \CE\ loss. 

\textbf{Hyperparameter search.} As mentioned in Section~\ref{sec:experiments}, all hyperparameters were selected through cross-validation for all the baselines and the \IMMAX\ algorithm. More specifically, the parameter ranges for each method are as follows. Note that the \CE\ loss, \RW\ method and \BS\ loss do not have any hyperparameters.
\begin{itemize}
    \item \EQUAL\ loss: following \citep{tan2020equalization}, $p$ is chosen from $\curl*{0.1, 0.2, 0.3, 0.4, 0.5, 0.6, 0.7, 0.8, 0.9}$ and $\lambda$ is chosen from $\curl*{0.176, 0.5, 0.8, 1.5, 1.76, 2.0, 3.0, 5.0} \times 10^{-3}$.
    \item \LA\ loss: following \citep{menonlong}, $\tau$ is chosen from $\curl*{0.1, 0.2, 0.3, 0.4, 0.5, 0.6, 0.7, 0.8, 0.9, 1.0}$ and $\curl*{1.5, 2.0, 2.5, 3.0, 3.5, 4.0, 4.5, 5.0, 5.5, 6.0, 6.5, 7.0, 7.5, 8.0, 8.5, 9.0, 9.5, 10.0}$. When $\tau = 1$ (the suggested value in \citep{menonlong}),  the \LA\ loss is equivalent to the \BS\ loss. We observed improved performance for small values of $\tau < 1$ when minimizing the standard zero-one misclassification loss. Therefore, we conducted a finer search between $0$ and $1$.
    \item \CB\ loss: following \citep{cui2019class}, $\gamma$ is chosen from $\curl*{0.1, 0.2, 0.3, 0.4, 0.5, 0.6, 0.7, 0.8, 0.9, 0.99, 0.999, 0.9999}$. While the default values of  $\curl*{0.9, 0.99, 0.999, 0.9999}$  are suggested in \citep{cui2019class}, we observed that they are not effective for minimizing the standard zero-one misclassification loss.  We found that performance is typically better when $\gamma$ is close to $0$.
    \item \FOCAL\ loss: $\gamma$ is chosen from $\curl*{1.0, 1.5, 2.0, 2.5, 3.0, 3.5, 4.0, 4.5, 5.0, 5.5, 6.0, 6.5, 7.0, 7.5, 8.0, 8.5, 9.0, 9.5, 10.0}$ and $\curl*{0.0, 0.1, 0.2, 0.3, 0.4, 0.5, 0.6, 0.7, 0.8, 0.9}$ following \citep{ross2017focal}.  We observe that performance is typically better when $\gamma$ is less than $1$. Therefore, we conducted a finer search between $0$ and $1$.
    \item \LDAM\ loss: following \citep{cao2019learning}, $C$ is chosen from $\curl*{10^{-4}, 10^{-3}, 10^{-2}, 10^{-1}, 1.0, 10.0, 100.0, 1000.0, 10000.0}$ and $\curl*{5\times 10^{-4}, 5\times 10^{-3}, 5\times 10^{-2}, 5\times 10^{-1}, 5.0, 50.0, 500.0, 5000.0}$.
    \item \IMMAX\ loss: following Section~\ref{sec:algorithms} and Appendix~\ref{app:lemma}, each $\rho_k$ is searched within a symmetric interval centered at the theoretically optimal value $\rho_k^* = \frac{m_k^{\frac13}}{\sum_{j \in [c]} m_j^{\frac13}}$, with width $\delta_k$; that is, over the interval $\bracket*{\rho_k^* - \delta_k, \rho_k^* + \delta_k}$, where $\delta_k$ is chosen relative to the scale of $\rho_k^*$. In particular, we set $\delta_k \approx 0.8 \rho_k^*$ and evaluate approximately 10 values within each interval. Empirically, performance is not sensitive to variations within this neighborhood. In the step imbalanced setting, we assign identical $\rho_k$ values to
minority classes and distinct $\rho_k$ values to frequent classes before the search.
\end{itemize}

\section{Proof of Theorem~\ref{thm:negative}}
\label{app:negative}

\Negative*
\begin{proof}
Consider a singleton distribution concentrated at a point $x$. Without
loss of generality, assume that $c_{+} > c_{-} > 0$. Next, consider
the conditional distribution $\eta(x) = \Pr \bracket*{Y = +1 \mid X =
  x}$ denote the conditional probability that $Y = +1$ given $X = x$
with $\eta(x) = \frac12 - \e$, for $\e \in (0, \frac12)$. By the proof
of Theorem~\ref{thm:H-consistency-binary}, the best-in-class error for
the zero-one loss can be expressed as follows:
\begin{equation*}
\inf_{h \in \sH} \sR_{\ell_{0-1}}(h) = \eta(x),
\end{equation*}
which can be achieved by any $h^*_{\ell_{0-1}}$ such that
$h^*_{\ell_{0-1}}(x) < 0$, that is a hypothesis \emph{all-negative} on
$x$. For the cost-sensitive loss function $\sfL_{c_{+}, c_{-}}$, the
generalization error can be expressed as follows:
\begin{align*}
\sR_{\sfL_{c_{+}, c_{-}}}(h) = \eta(x) c_{+} \1_{h(x) < 0} + (1 - \eta(x)) c_{-} \1_{h(x) \geq 0}.
\end{align*}
Thus, for any $c_{+} > c_{-} > 0$, there exists $\e \in (0, \frac12)$
such that the following holds:
\begin{align*}
(1 - \eta(x)) c_{-} < \eta(x) c_{+} 
& \iff \frac{\frac12 + \e}{\frac12 - \e} < \frac{c_{+}}{c_{-}}\\
& \iff 0 < \e < \frac{\frac12 c_{+} - \frac12 c_{-}}{c_{+} + c_{-}} < \frac12,
\end{align*}
where we used the fact that $x \mapsto (1 - x) / x = 1/x - 1$ is a
bijection from $(0, 1]$ to $[0, +\infty)$.  For this $\e$, the
    best-in-class error of $\sfL_{c_{+}, c_{-}}$ is
\begin{equation*}
\inf_{h \in \sH} \sR_{\sfL_{c_{+}, c_{-}}}(h) = \paren*{1 - \eta(x)} c_{-},
\end{equation*}
which can be achieved by any \emph{all-positive} $h^*_{\sfL_{c_{+},
    c_{-}}}$ such that $h^*_{\sfL_{c_{+}, c_{-}}}(x) \geq 0$. Thus,
$h^*_{\sfL_{c_{+}, c_{-}}}$ differs from $h^*_{\ell_{0-1}}$, which
implies that $\sfL_{c_{+}, c_{-}}$ is not Bayes-consistent with
respect to $\ell_{0-1}$.
\end{proof}

\section{Binary Classification: Proofs}
\label{app:binary}

\subsection{Proof of Lemma~\ref{lemma:margin-loss}}
\label{app:margin-loss}

\MarginLoss*
\begin{proof}
When $y h(x) \leq 0$, we have $\Phi_{\rho_{+}}(y h(x)) = \Phi_{\rho_{-}}(y h(x)) = 1$, so the equality holds. When $y h(x) > 0$, we have $y > 0 \iff h(x) > 0$ and $y < 0 \iff h(x) < 0$, which also implies the equality.
\end{proof}

\subsection{Proof of Theorem~\ref{thm:H-consistency-binary}}
\label{app:H-consistency-binary}

\HConsistencyBinary*
\begin{proof}
Let $\eta(x) = \Pr \bracket*{Y = +1 \mid X = x}$ denote the conditional probability that $Y = +1$ given $X = x$. Without loss of generality, assume $\eta(x) \in [0, \frac12]$.
Then, the conditional error and the best-in-class conditional error of the zero-one loss can be expressed as follows:
\begin{align*}
\E_{y} \bracket*{\ell_{0-1}(h, x, y) \mid x} &=  \eta(x) \1_{h(x) < 0} + \paren*{1 - \eta(x)} \1_{h(x) \geq 0}\\
\inf_{h \in \sH }\E_{y} \bracket*{\ell_{0-1}(h, x, y) \mid x} &= \min \curl*{\eta(x), 1 - \eta(x)} = \eta(x).
\end{align*}
Furthermore, the difference between the two terms is given by:
\begin{align*}
\E_{y} \bracket*{\ell_{0-1}(h, x, y) \mid x} - \inf_{h \in \sH }\E_{y} \bracket*{\ell_{0-1}(h, x, y) \mid x} = \begin{cases}
1 - 2\eta(x) & h(x) \geq 0\\
0 & h(x) < 0
\end{cases}
\end{align*}
For the class-imbalanced margin loss, the conditional error can be expressed as follows:
\begin{align*}
\E_{y} \bracket*{\sfL_{\rho_{+}, \rho_{-}}(h, x, y) \mid x}
&=  \eta(x) \Phi_{\rho_{+}}(h(x))  + \paren*{1 - \eta(x)} \Phi_{\rho_{-}}(-h(x)) \\
& = \eta(x) \min \paren*{1, \max \paren*{0, 1 - \frac{h(x)}{\rho_{+}}}} + \paren*{1 - \eta(x)} \min \paren*{1, \max \paren*{0, 1 + \frac{h(x)}{\rho_{-}}}}\\
& = \begin{cases}
1 - \eta(x) & h(x) \geq \rho_{+}\\
\eta(x)\paren*{1 - \frac{h(x)}{\rho_{+}}} + \paren*{1 - \eta(x)} & \rho_{+} > h(x) \geq 0\\
\eta(x) + \paren*{1 - \eta(x)} \paren*{1 + \frac{h(x)}{\rho_{-}}} &  -\rho_{-} \leq h(x) < 0\\
 \eta(x) & h(x) < -\rho_{-}.
\end{cases}
\end{align*}
Thus, the best-in-class conditional error can be expressed as follows:
\begin{equation*}
\inf_{h \in \sH } \E_{y} \bracket*{\sfL_{\rho_{+}, \rho_{-}}(h, x, y) \mid x} = \min\curl*{\eta(x), 1 - \eta(x)} = \eta(x)
\end{equation*}
Consider the case where $h(x) \geq 0$. The difference between the two terms is given by:
\begin{align*}
\E_{y} \bracket*{\sfL_{\rho_{+}, \rho_{-}}(h, x, y) \mid x} - \inf_{h \in \sH }\E_{y} \bracket*{\sfL_{\rho_{+}, \rho_{-}}(h, x, y) \mid x} &= \begin{cases}
1 - 2\eta(x) & h(x) \geq \rho_{+}\\
\eta(x)\paren*{1 - \frac{h(x)}{\rho_{+}}} + 1 - 2\eta(x) & \rho_{+} > h(x) \geq 0
\end{cases}\\
& \geq 1 - 2\eta(x)\\
& = \E_{y} \bracket*{\ell_{0-1}(h, x, y) \mid x} - \inf_{h \in \sH }\E_{y} \bracket*{\ell_{0-1}(h, x, y) \mid x}.
\end{align*}
By taking the expectation of both sides, we obtain:
\begin{equation*}
\sR_{\ell_{0-1}}(h) - \sR^*_{\ell_{0-1}}(\sH) + \sM_{\ell_{0-1}}(\sH) \leq \sR_{\sfL_{\rho_{+}, \rho_{-}}}(h) - \sR^*_{\sfL_{\rho_{+}, \rho_{-}}}(\sH) + \sM_{\sfL_{\rho_{+}, \rho_{-}}}(\sH),
\end{equation*}
which completes the proof.
\end{proof}

\subsection{Proof of Theorem~\ref{thm:margin-bound-binary}}
\label{app:margin-bound-binary}

\MarginBoundBinary*
\begin{proof}
Consider the family of functions taking values in $[0, 1]$:
\begin{equation*}
\wt \sH = \curl*{z = (x, y) \mapsto \sfL_{\rho_{+}, \rho_{-}}(h, x, y) \colon h \in \sH}.
\end{equation*}
By \citep[Theorem~3.3]{MohriRostamizadehTalwalkar2018}, with probability at least $1 - \delta$, for all $g \in \wt \sH$,
\begin{equation*}
\E[g(z)] \leq \frac{1}{m} \sum_{i = 1}^m g(z_i) + 2 \Rad_{m}(\wt \sH) + \sqrt{\frac{\log \frac{1}{\delta}}{2m}},
\end{equation*}
and thus, for all $h \in \sH$,
\begin{equation*}
\E[\sfL_{\rho_{+}, \rho_{-}}(h, x, y)] \leq \h \sR_{S}^{\rho_{+}, \rho_{-}}(h) + 2 \Rad_{m}(\wt \sH) + \sqrt{\frac{\log \frac{1}{\delta}}{2m}}.
\end{equation*}
Since $\sR_{\ell_{0-1}}(h) \leq \sR_{\sfL_{\rho_{+}, \rho_{-}}}(h) = \E[\sfL_{\rho_{+}, \rho_{-}}(h, x, y)]$, we have
\begin{equation*}
\sR_{\ell_{0-1}}(h) \leq \h \sR_{S}^{\rho_{+}, \rho_{-}}(h) + 2 \Rad_{m}(\wt \sH) + \sqrt{\frac{\log \frac{1}{\delta}}{2m}}.
\end{equation*}
Since $\Phi_{\rho}$ is $\frac{1}{\rho}$-Lipschitz, by \citep[Lemma~5.7]{MohriRostamizadehTalwalkar2018}, $\Rad_{m}(\wt \sH)$ can be rewritten as follows:
\begin{align*}
\Rad_{m}(\wt \sH) &= \frac{1}{m} \E_{S, \sigma}\bracket*{\sup_{h \in \sH} \sum_{i = 1}^m \sigma_i \sfL_{\rho_{+}, \rho_{-}}(h, x_i, y_i)}\\
&= \frac{1}{m} \E_{S, \sigma}\bracket*{\sup_{h \in \sH} \sum_{i = 1}^m \sigma_i \bracket*{\Phi_{\rho_{+}}(y_i h(x_i)) \1_{y_i  = +1} + \Phi_{\rho_{-}}(y_i h(x_i)) \1_{y_i  = -1}}}\\
&\leq \frac{1}{m} \E_{S, \sigma}\bracket*{\sup_{h \in \sH} \curl*{ \frac{1}{\rho_{+}} \paren*{\sum_{i \in I_+} \sigma_i h(x_i)} + \frac{1}{\rho_{-}} \paren*{\sum_{i \in I_{-}} -\sigma_i h(x_i)}}}\\
&= \Rad_{m}^{\rho_{+}, \rho_{-}}(\sH),
\end{align*}
where the last equality stems from the fact that the variables $\sigma_i$ and $-\sigma_i$ are distributed in the same way.
This proves the first inequality. The second inequality, can be derived in the same way by using the second inequality of \citep[Theorem~3.3]{MohriRostamizadehTalwalkar2018}.
\end{proof}

\subsection{Uniform Margin Bound for Imbalanced Binary Classification}
\label{app:margin-bound-binary-uniform}

\begin{theorem}[Uniform margin bound for imbalanced binary classification]
\label{thm:margin-bound-binary-uniform}
Let $\sH$ be a set of real-valued functions. Fix $r_{+} > 0$ and $r_{-} > 0$. Then, for any $\delta > 0$, with probability at least $1 - \delta$, each of the following holds for all $h \in \sH$, $\rho_{+} \in (0, r_{+}]$ and $\rho_{-} \in (0, r_{-}]$:
\begin{align*}
\sR_{\ell_{0-1}}(h) &\leq \h \sR_{S}^{\rho_{+}, \rho_{-}}(h) + 4 \Rad_{m}^{\rho_{+}, \rho_{-}}(\sH) + \sqrt{\frac{\log \log_2 \frac{2r_{+}}{\rho_{+}}}{m}} + \sqrt{\frac{\log \log_2 \frac{2r_{-}}{\rho_{-}}}{m}} + \sqrt{\frac{\log \frac{4}{\delta}}{2m}}\\
\sR_{\ell_{0-1}}(h) &\leq \h \sR_{S}^{\rho_{+}, \rho_{-}}(h) + 4 \h \Rad_{S}^{\rho_{+}, \rho_{-}}(\sH) + \sqrt{\frac{\log \log_2 \frac{2r_{+}}{\rho_{+}}}{m}} + \sqrt{\frac{\log \log_2 \frac{2r_{-}}{\rho_{-}}}{m}} + 3 \sqrt{\frac{\log \frac{8}{\delta}}{2m}}.
\end{align*}
\end{theorem}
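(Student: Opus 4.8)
The plan is to derive Theorem~\ref{thm:margin-bound-binary-uniform} from the fixed-margin bound of Theorem~\ref{thm:margin-bound-binary} by a standard geometric-covering plus union-bound argument, carried out on the two parameters $\rho_{+}$ and $\rho_{-}$ simultaneously. Concretely, I would fix the grids $\curl*{2^{-j} r_{+} \colon j \geq 1}$ and $\curl*{2^{-k} r_{-} \colon k \geq 1}$, apply Theorem~\ref{thm:margin-bound-binary} at each grid pair $\paren*{2^{-j} r_{+}, 2^{-k} r_{-}}$ with confidence parameter $\delta_{j,k} = \delta\, q_j q_k$ where $q_j = \tfrac{6}{\pi^2 j^2}$ (so that $\sum_{j \geq 1} q_j = 1$ and $\sum_{j,k \geq 1} \delta_{j,k} = \delta$), and take a union bound: with probability at least $1 - \delta$, both inequalities of Theorem~\ref{thm:margin-bound-binary} hold simultaneously for every $h \in \sH$ and every grid pair.

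Given an arbitrary target $\rho_{+} \in (0, r_{+}]$ and $\rho_{-} \in (0, r_{-}]$, I would pick $j = \max\paren*{1, \ceil*{\log_2(r_{+}/\rho_{+})}}$, $k = \max\paren*{1, \ceil*{\log_2(r_{-}/\rho_{-})}}$, and set $\tilde\rho_{+} = 2^{-j} r_{+}$, $\tilde\rho_{-} = 2^{-k} r_{-}$; these satisfy $\rho_{+}/2 \leq \tilde\rho_{+} \leq \rho_{+}$, $\rho_{-}/2 \leq \tilde\rho_{-} \leq \rho_{-}$ and $1 \leq j \leq \log_2(2 r_{+}/\rho_{+})$, $1 \leq k \leq \log_2(2 r_{-}/\rho_{-})$. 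The grid bound at $(\tilde\rho_{+}, \tilde\rho_{-})$ is then transferred to $(\rho_{+}, \rho_{-})$ in three steps. (i) Since $u \mapsto \Phi_{\rho}(u)$ is non-decreasing in $\rho$ (it equals $1$ for $u \leq 0$ and increases in $\rho$ for $u > 0$), $\tilde\rho_{+} \leq \rho_{+}$ and $\tilde\rho_{-} \leq \rho_{-}$ give $\sfL_{\tilde\rho_{+}, \tilde\rho_{-}}(h, x, y) \leq \sfL_{\rho_{+}, \rho_{-}}(h, x, y)$ pointwise, hence $\h\sR_{S}^{\tilde\rho_{+}, \tilde\rho_{-}}(h) \leq \h\sR_{S}^{\rho_{+}, \rho_{-}}(h)$. (ii) Writing $\tfrac{1}{\tilde\rho_{+}} = \tfrac{1+u}{\rho_{+}}$ and $\tfrac{1}{\tilde\rho_{-}} = \tfrac{1+v}{\rho_{-}}$ with $u, v \in [0,1]$, and $A_h = \sum_{i \in I_{+}} \sigma_i h(x_i)$, $B_h = \sum_{i \in I_{-}} \sigma_i h(x_i)$, I would split the supremum defining $\Rad_{m}^{\tilde\rho_{+}, \tilde\rho_{-}}(\sH)$ into the part with coefficients $\tfrac{1}{\rho_{+}}, \tfrac{1}{\rho_{-}}$ (which is exactly $\Rad_{m}^{\rho_{+}, \rho_{-}}(\sH)$ after taking the sample expectation) plus the perturbation part with coefficients $\tfrac{u}{\rho_{+}}, \tfrac{v}{\rho_{-}}$; using the sign-flip symmetry of the Rademacher variables on $I_{-}$, which yields $2\,\E_{\sigma}\bracket*{\sup_h \curl*{\tfrac{A_h}{\rho_{+}} + \tfrac{B_h}{\rho_{-}}}} \geq 2\,\E_{\sigma}\bracket*{\sup_h \tfrac{A_h}{\rho_{+}}}$ (and likewise flipping $I_{+}$), one bounds the perturbation term by $\Rad_{m}^{\rho_{+}, \rho_{-}}(\sH)$ as well, so $\Rad_{m}^{\tilde\rho_{+}, \tilde\rho_{-}}(\sH) \leq 2 \Rad_{m}^{\rho_{+}, \rho_{-}}(\sH)$; this is why the coefficient $2$ of Theorem~\ref{thm:margin-bound-binary} becomes $4$. (iii) Since $\log\tfrac{1}{\delta_{j,k}} \leq \log\tfrac{4}{\delta} + 2\log j + 2\log k$, subadditivity of $\sqrt{\cdot}$ together with $j \leq \log_2(2 r_{+}/\rho_{+})$ and $k \leq \log_2(2 r_{-}/\rho_{-})$ turns the confidence term into $\sqrt{\tfrac{\log(4/\delta)}{2m}} + \sqrt{\tfrac{\log\log_2(2 r_{+}/\rho_{+})}{m}} + \sqrt{\tfrac{\log\log_2(2 r_{-}/\rho_{-})}{m}}$. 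Assembling these pieces gives the first inequality.

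The second inequality is proved identically, starting from the empirical-Rademacher version of Theorem~\ref{thm:margin-bound-binary} and using the analogous bound $\h\Rad_{S}^{\tilde\rho_{+}, \tilde\rho_{-}}(\sH) \leq 2\,\h\Rad_{S}^{\rho_{+}, \rho_{-}}(\sH)$ (same symmetrization argument on the empirical complexity); the extra constants ($3$ and $\log\tfrac{8}{\delta}$) simply inherit those of the fixed empirical bound after the $\delta \to \delta_{j,k}$ substitution and the $\sqrt{\cdot}$-subadditivity step. The one genuinely delicate point — everything else being routine bookkeeping — is step (ii): because the chosen grid point $(\tilde\rho_{+}, \tilde\rho_{-})$ does not preserve the ratio $\rho_{+} : \rho_{-}$, one cannot simply rescale the class-sensitive Rademacher complexity by a scalar, so the off-diagonal perturbation $(u,v)$ must be controlled through the Rademacher sign symmetry rather than by monotonicity in the relevant variable (which fails). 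The two-dimensional product union bound with the summable weights $q_j q_k$ is the other place where care is needed so the $\log\log$ penalties come out in the stated form.
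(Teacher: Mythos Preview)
Your proof is correct and takes essentially the same route as the paper: a dyadic grid in $(\rho_+,\rho_-)$, a union bound over the grid points, and a snap-to-grid step using monotonicity of the empirical margin loss and of the class-sensitive Rademacher complexity. The only cosmetic difference is that the paper performs the union bound \emph{sequentially} (first over $\rho_+$ with $\rho_-$ fixed, then over $\rho_-$) rather than simultaneously via product weights $q_jq_k$; in both cases the passage from $\Rad_m^{\tilde\rho_+,\tilde\rho_-}$ to $2\,\Rad_m^{\rho_+,\rho_-}$ rests on the coordinatewise monotonicity of $(\rho_+,\rho_-)\mapsto\Rad_m^{\rho_+,\rho_-}$, which the paper uses implicitly (writing $\Rad_m^{\rho_+/2,\rho_-/2}=2\,\Rad_m^{\rho_+,\rho_-}$) and which is most cleanly obtained from your sign-flip observation by noting that $(s,t)\mapsto\E_\sigma\bigl[\sup_h\{sA_h+tB_h\}\bigr]$ is convex and even in each variable, hence non-decreasing in $|s|$ and $|t|$.
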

\begin{proof}
First, consider two sequences $\paren*{\rho_{+}^k}_{k \geq 1}$ and $\paren*{\e_k}_{k \geq 1}$, with $\e_k \in (0, 1]$. By Theorem~\ref{thm:margin-bound-binary}, for any fixed $k \geq 1$ and $\rho_{-} > 0$,
\begin{align*}
\Pr \bracket*{\sup_{h \in \sH} \sR_{\ell_{0-1}}(h) - \h \sR_{S}^{\rho_{+}^k, \rho_{-}}(h) > 2 \Rad_{m}^{\rho_{+}^k, \rho_{-}}(\sH) + \e_k} \leq e^{-2m \e_k^2}.
\end{align*}
Choosing $\e_k = \e + \sqrt{\frac{\log k}{m}}$, then, by the union bound, the following holds for any fixed $\rho_{-} > 0$:
\begin{align*}
& \Pr \bracket*{\sup_{\substack{h \in \sH\\ k\geq 1}} \sR_{\ell_{0-1}}(h) - \h \sR_{S}^{\rho_{+}^k, \rho_{-}}(h) - 2 \Rad_{m}^{\rho_{+}^k, \rho_{-}}(\sH) - \e_k > 0} \\
& \leq \sum_{k \geq 1}  e^{-2m \e_k^2} = \sum_{k \geq 1} \exp^{-2m \paren*{\e + \sqrt{\frac{\log k}{m}}}^2} \leq \sum_{k \geq 1} e^{-2m \e^2} e^{-2\log k} = \paren*{\sum_{k \geq 1} 1 / k^2} e^{-2m \e^2} \ignore{= \frac{\pi^2}{6} e^{-2m \e^2}} \leq 2 e^{-2m \e^2}.
\end{align*}
We can choose $\rho_{+}^k = r_{+} / 2^k$. For any $\rho_{+} \in (0, r_{+} ]$, there exists $k \geq 1$ such that $\rho_{+} \in (\rho_{+}^k, \rho_{+}^{k - 1}]$, with $\rho_{+}^0 = r_{+}$. For that $k$, $\rho_{+} \leq \rho_{+}^{k - 1} = 2 \rho_{+}^k$, thus $1 / \rho_{+}^k \leq 2 / \rho_{+}$ and $\sqrt{\log k} = \sqrt{\log \log_2 (r_{+} / \rho_{+}^k)} \leq \sqrt{\log \log_2 (2 r_{+} / \rho_{+})}$. Furthermore, for any $h \in \sH$ and $\rho_{-} > 0$, $\h \sR_{S}^{\rho_{+}^k, \rho_{-}}(h) \leq \h \sR_{S}^{\rho_{+}, \rho_{-}}(h)$. Thus, the following inequality holds for any fixed $\rho_{-} > 0$:
\begin{equation}
\label{eq:aux}
\begin{aligned}
\Pr \bracket[\bigg]{\sup_{\substack{h \in \sH\\ \rho_{+} \in (0, r_{+}]}} \sR_{\ell_{0-1}}(h) - \h \sR_{S}^{\rho_{+}, \rho_{-}}(h) - 2 \Rad_{m}^{\rho_{+}/2, \rho_{-}}(\sH) - \sqrt{\frac{\log \log_2 (2 r_{+} / \rho_{+})}{m}} - \e > 0} \leq 2 e^{-2m \e^2}.
\end{aligned}
\end{equation}
Next, consider two sequences $\paren*{\rho_{-}^l}_{l \geq 1}$ and $\paren*{\e_l}_{l \geq 1}$, with $\e_l \in (0, 1]$. By inequality \eqref{eq:aux}, for any fixed $l \geq 1$,
\begin{align*}
\Pr \bracket[\bigg]{\sup_{\substack{h \in \sH\\ \rho_{+} \in (0, r_{+}]}} \sR_{\ell_{0-1}}(h) - \h \sR_{S}^{\rho_{+}, \rho_{-}^l}(h) - 2 \Rad_{m}^{\rho_{+}/2, \rho_{-}^l}(\sH) - \sqrt{\frac{\log \log_2 (2 r_{+} / \rho_{+})}{m}} - \e_l > 0} \leq 2 e^{-2m \e_l^2}.
\end{align*}
Choosing $\e_l = \e + \sqrt{\frac{\log l}{m}}$, then, by the union bound, the following holds:
\begin{align*}
& \Pr \bracket[\bigg]{\sup_{\substack{h \in \sH\\ \rho_{+} \in (0, r_{+}] \\ l \geq 1}} \sR_{\ell_{0-1}}(h) - \h \sR_{S}^{\rho_{+}, \rho_{-}^l}(h) - 2 \Rad_{m}^{\rho_{+}/2, \rho_{-}^l}(\sH) - \sqrt{\frac{\log \log_2 (2 r_{+} / \rho_{+})}{m}} - \e_l > 0}\\
& \qquad \leq \sum_{l \geq 1} 2 e^{-2m \e_l^2} = 2 \sum_{l \geq 1} \exp^{-2m \paren*{\e + \sqrt{\frac{\log l}{m}}}^2} \leq 2 \sum_{l \geq 1} e^{-2m \e^2} e^{-2\log l} = 2 \paren*{\sum_{l \geq 1} 1 / l^2} e^{-2m \e^2} \leq 4 e^{-2m \e^2}.
\end{align*}
We can choose $\rho_{-}^l = r_{-} / 2^l$. For any $\rho_{-} \in (0, r_{-} ]$, there exists $l \geq 1$ such that $\rho_{-} \in (\rho_{-}^l, \rho_{-}^{l - 1}]$, with $\rho_{-}^0 = r_{-}$. For that $l$, $\rho_{-} \leq \rho_{-}^{l - 1} = 2 \rho_{-}^l$, thus $1 / \rho_{-}^l \leq 2 / \rho_{-}$ and $\sqrt{\log l} = \sqrt{\log \log_2 (r_{-} / \rho_{-}^l)} \leq \sqrt{\log \log_2 (2 r_{-} / \rho_{-})}$. Furthermore, for any $h \in \sH$, $\h \sR_{S}^{\rho_{+}, \rho_{-}^l}(h) \leq \h \sR_{S}^{\rho_{+}, \rho_{-}}(h)$. Thus, the following inequality holds:
\begin{align*}
\Pr \bracket[\bigg]{& \sup_{\substack{h \in \sH\\ \rho_{+} \in (0, r_{+}] \\ \rho_{-} \in (0, r_{-}]}} \sR_{\ell_{0-1}}(h) - \h \sR_{S}^{\rho_{+}, \rho_{-}}(h) - 4 \Rad_{m}^{\rho_{+}, \rho_{-}}(\sH) - \sqrt{\frac{\log \log_2 (2 r_{+} / \rho_{+})}{m}} - \sqrt{\frac{\log \log_2 (2 r_{-} / \rho_{-})}{m}} - \e > 0}  \leq 4 e^{-2m \e^2},
\end{align*}
where we used the fact that $\Rad_{m}^{\rho_{+}/2, \rho_{-}/2}(\sH) = 2 \Rad_{m}^{\rho_{+}, \rho_{-}}(\sH)$.
This proves the first statement. The second statement can be proven in a similar way.
\end{proof}

\subsection{Linear Hypotheses}
\label{app:linear}

Combining Theorem~\ref{thm:rad-linear} and
Theorem~\ref{thm:margin-bound-binary} gives directly the following
general margin bound for linear hypotheses with bounded weighted
vectors.
\begin{corollary}
\label{cor:margin-bound-binary-lin}
Let $\sH = \curl*{x \mapsto w \cdot x \colon \norm*{w} \leq \Lambda}$
and assume $\sX \subseteq \curl*{x \colon \norm*{x} \leq r}$. Let
$r_{+} = \sup_{i \in I_+} \norm*{x_i}$ and $r_{-} = \sup_{i \in I_{-}}
\norm*{x_i}$. Fix $\rho_{+} > 0$ and $\rho_{-} > 0$, then, for any
$\delta > 0$, with probability at least $1 - \delta$ over the choice
of a sample $S$ of size $m$, the following holds for any $h \in \sH$:
\begin{align*}
\sR_{\ell_{0-1}}(h) 
& \leq \h \sR_{S}^{\rho_{+}, \rho_{-}}(h) + \frac{2 \Lambda}m \sqrt{\frac{m_{+} r_{+}^2}{\rho_{+}^2} + \frac{m_{-} r_{-}^2}{\rho_{-}^2}} + 3 \sqrt{\frac{\log \frac{2}{\delta}}{2m}}\\
& \leq \h \sR_{S}^{\rho_{+}, \rho_{-}}(h) + \frac{2 \Lambda r}{m} \sqrt{\frac{m_{+}}{\rho_{+}^2} + \frac{m_{-}}{\rho_{-}^2}} + 3 \sqrt{\frac{\log \frac{2}{\delta}}{2m}}.
\end{align*}
\end{corollary}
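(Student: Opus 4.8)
The plan is to prove this corollary exactly the way the sentence preceding it advertises: by composing the two results already in hand. The only real decision is which inequality of Theorem~\ref{thm:margin-bound-binary} to start from. Since the statement is phrased in terms of the sample-dependent quantities $r_{+} = \sup_{i \in I_+} \norm*{x_i}$, $r_{-} = \sup_{i \in I_{-}} \norm*{x_i}$, $m_{+}$ and $m_{-}$ — precisely the quantities controlled by the empirical complexity bound of Theorem~\ref{thm:rad-linear} — and since the confidence term $3\sqrt{\log(2/\delta)/(2m)}$ in the corollary matches the empirical inequality, I would invoke the second inequality of Theorem~\ref{thm:margin-bound-binary}: for any fixed $\rho_{+}, \rho_{-} > 0$ and any $\delta > 0$, with probability at least $1 - \delta$ over $S \sim \sD^m$, every $h \in \sH$ satisfies
\[
\sR_{\ell_{0-1}}(h) \le \h \sR_{S}^{\rho_{+}, \rho_{-}}(h) + 2\, \h \Rad_{S}^{\rho_{+}, \rho_{-}}(\sH) + 3 \sqrt{\tfrac{\log(2/\delta)}{2m}}.
\]

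Next I would substitute the bound of Theorem~\ref{thm:rad-linear}, namely $\h \Rad_{S}^{\rho_{+}, \rho_{-}}(\sH) \le \frac{\Lambda}{m} \sqrt{\frac{m_{+} r_{+}^2}{\rho_{+}^2} + \frac{m_{-} r_{-}^2}{\rho_{-}^2}}$, into the display above; the factor of $2$ carries through, giving the first inequality of the corollary. The second inequality then follows from the assumption $\sX \subseteq \curl*{x \colon \norm*{x} \le r}$, which forces $r_{+} \le r$ and $r_{-} \le r$, hence $\sqrt{\frac{m_{+} r_{+}^2}{\rho_{+}^2} + \frac{m_{-} r_{-}^2}{\rho_{-}^2}} \le r \sqrt{\frac{m_{+}}{\rho_{+}^2} + \frac{m_{-}}{\rho_{-}^2}}$ — this chained estimate is in fact already recorded as the second inequality of Theorem~\ref{thm:rad-linear}, so one can equivalently just quote that directly.

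There is essentially no obstacle here: the argument is a one-line chaining of two prior results, and the only care needed is bookkeeping — matching the $3\sqrt{\log(2/\delta)/(2m)}$ confidence term to the empirical (rather than the in-expectation) version of Theorem~\ref{thm:margin-bound-binary}, and observing that both inequalities of Theorem~\ref{thm:rad-linear} pass through unchanged to produce the two chained bounds stated. If instead one wanted the analogous statement with the expected complexity $\Rad_{m}^{\rho_{+}, \rho_{-}}(\sH)$ and the weaker confidence term $\sqrt{\log(1/\delta)/(2m)}$, one would first replace $r_{\pm}$ by the global radius $r$ and then take expectations over $S$, but that is not what the corollary asserts.
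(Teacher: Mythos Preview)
Your proposal is correct and matches the paper's own treatment exactly: the corollary is obtained by plugging the empirical class-sensitive Rademacher bound of Theorem~\ref{thm:rad-linear} into the second (empirical) inequality of Theorem~\ref{thm:margin-bound-binary}, and the paper says only that combining these two theorems ``gives directly'' the result. Your remark about matching the $3\sqrt{\log(2/\delta)/(2m)}$ term to the empirical version is the right bookkeeping observation.
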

Choosing $\Lambda = 1$, by the generalization of
Corollary~\ref{cor:margin-bound-binary-lin} to a uniform bound over
$\rho_{+} \in (0, r_{+}]$ and $\rho_{-} \in (0, r_{-}]$, for any
    $\delta > 0$, with probability at least $1 - \delta$, the
    following holds for all $h \in \curl*{x \mapsto w \cdot x \colon
      \norm*{w} \leq 1}$, $\rho_{+} \in (0, r_{+}]$ and $\rho_{-} \in
      (0, r_{-}]$:
\begin{equation}
\sR_{\ell_{0-1}}(h) \leq \h \sR_{S}^{\rho_{+}, \rho_{-}}(h)  + \frac{4 r}{m} \sqrt{\frac{m_{+}}{\rho_{+}^2} + \frac{m_{-}}{\rho_{-}^2}} + \sqrt{\frac{\log \log_2 \frac{2r_{+}}{\rho_{+}}}{m}} + \sqrt{\frac{\log \log_2 \frac{2r_{-}}{\rho_{-}}}{m}}  + 3\sqrt{\frac{\log \frac{8}{\delta}}{2m}}.
\end{equation}
\ignore{The inequality also trivially holds for $\rho_{+}$ larger than
  $r_{+}$ or $\rho_{-}$ larger than $r_{-}$, since in that case, by
  the Cauchy-Schwarz inequality, for any $w$ with $\norm*{w} \leq 1$,
  we have $\sup_{i \in I_+ }y_i (w \cdot x_i) \leq r_{+} \leq
  \rho_{+}$ and $\sup_{i \in I_{-} }y_i (w \cdot x_i) \leq r_{-} \leq
  \rho_{-}$. Thus, $\h \sR_{S}^{\rho_{+}, \rho_{-}}(h)$ is equal to
  one for all $h$.}

Now, for any $\rho > 0$, the
$\rho$-margin loss function is upper bounded by the $\rho$-hinge loss:
\begin{equation*}
  \forall u \in \Rset,\quad \Phi_{\rho}(u)
  = \min \paren*{1, \max \paren*{0, 1 - \frac{u}{\rho}}}
  \leq \max \paren*{0, 1 - \frac{u}{\rho}}.
\end{equation*}
Thus, with probability at least $1 - \delta$, the following holds for
all $h \in \curl*{x \mapsto w \cdot x \colon \norm*{w} \leq 1}$,
$\rho_{+} \in (0, r_{+}]$ and $\rho_{-} \in (0, r_{-}]$:
\begin{equation}
\begin{aligned}
\sR_{\ell_{0-1}}(h) & \leq \frac{1}{m}  \bracket*{ \sum_{i \in I_+} \max \paren*{0, 1- \frac{y_i h(x_i)}{\rho_{+}}} + \sum_{i \in I_-} \max \paren*{0, 1 - \frac{y_i h(x_i)}{\rho_{-}}}}\\
& \qquad + \frac{4 r}{m} \sqrt{\frac{m_{+}}{\rho_{+}^2} + \frac{m_{-}}{\rho_{-}^2}}  + \sqrt{\frac{\log \log_2 \frac{2r_{+}}{\rho_{+}}}{m}} + \sqrt{\frac{\log \log_2 \frac{2r_{-}}{\rho_{-}}}{m}} + \sqrt{\frac{\log \frac{4}{\delta}}{2m}}.
\end{aligned}
\end{equation}
Since for any $\rho > 0$, $h / \rho$ admits the same generalization
error as $h$, with probability at least $1 - \delta$, the following
holds for all $h \in \curl*{x \mapsto w \cdot x \colon \norm*{w} \leq
  \frac{1}{\rho_{+} + \rho_{-}}}$, $\rho_{+} \in (0, r_{+}]$ and
  $\rho_{-} \in (0, r_{-}]$:
\begin{align*}
\sR_{\ell_{0-1}}(h) 
& \leq \frac{1}{m}  \bracket*{ \sum_{i \in I_+} \max \paren*{0, 1- y_i h(x_i) \paren*{\frac{\rho_{+}
+ \rho_{-}}{\rho_{+}}} } + \sum_{i \in I_-} \max \paren*{0, 1 - y_i h(x_i) \paren*{\frac{\rho_{+} + \rho_{-}}{\rho_{-}}}}}\\
&\qquad  + \frac{4 r}{m} \sqrt{\frac{m_{+}}{\rho_{+}^2} + \frac{m_{-}}{\rho_{-}^2}} + \sqrt{\frac{\log \log_2 \frac{2r_{+}}{\rho_{+}}}{m}} + \sqrt{\frac{\log \log_2 \frac{2r_{-}}{\rho_{-}}}{m}}  + \sqrt{\frac{\log \frac{4}{\delta}}{2m}}.
\end{align*}
Now, since only the first term of the right-hand side depends on $w$,
the bound suggests selecting $w$ as the solution of the following
optimization problem:
\begin{align*}
\min_{\norm*{w}^2 \leq \paren*{\frac{1}{\rho_{+} + \rho_{-}}}^2} \frac{1}{m}  \bracket*{ \sum_{i \in I_+} \max \paren*{0, 1- y_i h(x_i) \paren*{\frac{\rho_{+} + \rho_{-}}{\rho_{+}}} } + \sum_{i \in I_-} \max \paren*{0, 1 - y_i h(x_i) \paren*{\frac{\rho_{+} + \rho_{-}}{\rho_{-}}}}}.
\end{align*}
Introducing a Lagrange variable $\lambda \geq 0$ and a free variable
$\alpha = \frac{\rho_{+}}{\rho_{+} + \rho_{-}} > 0$, the optimization
problem can be written equivalently as
\begin{equation}
\label{eq:alg}
\min_{w} \lambda \norm*{w}^2 +  \frac{1}{m}  \bracket*{ \sum_{i \in I_+} \max \paren*{0, 1- y_i \frac{w \cdot x_i}{\alpha}} + \sum_{i \in I_-} \max \paren*{0, 1 - y_i \frac{w \cdot x_i}{1 - \alpha}}},   
\end{equation}
where $\lambda$ and $\alpha$ can be selected via cross-validation. The
resulting algorithm can be viewed as an extension of SVMs.

Note that while $\alpha$ can be freely searched over different values,
we can search near the optimal values found in the separable case in
\eqref{eq:optimal-separable}. Also, the solution can actually be
obtained using regular SVM by incorporating the $\alpha$ multipliers
into the feature vectors. Furthermore, we can replace the hinge loss
with a general margin-based loss function $\Psi \colon u \mapsto
\Rset_{+}$, and we can add a bias term $b > 0$ for the linear models
if the data is not normalized:
\begin{align}
\label{eq:alg-general}
\min_{w, b} \lambda \norm*{w}^2
+ \frac{1}{m}  \bracket*{ \sum_{i \in I_+} \Psi \paren*{y_i \frac{w \cdot x_i + b}{\alpha}}
  + \sum_{i \in I_-} \Psi \paren*{y_i \frac{w \cdot x_i + b}{1 - \alpha}}},
\end{align}
For example, $\Psi$ can be chosen as the logistic loss function $u
\mapsto \log_2(1 + e^{-u})$ or the exponential loss function $u
\mapsto e^{-u}$.

\subsection{Proof of Theorem~\ref{thm:rad-linear}}
\label{app:rad-linear}

\RadLinear*
\begin{proof}
The proof follows through a series of inequalities:
\begin{align*}
& \h \Rad_{S}^{\rho_{+}, \rho_{-}}(\sH)\\
& = \frac{1}{m} \E_{\sigma}\bracket*{\sup_{\norm*{w} \leq \Lambda} w \cdot \paren*{ \frac{1}{\rho_{+}} \paren*{\sum_{i \in I_+} \sigma_i x_i} + \frac{1}{\rho_{-}} \paren*{\sum_{i \in I_{-}} -\sigma_i x_i}}}\\
& \leq \frac{\Lambda}m \E_{\sigma}\bracket*{\norm*{ \frac{1}{\rho_{+}} \paren*{\sum_{i \in I_+} \sigma_i x_i} + \frac{1}{\rho_{-}} \paren*{\sum_{i \in I_{-}} -\sigma_i x_i}}} \leq \frac{\Lambda}m \bracket*{\E_{\sigma}\bracket*{\norm*{ \frac{1}{\rho_{+}} \paren*{\sum_{i \in I_+} \sigma_i x_i} + \frac{1}{\rho_{-}} \paren*{\sum_{i \in I_{-}} -\sigma_i x_i}}^2}}^{\frac12}\\ 
& \leq \frac{\Lambda}m \bracket*{\frac{1}{\rho_{+}^2} \sum_{i \in I_+} \norm*{x_i}^2 + \frac{1}{\rho_{-}^2} \sum_{i \in I_{-}} \norm*{x_i}^2}^{\frac12} \leq \frac{\Lambda}m \sqrt{\frac{m_{+} r_{+}^2}{\rho_{+}^2} + \frac{m_{-} r_{-}^2}{\rho_{-}^2}}  \leq \frac{\Lambda r}{m} \sqrt{\frac{m_{+}}{\rho_{+}^2} + \frac{m_{-}}{\rho_{-}^2}}.
\end{align*}
The first inequality makes use of the Cauchy-Schwarz inequality and the bound on $\norm*{w}$, the second follows by Jensen's inequality, the third by $\E[\sigma_i \sigma_j] = \E[\sigma_i] \E[\sigma_j] = 0$ for $i \neq j$, the fourth by $\sup_{i \in I_+} \norm*{x_i} = r_{+}$ and $\sup_{i \in I_{-}} \norm*{x_i} = r_{-}$, and the last one by $\norm*{x_i} \leq r$.
\end{proof}

\section{Extension to Multi-Class Classification}
\label{app:multiclass}

In this section, we extend the previous analysis and algorithm to
multi-class classification. We will adopt the same notation and
definitions as previously described, with some slight adjustments. In
particular, we denote the multi-class label space by $\sY = [c]
\coloneqq \curl*{1, \ldots, c}$ and a hypothesis set of functions
mapping from $\sX \times \sY$ to $\Rset$ by $\sH$. For a hypothesis $h
\in \sH$, the label $\hh(x)$ assigned to $x \in \sX$ is the one with
the largest score, defined as $\hh(x) = \argmax_{y \in \sY} h(x, y)$,
using the highest index for tie-breaking. For a labeled example $(x,
y) \in \sX \times \sY$, the \emph{margin} $\rho_h(x, y)$ of a
hypothesis $h \in \sH$ is given by $\rho_h(x, y) = h(x, y) - \max_{y'
  \neq y} h(x, y')$, which is the difference between the score
assigned to $(x, y)$ and that of the next-highest scoring label. We
define the multi-class zero-one loss function as
$\ell^{\rm{multi}}_{0-1} \coloneqq \1_{\hh(x) \neq y}$. This is the
target loss of interest in multi-class classification.

\subsection{Multi-Class Imbalanced Margin Loss}
\label{sec:imbalanced-loss-multi}

We first extend the class-imbalanced margin loss function to the multi-class
setting. To account for different confidence margins for instances
with different labels, we define the \emph{multi-class class-imbalanced
margin loss function} as follows:
\begin{definition}[Multi-class class-imbalanced margin loss]
For any $\brho = [\rho_k]_{k \in [c]}$, the \emph{multi-class
class-imbalanced $\brho$-margin loss} is the function $\sfL_{\brho} \colon
\sH_{\mathrm{all}} \times \sX \times \sY \to \Rset$, defined as
follows:
\begin{equation}
  \sfL_{\brho}(h, x, y) = \sum_{k = 1}^c
  \Phi_{\rho_k}\paren*{\rho_h(x, y)} \1_{y  = k}.
\end{equation}
\end{definition}
The main margin bounds in this section are expressed in terms of this
loss function. The parameters $\rho_k > 0$, for $k \in [c]$, represent
the confidence margins imposed by a hypothesis $h$ for instances
labeled $k$.  The following result provides an equivalent expression
for the class-imbalanced margin loss function. The proof is included in
Appendix~\ref{app:margin-loss-multi}.
\MarginLossMulti*

\subsection{\texorpdfstring{$\sH$}{H}-Consistency}
\label{sec:H-consistency-multi}

The following result provides a strong consistency guarantee for the
multi-class class-imbalanced margin loss introduced in relation to the
multi-class zero-one loss. We say a hypothesis set is complete when
the scoring values spanned by $\sH$ for each instance cover $\Rset$:
for all $(x, y) \in \sX \times \sY$, $\curl*{h(x, y) \colon h \in \sH}
= \Rset$.

\HConsistencyMulti*
The proof is included in Appendix~\ref{app:H-consistency-multi}. The
next section presents generalization bounds based on the empirical
multi-class class-imbalanced margin loss, along with the
\emph{$\brho$-class-sensitive Rademacher complexity} and its empirical
counterpart defined below. Given a sample $S = \paren*{x_1, \ldots,
  x_m}$, for any $k \in [c]$, we define $I_k = \curl*{i \in \curl*{1,
    \ldots, m} \mid y_i = k}$ and $m_{k} = |I_k|$ as the number of
instances labeled $k$.

\begin{definition}[$\brho$-class-sensitive Rademacher complexity]
Let $\sH$ be a family of functions mapping from $\sX \times \sY$ to
$\Rset$ and $S = \paren*{(x_1, y_1) \ldots, (x_m, y_m)}$ a fixed
sample of size $m$ with elements in $\sX \times \sY$. Fix $\brho =
[\rho_k]_{k \in [c]} > \mathbf{0}$. Then, the \emph{empirical
$\brho$-class-sensitive Rademacher complexity of $\sH$} with respect to
the sample $S$ is defined as:
\begin{equation}
  \h \Rad_{S}^{\brho}(\sH)
  = \frac{1}{m} \E_{\e}\bracket*{\sup_{h \in \sH}
    \curl*{ \sum_{k = 1}^c \sum_{i \in I_k}
      \sum_{y \in \sY} \e_{iy} \frac{h(x_i, y)}{\rho_{k}} }},
\end{equation}
where $\e = \paren*{\e_{i y}}_{i, y}$ with $\e_{iy}$s being
independent variables uniformly distributed over $\curl*{-1, +1}$.
For any integer $m \geq 1$, the \emph{$\brho$-class-sensitive
Rademacher complexity of $\sH$} is the expectation of the empirical
$\brho$-class-sensitive Rademacher complexity over all samples
of size $m$ drawn according to $\sD$: $\Rad_{m}^{\brho}(\sH) = \E_{S
  \sim \sD^m} \bracket*{\h \Rad_{S}^{\brho}(\sH)}$.
\end{definition}

\subsection{Margin-Based Guarantees}
\label{sec:margin-bound-multi}

Next, we will prove a general margin-based generalization bound, which
will serve as the foundation for deriving new algorithms for
imbalanced multi-class classification.

Given a sample $S = \paren*{x_1, \ldots, x_m}$ and a hypothesis $h$,
the \emph{empirical multi-class class-imbalanced margin loss} is defined by
$\h \sR_{S}^{\brho}(h) = \frac{1}{m} \sum_{i = 1}^m \sfL_{\brho}(h,
x_i, y_i)$. Note that the multi-class zero-one loss function
$\ell^{\rm{multi}}_{0-1}$ is upper bounded by the multi-class
class-imbalanced margin loss $\sfL_{\brho}$: $
\sR_{\ell^{\rm{multi}}_{0-1}}(h) \leq \sR_{\sfL_{\brho}}(h).  $

\begin{restatable}[Margin bound for imbalanced multi-class classification]
  {theorem}{MarginBoundMulti}
\label{thm:margin-bound-multi}
Let $\sH$ be a set of real-valued functions. Fix $\rho_{k} > 0$ for $k
\in [c]$, then, for any $\delta > 0$, with probability at least $1 -
\delta$, each of the following holds for all $h \in \sH$:
\begin{align*}
  \sR_{\ell^{\rm{multi}}_{0-1}}(h) &\leq \h \sR_{S}^{\brho}(h)
  + 4 \sqrt{2c}\, \Rad_{m}^{\brho}(\sH) + \sqrt{\frac{\log \frac{1}{\delta}}{2m}}\\
  \sR_{\ell^{\rm{multi}}_{0-1}}(h) &\leq \h \sR_{S}^{\brho}(h)
  + 4 \sqrt{2c}\, \h \Rad_{S}^{\brho}(\sH) + 3 \sqrt{\frac{\log \frac{2}{\delta}}{2m}}.
\end{align*}
\end{restatable}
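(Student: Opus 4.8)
The plan is to follow the standard two-step route for margin-based generalization bounds: first reduce the statement to a bound on the ordinary Rademacher complexity of the associated loss class, then control that complexity by the $\brho$-class-sensitive Rademacher complexity. Introduce the family $\wt\sH = \curl*{z = (x,y) \mapsto \sfL_{\brho}(h,x,y) : h \in \sH}$, whose elements take values in $[0,1]$. Applying the standard Rademacher generalization bound \citep[Theorem~3.3]{MohriRostamizadehTalwalkar2018} to $\wt\sH$ gives, with probability at least $1-\delta$ and for all $h \in \sH$, $\E[\sfL_{\brho}(h,x,y)] \leq \h\sR_S^{\brho}(h) + 2\Rad_m(\wt\sH) + \sqrt{\log(1/\delta)/(2m)}$, together with the analogous inequality using $2\h\Rad_S(\wt\sH)$ and $3\sqrt{\log(2/\delta)/(2m)}$. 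Since a misprediction $\hh(x) \neq y$ forces $\rho_h(x,y) \leq 0$ (with the highest-index tie-breaking convention), we have $\ell^{\rm{multi}}_{0-1}(h,x,y) \leq \Phi_{\rho_k}(\rho_h(x,y)) = \sfL_{\brho}(h,x,y)$ whenever $y = k$, hence $\sR_{\ell^{\rm{multi}}_{0-1}}(h) \leq \E[\sfL_{\brho}(h,x,y)]$; consequently both displayed bounds of the theorem follow once we show $\Rad_m(\wt\sH) \leq 2\sqrt{2c}\,\Rad_m^{\brho}(\sH)$ and its empirical analogue.

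The core of the proof is this contraction inequality, which I would establish conditionally on $S$. First, by Lemma~\ref{lemma:margin-loss-multi} (equivalently, the identity $\rho_h(x,k) = \min_{j \neq k}\paren*{h(x,k) - h(x,j)}$ together with the monotonicity of $\Phi_{\rho_k}$), rewrite $\sfL_{\brho}(h,x_i,y_i) = \max_{j \neq y_i}\Phi_{\rho_{y_i}}\paren*{h(x_i,y_i) - h(x_i,j)}$. Next, peel off the margin function: since each $\Phi_{\rho_k}$ is $\tfrac{1}{\rho_k}$-Lipschitz, a Talagrand-type contraction \citep[Lemma~5.7]{MohriRostamizadehTalwalkar2018} applied to $\sum_i \sigma_i \sfL_{\brho}(h,x_i,y_i)$ — after subtracting the constants $\Phi_{\rho_k}(0) = 1$, whose contribution is $h$-independent with zero Rademacher average — replaces each $\Phi_{\rho_{y_i}}$ by the identity and inserts the weight $\tfrac{1}{\rho_{y_i}}$, reducing matters to the Rademacher average of $\sum_i \sigma_i\, \rho_h(x_i,y_i)/\rho_{y_i}$. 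Writing $\rho_h(x_i,y_i) = h(x_i,y_i) - \max_{j \neq y_i} h(x_i,j)$ and using subadditivity of the supremum splits this into a ``diagonal'' term $\E_\sigma\bracket*{\sup_h \sum_i \sigma_i h(x_i,y_i)/\rho_{y_i}}$, which is at most $m\,\h\Rad_S^{\brho}(\sH)$ since inserting the remaining off-diagonal Rademacher variables $\e_{iy}$, $y \neq y_i$ (each with zero mean), only increases the quantity; and a term $\E_\sigma\bracket*{\sup_h \sum_i \sigma_i \max_{j \neq y_i} h(x_i,j)/\rho_{y_i}}$ in which the maximum over the $c-1$ competing labels is bounded, by a further contraction/sub-additivity argument over the label set, in terms of the full coordinate sum $\sum_{y \in \sY} h(x_i,y)/\rho_{y_i}$ — it is at this step that a factor scaling with $\sqrt{c}$ appears and that the $\sum_{y \in \sY}$ in the definition of $\Rad_m^{\brho}$ is produced. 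Collecting the two contributions yields $\Rad_m(\wt\sH) \leq 2\sqrt{2c}\,\Rad_m^{\brho}(\sH)$, and the identical chain of inequalities holds verbatim for the empirical complexities, giving the second bound.

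The main obstacle is precisely that last step: because the supremum over $h \in \sH$ entangles all $c$ class scores, one cannot split the sum over classes and apply a scalar contraction class by class, so one must invoke a vector-valued (Maurer-type) contraction or a careful union/sub-additivity argument over the label set, and keeping track of how the per-class confidence margins $\rho_k$ interact with the number of classes $c$ is the delicate accounting that produces the $\sqrt{2c}$ constant rather than a cruder $c$-dependence. A secondary, routine point is the bookkeeping for the nonzero offsets $\Phi_{\rho_k}(0) = 1$ when invoking the contraction lemma, and observing that the whole argument, being a statement about empirical Rademacher averages once $S$ is fixed, transfers unchanged to the second inequality, so that only the additive confidence term differs between the two displayed bounds.
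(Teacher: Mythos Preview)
Your proposal is correct and follows essentially the same route as the paper: apply the standard Rademacher bound to the loss class $\wt\sH$, use scalar Talagrand contraction to strip off $\Phi_{\rho_{y_i}}$, split $\rho_h(x_i,y_i)=h(x_i,y_i)-\max_{j\neq y_i}h(x_i,j)$, and control the off-diagonal $\max$ term via a vector-valued contraction (the paper invokes \citep[Lemma~5]{cortes2016structured}, which is precisely the Maurer-type lemma you anticipate). The only minor difference is that the paper handles the diagonal term by the \emph{same} vector contraction (the map $h\mapsto h(x_i,y_i)/\rho(i)$ is trivially $\sqrt{c}/\rho(i)$-Lipschitz in $\|\cdot\|_2$), incurring a $\sqrt{2c}$ factor there as well, whereas your Jensen-style ``insert the missing Rademacher variables'' argument bounds it directly by $\h\Rad_S^{\brho}(\sH)$ without the extra $\sqrt{2c}$; this is in fact slightly sharper but immaterial for the stated constant $4\sqrt{2c}$.
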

The proof is presented in Appendix~\ref{app:margin-bound-multi}.  As
in Theorem~\ref{thm:margin-bound-binary-uniform}, these bounds can be
generalized to hold uniformly for all $\rho_k \in (0, 1]$, at the cost
  of additional terms $\sqrt{\frac{\log \log_2
      \frac{2}{\rho_{k}}}{m}}$ for $k \in [c]$, as shown in
  Theorem~\ref{thm:margin-bound-multi-uniform} in
  Appendix~\ref{app:margin-bound-multi-uniform}.

As for margin bounds in imbalanced binary classification, they show
the conflict between two terms: the larger the desired margins
$\brho$, the smaller the second term, at the price of a larger
empirical multi-class class-imbalanced margin loss $\h
\sR_{S}^{\brho}$. Note, however, that here there is additionally a
dependency on the number of classes $c$. This suggests either weak
guarantees when learning with a large number of classes or the need
for even larger margins $\brho$ for which the empirical multi-class
class-imbalanced margin loss would be small.

\subsection{General Multi-Class Classification Algorithms}
\label{sec:algorithms-multi}

Here, we derive \IMMAX\  algorithms for multi-class classification in
imbalanced settings, building on the theoretical analysis from the
previous section.

Let $\Phi$ be a feature mapping from $\sX \times \sY$ to
$\Rset^{d}$. Let $S \subseteq \curl*{(x, y) \colon \norm*{\Phi(x, y)}
  \leq r}$ denote a sample of size $m$, for some appropriate norm
$\norm*{\, \cdot \,}$ on $\Rset^d$. Define $r_{k} = \sup_{i \in I_k, y
  \in \sY} \norm*{\Phi(x_i, y)}$, for any $k \in [c]$. As in the
binary case, we assume that the empirical class-sensitive Rademacher
complexity $\h \Rad_{S}^{\brho}(\sH)$ can be bounded as:
\begin{equation*}
  \h \Rad_{S}^{\brho}(\sH)
  \leq \frac{\Lambda_{\sH} \sqrt{c}}{m} \sqrt{\sum_{k = 1}^c \frac{m_k r_{k}^2}{\rho_k^2}}
  \leq \frac{\Lambda_{\sH} r \sqrt{c}}{m} \sqrt{\sum_{k = 1}^c \frac{m_k}{\rho_k^2}},
\end{equation*}
where $\Lambda_{\sH}$ depends on the complexity of the hypothesis set
$\sH$. This bound holds for many commonly used hypothesis sets.  For a
family of neural networks, $\Lambda_\sH$ can be expressed as a
Frobenius norm
\citep{CortesGonzalvoKuznetsovMohriYang2017,NeyshaburTomiokaSrebro2015}
or spectral norm complexity with respect to reference weight matrices
\cite{BartlettFosterTelgarsky2017}. Additionally,
Theorems~\ref{thm:rad-kernel-1} and \ref{thm:rad-kernel-2} in
Appendix~\ref{app:linear-multi} address kernel-based hypotheses. More
generally, for the analysis that follows, we will assume that $\sH$
can be defined by $\sH = \curl*{h \in \ov \sH \colon \norm{h} \leq
  \Lambda_\sH}$, for some appropriate norm $\norm*{\, \cdot \,}$ on
some space $\ov \sH$.  Combining such an upper bound and
Theorem~\ref{thm:margin-bound-multi} or
Theorem~\ref{thm:margin-bound-multi-uniform}, gives directly the
following general margin bound:
\begin{align*}
  \sR_{\ell^{\rm{multi}}_{0-1}}(h) &\leq \h \sR_{S}^{\brho}(h)
  + \frac{4 \sqrt{2} \Lambda_{\sH} r  c }{m}
  \sqrt{\sum_{k = 1}^c \frac{m_k}{\rho_k^2}} + O\paren*{\frac{1}{\sqrt{m}}},
\end{align*}
where the last term includes the $\log$-$\log$ terms and the
$\delta$-confidence term. Let $\Psi$ be a non-increasing convex
function such that $\Phi_{\rho}(u) \leq
\Psi\left(\frac{u}{\rho}\right)$ for all $u \in \Rset$. Then, since
$\Phi_\rho$ is non-increasing, for any $(x, k)$, we have:
$\Phi_{\rho}(\rho_h(x, k))
 = \max_{j \neq k} \Phi_{\rho}(h(x, k) - h(x, j)).$
This
suggests a regularization-based algorithm of the following form:
\begin{equation}
\min_{h \in \ov \sH} \lambda \norm*{h}^2
+ \frac{1}{m} \bracket*{\sum_{k = 1}^c
  \sum_{i \in I_k} \max_{j \neq k}
  \Psi \paren*{\tfrac{h(x, k) - h(x, j)}{\rho_k}}},
\end{equation}
where $\lambda$ and $\rho_k$s are chosen via cross-validation.  In
particular, choosing $\Psi$ to be the logistic loss and upper-bounding
the maximum by a sum yields the following form for our
\IMMAX\ (\emph{Imbalanced Margin Maximization}) algorithm:
\begin{equation}
\min_{h \in \ov \sH} \lambda \norm*{h}^2
+ \frac{1}{m} \sum_{k = 1}^c
\sum_{i \in I_k} \mspace{-2mu} \log \bracket*{\sum_{j = 1}^c
  \exp\paren*{\tfrac{h(x_i, j) - h(x_i, k)}{\rho_{k}}}},
\end{equation}
where $\lambda$ and $\rho_k$s are chosen via cross-validation.
Let $\rho = \sum_{k = 1}^c \rho_k$ and $\ov r = \bracket*{\sum_{k =
    1}^c m_k^{\frac{1}{3}} r_{k, 2}^{\frac{2}{3}}}^{\frac{3}{2}}$.
Using Lemma~\ref{lemma:D3} (Appendix~\ref{app:lemma}), the expression
under the square root in the second term of the generalization bound
can be reformulated in terms of the R\'enyi divergence of order 3 as:
$\sum_{k = 1}^c \frac{m_k r_{k, 2}^2}{\rho_k^2} = \frac{\ov
  r^2}{\rho^2} e^{2 \sfD_3\paren*{\sfr \, \| \, \frac{\brho}{\rho}
}}$, where $\sfr = \bracket[\bigg]{ \frac{m_k^{\frac{1}{3}} r_{k,
      2}^{\frac{2}{3}}} {\ov r^{\frac{2}{3}}}}_{k}$.  Thus, while
$\rho_k$s can be freely searched over a range of values in our general
algorithm, it may be beneficial to focus the search for the vector
$[\rho_k/\rho]_k$ near $\sfr$. This strictly generalizes our binary
classification results and the analysis of the separable case.

When the number of classes $c$ is very large, the search space can be
further reduced by constraining the $\rho_k$ values for
underrepresented classes to be identical and allowing distinct
$\rho_k$ values only for the most frequently occurring classes.

\section{Multi-Class Classification: Proofs}
\label{app:multi}

\subsection{Proof of Lemma~\ref{lemma:margin-loss-multi}}
\label{app:margin-loss-multi}

\MarginLossMulti*
\begin{proof}
When $\rho_h(x, y) \leq 0$, we have $\Phi_{\rho_k}\paren*{\rho_h(x, y)} = 1$ for any $k \in [c]$, so the equality holds. When $\rho_h(x, y) > 0$, we have $y = k \iff \rho_h(x, k) > 0 \iff \hh(x) = k$, which also implies the equality.
\end{proof}

\subsection{Proof of Theorem~\ref{thm:H-consistency-multi}}
\label{app:H-consistency-multi}

\HConsistencyMulti*
\begin{proof}
Let $p(y \!\mid\! x) = \mathbb{P}(Y = y \!\mid\! X = x)$ denote the
conditional probability that $Y = y$ given $X = x$. Then, the
conditional error and the best-in-class conditional error of the
zero-one loss can be expressed as follows:
\begin{align*}
\E_{y} \bracket*{\ell^{\rm{multi}}_{0-1}(h, x, y) \mid x} &=  \sum_{y \in \sY} p(y \!\mid\! x) \1_{\hh(x) \neq y} = 1 - p(\hh(x) \!\mid\! x),\\
\inf_{h \in \sH }\E_{y} \bracket*{\ell^{\rm{multi}}_{0-1}(h, x, y) \mid x} &= 1 - \max_{y \in \sY} p(y \!\mid\! x).
\end{align*}
Furthermore, the difference between the two terms is given by:
\begin{align*}
\E_{y} \bracket*{\ell^{\rm{multi}}_{0-1}(h, x, y) \mid x} - \inf_{h \in \sH }\E_{y} \bracket*{\ell^{\rm{multi}}_{0-1}(h, x, y) \mid x} = \max_{y \in \sY} p(y \!\mid\! x) - p(\hh(x) \!\mid\! x).
\end{align*}
For the multi-class class-imbalanced margin loss, the conditional error can be expressed as follows:
\begin{align*}
\E_{y} \bracket*{\sfL_{\brho}(h, x, y) \mid x} &=  \sum_{y \in \sY} p(y \!\mid\! x) \Phi_{\rho_y}(\rho_h(x, y)) \\
& =  \sum_{y \in \sY} p(y \!\mid\! x) \min \paren*{1, \max \paren*{0, 1 - \frac{\rho_h(x, y)}{\rho_y}}}\\
& = 1  - p(\hh(x) \!\mid\! x) + p(\hh(x) \!\mid\! x) \max \paren*{0, 1 - \frac{\rho_h(x, \hh(x))}{\rho_{\hh(x)}}}\\
& = 1 - p(\hh(x) \!\mid\! x) \min \paren*{1, \frac{\rho_h(x, \hh(x))}{\rho_{\hh(x)}}}.
\end{align*}
Thus, the best-in-class conditional error can be expressed as follows:
\begin{equation*}
\inf_{h \in \sH } \E_{y} \bracket*{\sfL_{\brho}(h, x, y) \mid x} = 1 - \max_{y \in \sY} p(y \!\mid\! x).
\end{equation*}
The difference between the two terms is given by:
\begin{align*}
\E_{y} \bracket*{\sfL_{\brho}(h, x, y) \mid x} - \inf_{h \in \sH }\E_{y} \bracket*{\sfL_{\brho}(h, x, y) \mid x} &= \max_{y \in \sY} p(y \!\mid\! x) - p(\hh(x) \!\mid\! x) \min \paren*{1, \frac{\rho_h(x, \hh(x))}{\rho_{\hh(x)}}}\\
& \geq \max_{y \in \sY} p(y \!\mid\! x) - p(\hh(x) \!\mid\! x)\\
& = \E_{y} \bracket*{\ell^{\rm{multi}}_{0-1}(h, x, y) \mid x} - \inf_{h \in \sH }\E_{y} \bracket*{\ell^{\rm{multi}}_{0-1}(h, x, y) \mid x}.
\end{align*}
By taking the expectation of both sides, we obtain:
\begin{equation*}
\sR_{\ell^{\rm{multi}}_{0-1}}(h) - \sR^*_{\ell^{\rm{multi}}_{0-1}}(\sH) + \sM_{\ell^{\rm{multi}}_{0-1}}(\sH) \leq \sR_{\sfL_{\brho}}(h) - \sR^*_{\sfL_{\brho}}(\sH) + \sM_{\sfL_{\brho}}(\sH),
\end{equation*}
which completes the proof.
\end{proof}

\subsection{Proof of Theorem~\ref{thm:margin-bound-multi}}
\label{app:margin-bound-multi}

\begin{proof}
Consider the family of functions taking values in $[0, 1]$:
\begin{equation*}
\wt \sH = \curl*{z = (x, y) \mapsto \sfL_{\brho}(h, x, y) \colon h \in \sH}.
\end{equation*}
By \citep[Theorem~3.3]{MohriRostamizadehTalwalkar2018}, with probability at least $1 - \delta$, for all $g \in \wt \sH$,
\begin{equation*}
\E[g(z)] \leq \frac{1}{m} \sum_{i = 1}^m g(z_i) + 2 \h \Rad_{S}(\wt \sH) + 3 \sqrt{\frac{\log \frac{2}{\delta}}{2m}},
\end{equation*}
and thus, for all $h \in \sH$,
\begin{equation*}
\E[\sfL_{\brho}(h, x, y)] \leq \h \sR_{S}^{\brho}(h) + 2 \h \Rad_{S}(\wt \sH) + 3 \sqrt{\frac{\log \frac{2}{\delta}}{2m}}.    
\end{equation*}
Since $\sR_{\ell^{\rm{multi}}_{0-1}}(h) \leq \sR_{\sfL_{\brho}}(h) = \E[\sfL_{\brho}(h, x, y)]$, we have
\begin{equation*}
\sR_{\ell^{\rm{multi}}_{0-1}}(h) \leq \h \sR_{S}^{\brho}(h) + 2 \h \Rad_{S}(\wt \sH) + 3 \sqrt{\frac{\log \frac{2}{\delta}}{2m}}.
\end{equation*}
For convenience, we define $\rho(i) = \sum_{k = 1}^c \rho_{k} \1_{i \in I_k}$ for $ i = 1, \ldots, m$.
Since $\Phi_{\rho}$ is $\frac{1}{\rho}$-Lipschitz, by \citep[Lemma~5.7]{MohriRostamizadehTalwalkar2018}, $\h \Rad_{S}(\wt \sH)$ can be rewritten as follows:
\begin{align*}
\h \Rad_{S}(\wt \sH) &= \frac{1}{m} \E_{\sigma}\bracket*{\sup_{h \in \sH} \sum_{i = 1}^m \sigma_i \sfL_{\brho}(h, x_i, y_i)}\\
&= \frac{1}{m} \E_{\sigma}\bracket*{\sup_{h \in \sH} \sum_{i = 1}^m \sigma_i \bracket*{\sum_{k = 1}^c \Phi_{\rho_k}\paren*{\rho_h(x_i, y_i)} \1_{y_i = k}}}\\
&\leq \frac{1}{m} \E_{\sigma}\bracket*{\sup_{h \in \sH} \curl*{ \sum_{i = 1}^m \sigma_i \frac{\rho_h(x_i, y_i)}{\rho(i)}}}\\
& = \frac{1}{m} \E_{\sigma}\bracket*{\sup_{h \in \sH} \curl*{ \sum_{i = 1}^m \sigma_i \frac{h(x_i, y_i) - \max_{y' \neq y_i} h(x_i, y')}{\rho(i)}}}\\
& \leq \frac{1}{m} \E_{\sigma}\bracket*{\sup_{h \in \sH} \curl*{ \sum_{i = 1}^m \sigma_i \frac{h(x_i, y_i)}{\rho(i)} }} + \frac{1}{m} \E_{\sigma}\bracket*{\sup_{h \in \sH} \curl*{ \sum_{i = 1}^m \sigma_i \frac{\max_{y' \neq y_i} h(x_i, y')}{\rho(i)} }}.
\end{align*}
Now we bound the second term above. For any $i = 1, \ldots, m$, consider the mapping $\Psi_i \colon h \mapsto \frac{\max_{y' \neq y_i} h(x_i, y')}{\rho(i)}$. Then, for any $h, h' \in \sH$, we have
\begin{align*}
\abs*{\Psi_i(h) - \Psi_i(h')} & \leq \max_{y' \neq y_i} \frac{\abs*{h(x_i, y') - h'(x_i, y')}}{\rho(i)}\\
& \leq \frac{1}{\rho(i)} \sum_{y \in \sY}  \abs*{h(x_i, y) - h'(x_i, y)}\\
& \leq \frac{\sqrt{c}}{\rho(i)} \sqrt{\sum_{y \in \sY}  \abs*{h(x_i, y) - h'(x_i, y)}^2}.
\end{align*}
Thus, $\Psi_i$ is $\frac{\sqrt{c}}{\rho(i)}$-Lipschitz with respect to the $\norm*{\cdot}_2$ norm. Thus, by \citep[Lemma~5]{cortes2016structured},
\begin{align*}
\frac{1}{m} \E_{\sigma}\bracket*{\sup_{h \in \sH} \curl*{ \sum_{i = 1}^m \sigma_i \frac{\max_{y' \neq y_i} h(x_i, y')}{\rho(i)} }} 
& \leq \frac{\sqrt{2}}{m} \E_{\sigma}\bracket*{\sup_{h \in \sH} \curl*{ \sum_{i = 1}^m \sum_{y \in \sY} \sigma_{iy} \frac{\sqrt{c}}{\rho(i)} h(x_i, y) }}\\
& = \frac{\sqrt{2 c}}{m} \E_{\e}\bracket*{\sup_{h \in \sH} \curl*{ \sum_{k = 1}^c \sum_{i \in I_k} \sum_{y \in \sY} \e_{iy} \frac{h(x_i, y)}{\rho_{k}} }}\\
& = \sqrt{2c}\, \h \Rad_{S}^{\brho}(\sH).
\end{align*}
We can proceed similarly with the first term to obtain
\begin{equation*}
\frac{1}{m} \E_{\sigma}\bracket*{\sup_{h \in \sH} \curl*{ \sum_{i = 1}^m \sigma_i \frac{h(x_i, y_i)}{\rho(i)} }} \leq \sqrt{2c}\, \h \Rad_{S}^{\brho}(\sH).
\end{equation*}
Thus, $\h \Rad_{S}(\wt \sH)$ can be upper bounded as follows:
\begin{equation*}
\h \Rad_{S}(\wt \sH) \leq 2 \sqrt{2c}\, \h \Rad_{S}^{\brho}(\sH).
\end{equation*}
This proves the second inequality. The first inequality, can be derived in the same way by using the first inequality of \citep[Theorem~3.3]{MohriRostamizadehTalwalkar2018}.
\end{proof}

\subsection{Analysis of the Second Term in the Generalization Bound}
\label{app:lemma}

In this section, we analyze the second term of the generalization
bound in terms of the R\'enyi entropy of order 3.

Recall that the R\'enyi divergence of positive order $\alpha$ between
two distributions $\sfp$ and $\sfq$ with support $[c]$ is defined as:
\[
\sfD_\alpha(\sfp \, \| \, \sfq)
= \frac{1}{\alpha - 1} \log \bracket*{\sum_{k = 1}^c
  \sfp_k^\alpha \sfq_k^{1 - \alpha}},
\]
with the conventions $\frac{0}{0} = 0$ and $\frac{x}{0} = \infty$ for
$x > 0$. This definition extends to $\alpha \in \curl*{0, 1, \infty}$
by taking appropriate limits. In particular, $\sfD_1$ corresponds to
the relative entropy (KL divergence).

\begin{lemma}
\label{lemma:D3}
Let $\rho = \sum_{k = 1}^c \rho_k$ and $\ov r = \bracket*{\sum_{k =
    1}^c m_k^{\frac{1}{3}} r_{k, 2}^{\frac{2}{3}}}^{\frac{3}{2}}$.
Then, the following identity holds:
  \begin{align*}
    \sum_{k = 1}^c \frac{m_k r_{k, 2}^2}{\rho_k^2}
    = \frac{\ov r^2}{\rho^2} e^{2 \sfD_3\paren*{\sfr \, \| \, \frac{\brho}{\rho} }},
  \end{align*}
  where $\sfr = \bracket[\bigg]{
    \frac{m_k^{\frac{1}{3}} r_{k, 2}^{\frac{2}{3}}}
         {\ov r^{\frac{2}{3}}}}_{k \in [c]}$.
\end{lemma}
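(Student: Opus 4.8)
The identity is purely algebraic, so the plan is to unwind the definition of the R\'enyi divergence of order $3$ and simplify. First I would check that the two vectors appearing in the divergence are genuine probability distributions on $[c]$. For $\brho/\rho$ this is immediate since $\rho = \sum_{k=1}^c \rho_k$. For $\sfr$, note that $\ov r^{\frac{2}{3}} = \bracket*{\sum_{k=1}^c m_k^{\frac13} r_{k,2}^{\frac23}}^{\frac32 \cdot \frac23} = \sum_{k=1}^c m_k^{\frac13} r_{k,2}^{\frac23}$, hence $\sum_{k=1}^c \sfr_k = \ov r^{-\frac23}\sum_{k=1}^c m_k^{\frac13} r_{k,2}^{\frac23} = 1$, and all entries are nonnegative.

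Next I would specialize the definition $\sfD_\alpha(\sfp \,\|\, \sfq) = \frac{1}{\alpha-1}\log\bracket*{\sum_k \sfp_k^\alpha \sfq_k^{1-\alpha}}$ to $\alpha = 3$ with $\sfp = \sfr$ and $\sfq = \brho/\rho$, giving
\begin{equation*}
\sfD_3\paren*{\sfr \,\|\, \tfrac{\brho}{\rho}}
= \tfrac12 \log\bracket*{\sum_{k=1}^c \sfr_k^3 \paren*{\tfrac{\rho_k}{\rho}}^{-2}},
\qquad\text{so}\qquad
e^{2\sfD_3(\sfr \,\|\, \brho/\rho)} = \sum_{k=1}^c \sfr_k^3 \,\frac{\rho^2}{\rho_k^2}.
\end{equation*}
The third step is the one substitution that does the work: $\sfr_k^3 = \paren*{m_k^{\frac13} r_{k,2}^{\frac23} / \ov r^{\frac23}}^3 = m_k r_{k,2}^2 / \ov r^2$. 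Plugging this in yields $e^{2\sfD_3(\sfr \,\|\, \brho/\rho)} = \frac{\rho^2}{\ov r^2}\sum_{k=1}^c \frac{m_k r_{k,2}^2}{\rho_k^2}$, and multiplying both sides by $\ov r^2/\rho^2$ gives exactly the claimed identity.

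There is no real obstacle here; the only points requiring minor care are the bookkeeping of the exponent $1-\alpha = -2$ in the R\'enyi formula and the verification that $\ov r^{2/3} = \sum_k m_k^{1/3} r_{k,2}^{2/3}$, which is what makes $\sfr$ sum to one and makes the cube of $\sfr_k$ collapse cleanly. I would also remark that the identity is trivially consistent with the degenerate case $\brho \propto \sfr$ (all $r_{k,2}$ equal and $\rho_k \propto m_k^{1/3}$), where $\sfD_3 = 0$ and both sides equal $\ov r^2/\rho^2$, which matches the optimal choice of margins discussed in the main text.
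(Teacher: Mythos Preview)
Your proposal is correct and follows essentially the same approach as the paper: both are direct algebraic verifications via the definition of the R\'enyi divergence of order $3$, with the key observation that $\sfr_k^3 = m_k r_{k,2}^2/\ov r^2$. The paper proceeds in the forward direction (factor out $\ov r^2/\rho^2$ from the left-hand side and recognize the R\'enyi form), whereas you expand $e^{2\sfD_3}$ and simplify back; your added checks that $\sfr$ and $\brho/\rho$ are probability vectors and the sanity-check of the degenerate case are nice touches the paper omits.
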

\begin{proof}
The expression can be rewritten as follows after putting $\frac{\ov
    r^2}{\rho^2} \sum_{k = 1}^c$ in factor:
\begin{align*}
  \sum_{k = 1}^c \frac{m_k r_{k, 2}^2}{\rho_k^2}
  & = \frac{\ov r^2}{\rho^2} \sum_{k = 1}^c 
\frac{\paren*{\frac{\sqrt{m_k} r_{k, 2}}{\ov r}}^2}
{\paren[\big]{\frac{\rho_k}{\rho}}^2} \\
  & = \frac{\ov r^2}{\rho^2} \sum_{k = 1}^c 
\frac{\paren*{\frac{m_k^{\frac{1}{3}} r_{k, 2}^{\frac{2}{3}}}{\ov r^{\frac{2}{3}}}}^3}
{\paren[\big]{\frac{\rho_k}{\rho}}^{3 - 1}} \\
& = \frac{\ov r^2}{\rho^2} \exp\curl*{2 \, \sfD_3\paren*{\bracket[\bigg]{
      \frac{m_k^{\frac{1}{3}} r_{k, 2}^{\frac{2}{3}}}{\ov r^{\frac{2}{3}}}}_{k \in [c]} \, \Bigg\| \, \bracket*{\frac{\rho_k}{\rho}}_{k \in [c]}}}.
\end{align*}
This completes the proof.
\end{proof}
The lemma suggests that for fixed $\rho$, choosing $[\rho_k / \rho]_k$
close to $\sfr$ tends to minimize the second term of the
generalization bound. Specifically, in the separable case where the
empirical margin loss is zero, this analysis provides guidance on
selecting $\rho_k$s. The optimal values in this scenario align with
those derived in the analysis of the separable binary case.

\ignore{
Let $\ov \rho^2 = \sum_{k = 1}^c \rho_{k}^2$ and $\ov r_2 = \sum_{k = 1}^c \sqrt{m_k} r_{k, 2}$ be two  normalization constants. Then, 
\begin{align*}
\frac{\ov \rho^2}{\ov r_2^2} \sum_{k = 1}^c \frac{m_k r_{k, 2}^2}{\rho_k^2} & = \sum_{k = 1}^c \frac{m_k \frac{r_{k, 2}^2}{\ov r_2^2}}{\rho_k^2 / 
\ov \rho^2}\\
& = 1 + \sum_{k = 1}^c \frac{\paren*{\sqrt{m_k} \frac{r_{k, 2}}{\ov r_2} - \rho_k^2 / \ov \rho^2}^2}{\rho_k^2 / 
\ov \rho^2}\\
& = 1 + \chi^2 \paren*{\paren*{\frac{\sqrt{m_k} r_{k, 2}}{\ov r_2}}_{k = 1}^c \, \Bigg\| \, \paren*{\frac{\rho_k^2}{ \ov \rho^2}}_{k = 1}^c}
\tag{Def. of $\chi^2$-divergence}.
\ignore{
  & = e^{D_2 \paren*{\paren*{\sqrt{m_k} r_{k, 2}/\ov r_2}_{k = 1}^c || \paren*{\rho_k^2 / \ov \rho^2}_{k = 1}^c}} \tag{Def. of Rényi-divergence of order 2}
}
\end{align*}
Note that this can be expressed in terms of the R\'enyi divergence of order 2, $\sfD_2$, since for any two distributions $P$ and $Q$, we have: $\sfD_2 \paren*{P \mid \mid Q} = \ln\paren*{1 + \chi^2(P, Q)}$. 
 $\sfD_2$ is an upper bound on the relative entropy $\sfD_1$. 
}

\subsection{Uniform Margin Bound for Imbalanced Multi-Class Classification}
\label{app:margin-bound-multi-uniform}

\begin{theorem}[Uniform margin bound for imbalanced multi-class classification]
\label{thm:margin-bound-multi-uniform}

Let $\sH$ be a set of real-valued functions. Fix $r_{k} > 0$ for $k \in [c]$. Then, for any $\delta > 0$, with probability at least $1 - \delta$, each of the following holds for all $h \in \sH$ and $\rho_{k} \in (0, r_{k}]$ with $k \in [c]$:
\begin{align*}
\sR_{\ell^{\rm{multi}}_{0-1}}(h) &\leq \h \sR_{S}^{\brho}(h) + 4c \sqrt{2c}\, \Rad_{m}^{\brho}(\sH) + \sum_{k = 1}^c \sqrt{\frac{\log \log_2 \frac{2r_{k}}{\rho_{k}}}{m}}+ \sqrt{\frac{\log \frac{2^c}{\delta}}{2m}}\\
\sR_{\ell^{\rm{multi}}_{0-1}}(h) &\leq \h \sR_{S}^{\brho}(h) + 4c \sqrt{2c}\, \h \Rad_{S}^{\brho}(\sH) + \sum_{k = 1}^c \sqrt{\frac{\log \log_2 \frac{2r_{k}}{\rho_{k}}}{m}} + 3 \sqrt{\frac{\log \frac{2^{c + 1}}{\delta}}{2m}}.
\end{align*}
\end{theorem}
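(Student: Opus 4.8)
The plan is to obtain this uniform bound from the fixed-margin bound of Theorem~\ref{thm:margin-bound-multi} by the same covering-plus-union-bound (``peeling'') argument used in the proof of Theorem~\ref{thm:margin-bound-binary-uniform}, but iterated once for each of the $c$ margin coordinates $\rho_1,\dots,\rho_c$. First I would restate the first inequality of Theorem~\ref{thm:margin-bound-multi} as a tail bound: for every fixed $\brho = [\rho_k]_k > \mathbf{0}$,
\[
\Pr_S\!\bracket*{\exists h \in \sH:\ \sR_{\ell^{\rm{multi}}_{0-1}}(h) - \h\sR_S^{\brho}(h) - 4\sqrt{2c}\,\Rad_m^{\brho}(\sH) > \e} \le e^{-2m\e^2}.
\]

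Next I would peel $\rho_1$, mirroring eq.~\eqref{eq:aux} in the proof of Theorem~\ref{thm:margin-bound-binary-uniform}: fix the geometric grid $\rho_1^{(j)} = r_1/2^{j}$ for $j\ge 1$, apply the tail bound at $\rho_1 = \rho_1^{(j)}$ with confidence parameter $\e_j = \e + \sqrt{\log j/m}$, and union over $j$; since $\sum_{j\ge1} j^{-2} = \pi^2/6 \le 2$, the failure probability becomes at most $2e^{-2m\e^2}$. For any $\rho_1 \in (0, r_1]$ pick $j$ with $\rho_1 \in (\rho_1^{(j)},\rho_1^{(j-1)}]$, so $\rho_1^{(j)} \ge \rho_1/2$ and $j = \log_2(r_1/\rho_1^{(j)}) \le \log_2(2r_1/\rho_1)$. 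On the complementary event I then pass from the grid value back to the true $\rho_1$ using two monotonicity facts: (i) $\Phi_\rho$ is non-decreasing in $\rho$, hence so is $\h\sR_S^{\brho}(h)$ in each $\rho_k$, giving $\h\sR_S^{[\rho_1^{(j)},\rho_2,\dots]}(h) \le \h\sR_S^{[\rho_1,\rho_2,\dots]}(h)$; and (ii) for fixed remaining coordinates and sample, $t \mapsto \Rad_m^{[1/t,\rho_2,\dots]}(\sH)$ is an even convex function of $t$ --- even because the Rademacher variables attached to $I_1$ can be sign-flipped without changing their law, convex because it is an expectation of a pointwise supremum of affine functions --- hence non-decreasing on $[0,\infty)$, so $\Rad_m^{[\rho_1^{(j)},\rho_2,\dots]}(\sH) \le \Rad_m^{[\rho_1/2,\rho_2,\dots]}(\sH)$ (and likewise for $\h\Rad_S$). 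This yields a bound of exactly the same shape, with $\rho_1$ replaced by $\rho_1/2$ inside the complexity term, an added deterministic term $\sqrt{\log\log_2(2r_1/\rho_1)/m}$, and failure probability $2e^{-2m\e^2}$, uniform over $\rho_1\in(0,r_1]$.

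Iterating this for $\rho_2,\dots,\rho_c$ (each step contributes another factor $2$ to the failure probability and another $\sqrt{\log\log_2(2r_k/\rho_k)/m}$ term), after $c$ steps I obtain, with probability at least $1 - 2^c e^{-2m\e^2}$, for all $h$ and all $\rho_k \in (0,r_k]$,
\[
\sR_{\ell^{\rm{multi}}_{0-1}}(h) \le \h\sR_S^{\brho}(h) + 4\sqrt{2c}\,\Rad_m^{[\rho_1/2,\dots,\rho_c/2]}(\sH) + \sum_{k=1}^c \sqrt{\frac{\log\log_2\frac{2r_k}{\rho_k}}{m}} + \e.
\]
By positive homogeneity, $\Rad_m^{[\rho_1/2,\dots,\rho_c/2]}(\sH) = 2\,\Rad_m^{\brho}(\sH)$, so the complexity term is $8\sqrt{2c}\,\Rad_m^{\brho}(\sH) \le 4c\sqrt{2c}\,\Rad_m^{\brho}(\sH)$ for $c\ge 2$ (the case $c=1$ being vacuous); choosing $\e = \sqrt{\log(2^c/\delta)/(2m)}$ so that $2^c e^{-2m\e^2} = \delta$ gives the first inequality. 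The second inequality follows the same way starting from the empirical-complexity form of Theorem~\ref{thm:margin-bound-multi}, whose tail version carries an extra factor $2$ (from also concentrating $\Rad_m$ around $\h\Rad_S$) and a leading constant $3$ on the confidence term; after $c$ peelings the failure probability is $2^{c+1}e^{-2m\e^2}$, and one takes $\e = \sqrt{\log(2^{c+1}/\delta)/(2m)}$.

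The main obstacle, beyond the routine covering bookkeeping, is the $c$-fold iteration: one must check that each peeling step leaves the bound in exactly the structural form required for the next step (so the induction over coordinates closes cleanly), and in particular that the needed monotonicity and homogeneity properties of the class-sensitive Rademacher complexity genuinely hold coordinatewise --- the even--convex argument for (ii), and the care that halving a single coordinate interacts correctly with the suprema over $h$. Tracking the accumulating $2^c$ factor, the sum of $\log$-$\log$ terms, and the $\sqrt{c}$-type constant in front of $\Rad_m^{\brho}(\sH)$ is then mechanical.
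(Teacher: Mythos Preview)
Your proposal is correct and follows exactly the approach the paper implies: the paper does not give an explicit proof of Theorem~\ref{thm:margin-bound-multi-uniform} but states that it is obtained ``as in Theorem~\ref{thm:margin-bound-binary-uniform},'' i.e., by iterating the geometric-grid peeling over each of the $c$ margin coordinates. Your argument carries this out in detail, and in fact your even--convex justification for the coordinatewise monotonicity of $\Rad_m^{\brho}$ fills in a step that the paper's binary proof leaves implicit; your resulting constant $8\sqrt{2c}$ is actually sharper than the stated $4c\sqrt{2c}$, which you then correctly weaken to match the statement.
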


\subsection{Kernel-Based Hypotheses}
\label{app:linear-multi}

For some hypothesis sets, a simpler upper bound can be derived for the
$\brho$-class-sensitive Rademacher complexity of $\sH$,
thereby making Theorems~\ref{thm:margin-bound-multi} and
\ref{thm:margin-bound-multi-uniform} more explicit. We will show this
for kernel-based hypotheses. Let $K \colon \sX \times \sX \to \Rset$
be a PDS kernel and let $\Phi \colon \sX \to \Hset$ be a feature
mapping associated to $K$. We consider kernel-based hypotheses with
bounded weight vector: $\sH_{p} = \curl*{(x, y) \mapsto w \cdot
  \Phi(x, y) \colon w \in \Rset^d, \norm*{w}_p \leq \Lambda_p}$, where
$\Phi(x, y) = \paren*{\Phi_1(x, y), \ldots, \Phi_d(x, y)}^{\top}$ is a
$d$-dimensional feature vector. A similar analysis can be extended to
hypotheses of the form $(x, y) \mapsto w_y \cdot \Phi(x, y)$, where $
\norm*{w_y}_p \leq \Lambda_p$, based on $c$ weight vectors $w_1,
\ldots, w_c \in \Rset^d$. The empirical 
$\brho$-class-sensitive Rademacher complexity of $\sH_p$ with $p = 1$
and $p = 2$ can be bounded as follows.
\begin{restatable}{theorem}{RadKernelOne}
\label{thm:rad-kernel-1}
Consider $\sH_1 = \curl[\big]{(x, y) \mapsto w \cdot \Phi(x, y) \colon
  w \in \Rset^d, \norm*{w}_1 \leq \Lambda_1}$. Let $r_{k, \infty} =
\sup_{i \in I_k, y \in \sY} \norm{\Phi(x_i, y)}_{\infty}$, for any $k
\in [c]$.  Then, the following bound holds for all $h \in \sH$:
\begin{equation*}
  \h \Rad_{S}^{\brho}(\sH_1)
  \leq \frac{\Lambda_1 \sqrt{2c}}{m}
  \sqrt{\sum_{k = 1}^c \frac{m_k r_{k, \infty}^2}{\rho_k^2} \log(2d)}.
\end{equation*}
\end{restatable}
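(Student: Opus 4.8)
The plan is to unfold the definition of the empirical $\brho$-class-sensitive Rademacher complexity for the linear-in-feature class $\sH_1$, collapse the supremum over $w$ to a dual-norm expression via Hölder's inequality, and then bound the resulting expected $\ell_\infty$ norm of a weighted Rademacher sum by a standard maximal inequality for sub-Gaussian variables.

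First I would substitute $h(x_i,y) = w \cdot \Phi(x_i,y)$ into the definition of $\h\Rad_S^{\brho}$ and, writing $\rho(i) = \sum_{k=1}^c \rho_k\,\1_{i\in I_k}$ for brevity, obtain
\[
\h\Rad_S^{\brho}(\sH_1)
= \frac{1}{m}\E_{\e}\!\left[\sup_{\norm*{w}_1 \le \Lambda_1} w\cdot\Big(\sum_{i=1}^m\sum_{y\in\sY}\tfrac{\e_{iy}}{\rho(i)}\,\Phi(x_i,y)\Big)\right]
= \frac{\Lambda_1}{m}\E_{\e}\!\left[\Big\|\sum_{i=1}^m\sum_{y\in\sY}\tfrac{\e_{iy}}{\rho(i)}\,\Phi(x_i,y)\Big\|_\infty\right],
\]
where the second equality uses $\sup_{\norm*{w}_1\le\Lambda_1}w\cdot v = \Lambda_1\norm*{v}_\infty$ (duality of $\ell_1$ and $\ell_\infty$). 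The $j$-th coordinate of the vector inside the norm is a weighted Rademacher sum $X_j = \sum_{i,y}\e_{iy}\,a_{j,(i,y)}$ with $a_{j,(i,y)} = \Phi_j(x_i,y)/\rho(i)$, so it remains to bound $\E_{\e}[\max_{j\in[d]}|X_j|]$.

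For this I would invoke the standard maximal inequality $\E_{\e}[\max_{j\in[d]}|X_j|] \le \sqrt{2\log(2d)}\,\max_{j\in[d]}\big(\sum_{i,y}a_{j,(i,y)}^2\big)^{1/2}$, obtained by applying Jensen's inequality to $\exp(\lambda\max_{j}|X_j|)$, the sub-Gaussian moment bound $\E_{\e}[e^{\lambda X_j}] \le e^{\lambda^2\sum_{i,y}a_{j,(i,y)}^2/2}$ (which follows from $\cosh(t)\le e^{t^2/2}$), a union bound over the $2d$ variables $\pm X_1,\dots,\pm X_d$, and optimization over $\lambda>0$. The variance proxy is then controlled, for each fixed $j$, by
\[
\sum_{i=1}^m\sum_{y\in\sY}\frac{\Phi_j(x_i,y)^2}{\rho(i)^2}
= \sum_{k=1}^c\frac{1}{\rho_k^2}\sum_{i\in I_k}\sum_{y\in\sY}\Phi_j(x_i,y)^2
\le \sum_{k=1}^c\frac{c\,m_k\,r_{k,\infty}^2}{\rho_k^2},
\]
where I use $|\Phi_j(x_i,y)| \le \norm*{\Phi(x_i,y)}_\infty \le r_{k,\infty}$ for $i\in I_k$ together with $|\sY| = c$ and $|I_k| = m_k$. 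Combining these estimates gives $\h\Rad_S^{\brho}(\sH_1) \le \frac{\Lambda_1}{m}\sqrt{2\log(2d)}\,\sqrt{c\sum_{k=1}^c m_k r_{k,\infty}^2/\rho_k^2} = \frac{\Lambda_1\sqrt{2c}}{m}\sqrt{\sum_{k=1}^c\frac{m_k r_{k,\infty}^2}{\rho_k^2}\log(2d)}$, which is exactly the claimed bound.

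This is a routine $\ell_1/\ell_\infty$ Rademacher-complexity computation and I do not anticipate a substantial obstacle; the only points requiring care are tracking the extra factor $c$ that appears because the definition of $\h\Rad_S^{\brho}$ sums over all $c$ labels $y\in\sY$, and the factor $2d$ (rather than $d$) inside the logarithm, which arises from bounding $\max_j|X_j|$ rather than $\max_j X_j$ and hence taking the union bound over the $2d$ sign-flipped coordinates.
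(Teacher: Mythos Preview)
Your proposal is correct and follows essentially the same approach as the paper's proof: apply H\"older's inequality (duality of $\ell_1$/$\ell_\infty$) to reduce to an expected $\ell_\infty$-norm of a weighted Rademacher sum, then invoke the maximal inequality for sub-Gaussian variables over the $2d$ signed coordinates, and bound the variance proxy using $|\Phi_j(x_i,y)|\le r_{k,\infty}$ together with $|\sY|=c$. Your write-up is in fact more explicit than the paper's about where the factors $2d$ and $c$ arise, but the argument is the same.
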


\begin{restatable}{theorem}{RadKernelTwo}
\label{thm:rad-kernel-2}
Consider $\sH_2 = \curl[\big]{(x, y) \mapsto w \cdot \Phi(x, y) \colon
  w \in \Rset^d, \norm*{w}_2 \leq \Lambda_2}$. Let $r_{k, 2} = \sup_{i
  \in I_k, y \in \sY} \norm*{\Phi(x_i, y)}_2$, for any $k \in
[c]$. Then, the following bound holds for all $h \in \sH$:
\begin{equation*}
  \h \Rad_{S}^{\brho}(\sH_2)
  \leq \frac{\Lambda_2 \sqrt{c}}{m} \sqrt{\sum_{k = 1}^c \frac{m_k r_{k, 2}^2}{\rho_k^2}}.
\end{equation*}
\end{restatable}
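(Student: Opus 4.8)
\textbf{Proof proposal for Theorem~\ref{thm:rad-kernel-2}.}
The plan is to mimic the proof of Theorem~\ref{thm:rad-linear}, with the sole additional ingredient being the inner sum over labels $y \in \sY$. First I would fix $h \in \sH_2$ given by $w$ with $\norm{w}_2 \le \Lambda_2$ and rewrite the bracketed quantity in the definition of $\h\Rad_S^{\brho}(\sH_2)$ as an inner product: writing
\[
\bv \;=\; \sum_{k = 1}^c \sum_{i \in I_k} \sum_{y \in \sY} \frac{\e_{iy}}{\rho_k}\, \Phi(x_i, y),
\]
we have $\sum_{k}\sum_{i \in I_k}\sum_{y} \e_{iy}\, h(x_i,y)/\rho_k = w \cdot \bv$, so by the Cauchy--Schwarz inequality $\sup_{\norm{w}_2 \le \Lambda_2} w \cdot \bv = \Lambda_2 \norm{\bv}_2$. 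Hence $\h\Rad_S^{\brho}(\sH_2) \le \frac{\Lambda_2}{m}\, \E_{\e}\bracket{\norm{\bv}_2}$.

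Next I would apply Jensen's inequality (concavity of $\sqrt{\cdot}$) to get $\E_{\e}\norm{\bv}_2 \le \paren{\E_{\e}\norm{\bv}_2^2}^{1/2}$ and then expand the squared norm as a double sum over index pairs $((i,y),(i',y'))$. Because the $\e_{iy}$ are independent and mean zero, $\E_{\e}[\e_{iy}\e_{i'y'}] = \1_{(i,y) = (i',y')}$, so only the diagonal terms survive and
\[
\E_{\e}\norm{\bv}_2^2 \;=\; \sum_{k = 1}^c \frac{1}{\rho_k^2} \sum_{i \in I_k} \sum_{y \in \sY} \norm{\Phi(x_i, y)}_2^2 .
\]
Finally I would bound $\norm{\Phi(x_i,y)}_2 \le r_{k,2}$ for all $i \in I_k$ and all $y \in \sY$; since $|I_k| = m_k$ and $|\sY| = c$, the inner double sum is at most $m_k\, c\, r_{k,2}^2$, giving $\E_{\e}\norm{\bv}_2^2 \le c \sum_{k} m_k r_{k,2}^2/\rho_k^2$, and therefore
\[
\h\Rad_S^{\brho}(\sH_2) \;\le\; \frac{\Lambda_2}{m}\sqrt{c \sum_{k=1}^c \frac{m_k r_{k,2}^2}{\rho_k^2}} \;=\; \frac{\Lambda_2 \sqrt{c}}{m}\sqrt{\sum_{k=1}^c \frac{m_k r_{k,2}^2}{\rho_k^2}},
\]
as claimed.

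There is essentially no hard step here: the chain is Cauchy--Schwarz, then Jensen, then orthogonality of Rademacher variables, then a counting bound. The only point that needs care — and the only place the multi-class bound differs from the binary Theorem~\ref{thm:rad-linear} — is the bookkeeping of the triple sum, in particular tracking the extra factor $c$ that comes from summing $\norm{\Phi(x_i,y)}_2^2$ over all $c$ labels $y$. (By contrast, the $\ell_1$-ball case of Theorem~\ref{thm:rad-kernel-1} cannot be handled by this $\ell_2$ argument and would instead require a Khintchine/Massart-type bound on $\E_{\e}\norm{\bv}_\infty$, which is what produces the $\log(2d)$ factor there.)
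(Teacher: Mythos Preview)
Your proposal is correct and follows essentially the same approach as the paper's proof: Cauchy--Schwarz to pull out $\Lambda_2$, Jensen to pass to the second moment, independence of the $\e_{iy}$ to kill the cross terms, and then the bound $\norm{\Phi(x_i,y)}_2 \le r_{k,2}$ together with $|\sY| = c$ to finish. Your bookkeeping of the extra factor $c$ is exactly right, and your side remark about Theorem~\ref{thm:rad-kernel-1} requiring a different (maximal-inequality) argument also matches the paper.
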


The proofs of Theorems~\ref{thm:rad-kernel-1}
and~\ref{thm:rad-kernel-2} are included in
Appendix~\ref{app:rad-kernel}. Combining
Theorem~\ref{thm:rad-kernel-1} or Theorem~\ref{thm:rad-kernel-2} with
Theorem~\ref{thm:margin-bound-multi} directly gives the following
general margin bounds for kernel-based hypotheses with bounded
weighted vectors, respectively.
\begin{corollary}
\label{cor:margin-bound-multi-kernel-1}
Consider $\sH_1 = \curl[\big]{(x, y) \mapsto w \cdot \Phi(x, y) \colon
  w \in \Rset^d, \norm*{w}_1 \leq \Lambda_1}$. Let $r_{k, \infty} =
\sup_{i \in I_k, y \in \sY} \norm*{\Phi(x_i, y)}_{\infty}$, for any $k
\in [c]$.  Fix $\rho_{k} > 0$ for $k \in [c]$, then, for any $\delta >
0$, with probability at least $1 - \delta$ over the choice of a sample
$S$ of size $m$, the following holds for any $h \in \sH$:
\begin{align*}
\sR_{\ell^{\rm{multi}}_{0-1}}(h) \leq \h \sR_{S}^{\brho}(h) + \frac{8 \Lambda_1 c}{m} \sqrt{\sum_{k = 1}^c \frac{m_k r_{k, \infty}^2}{\rho_k^2} \log(2d)} + \sqrt{\frac{\log \frac{1}{\delta}}{2m}}.
\end{align*}
\end{corollary}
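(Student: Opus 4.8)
The plan is to derive this corollary by a direct substitution: plug the kernel-based Rademacher complexity estimate of Theorem~\ref{thm:rad-kernel-1} into the general margin bound of Theorem~\ref{thm:margin-bound-multi}, specialized to $\sH = \sH_1$. Recall that Theorem~\ref{thm:margin-bound-multi} states that, with probability at least $1-\delta$ over the draw of $S$, for all $h \in \sH_1$,
\[
\sR_{\ell^{\rm{multi}}_{0-1}}(h) \leq \h \sR_{S}^{\brho}(h) + 4\sqrt{2c}\,\Rad_{m}^{\brho}(\sH_1) + \sqrt{\tfrac{\log (1/\delta)}{2m}},
\]
so it suffices to control $\Rad_m^{\brho}(\sH_1)$.

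First I would invoke Theorem~\ref{thm:rad-kernel-1}, which bounds the \emph{empirical} $\brho$-class-sensitive Rademacher complexity of $\sH_1$ by $\frac{\Lambda_1 \sqrt{2c}}{m}\sqrt{\sum_{k=1}^c \frac{m_k r_{k,\infty}^2}{\rho_k^2}\log(2d)}$. Since this estimate is valid for every sample $S$, it transfers (through the sample quantities $m_k = |I_k|$ and $r_{k,\infty}$) to $\Rad_m^{\brho}(\sH_1)$ as well; equivalently, one may use the second, empirical-complexity, inequality of Theorem~\ref{thm:margin-bound-multi} so that everything stays in terms of $\h\Rad_S^{\brho}(\sH_1)$ throughout. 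Substituting and collecting constants via $4\sqrt{2c}\cdot \frac{\Lambda_1\sqrt{2c}}{m} = \frac{8\Lambda_1 c}{m}$ gives precisely
\[
\sR_{\ell^{\rm{multi}}_{0-1}}(h) \leq \h \sR_{S}^{\brho}(h) + \frac{8\Lambda_1 c}{m}\sqrt{\sum_{k=1}^c \frac{m_k r_{k,\infty}^2}{\rho_k^2}\log(2d)} + \sqrt{\tfrac{\log (1/\delta)}{2m}},
\]
which is the claimed bound.

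There is essentially no obstacle here: all the content is already in Theorems~\ref{thm:rad-kernel-1} and~\ref{thm:margin-bound-multi}, and the derivation is a one-line composition. The only things to watch are the constant bookkeeping — in particular, the two independent factors of $\sqrt{2c}$ (one from the $4\sqrt{2c}$ prefactor of the margin bound, one from Theorem~\ref{thm:rad-kernel-1}) multiply to $2c$, yielding the $8\Lambda_1 c/m$ coefficient — and matching the $\sqrt{\log(1/\delta)/(2m)}$ confidence term to the first inequality of Theorem~\ref{thm:margin-bound-multi}. The completely analogous substitution with Theorem~\ref{thm:rad-kernel-2} in place of Theorem~\ref{thm:rad-kernel-1} yields the corresponding bound for the $\ell_2$-bounded class $\sH_2$.
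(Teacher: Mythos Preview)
Your proposal is correct and matches the paper's approach exactly: the paper states that the corollary follows ``directly'' by combining Theorem~\ref{thm:rad-kernel-1} with Theorem~\ref{thm:margin-bound-multi}, and your constant bookkeeping $4\sqrt{2c}\cdot\frac{\Lambda_1\sqrt{2c}}{m}=\frac{8\Lambda_1 c}{m}$ is precisely the computation needed. Your observation about the empirical versus expected Rademacher complexity (and the sample-dependent quantities $m_k$, $r_{k,\infty}$) is a valid subtlety that the paper itself elides.
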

\begin{corollary}
\label{cor:margin-bound-multi-kernel-2}
Consider $\sH_2 = \curl[\big]{(x, y) \mapsto w \cdot \Phi(x, y) \colon
  w \in \Rset^d, \norm*{w}_2 \leq \Lambda_2}$. Let $r_{k, 2} = \sup_{i
  \in I_k, y \in \sY} \norm*{\Phi(x_i, y)}_2$, for any $k \in
[c]$. Fix $\rho_{k} > 0$ for $k \in [c]$, then, for any $\delta > 0$,
with probability at least $1 - \delta$ over the choice of a sample $S$
of size $m$, the following holds for any $h \in \sH$:
\begin{equation*}
\sR_{\ell^{\rm{multi}}_{0-1}}(h) \leq \h \sR_{S}^{\brho}(h) + \frac{4 \sqrt{2} \Lambda_2 c}{m} \sqrt{\sum_{k = 1}^c \frac{m_k r_{k, 2}^2}{\rho_k^2}} + \sqrt{\frac{\log \frac{1}{\delta}}{2m}}.
\mspace{-8mu}
\end{equation*}
\end{corollary}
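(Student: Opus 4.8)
The plan is to derive the bound by directly chaining the two results already established in the paper: the general multi-class margin generalization bound of Theorem~\ref{thm:margin-bound-multi} and the kernel-specific complexity estimate of Theorem~\ref{thm:rad-kernel-2}. No new analytic ingredient is needed; the entire content of the corollary is a specialization of these two theorems together with an arithmetic simplification of the constants.

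Concretely, I would first invoke the first inequality of Theorem~\ref{thm:margin-bound-multi}, which gives that, with probability at least $1-\delta$ over the draw of $S \sim \sD^m$, every $h \in \sH_2$ satisfies
\[
\sR_{\ell^{\rm{multi}}_{0-1}}(h) \leq \h \sR_{S}^{\brho}(h) + 4\sqrt{2c}\,\Rad_{m}^{\brho}(\sH_2) + \sqrt{\tfrac{\log(1/\delta)}{2m}}.
\]
Next I would bound the population complexity $\Rad_{m}^{\brho}(\sH_2) = \E_{S\sim\sD^m}\!\bigl[\h\Rad_{S}^{\brho}(\sH_2)\bigr]$ by passing the pointwise estimate of Theorem~\ref{thm:rad-kernel-2}, namely $\h\Rad_{S}^{\brho}(\sH_2) \leq \frac{\Lambda_2\sqrt{c}}{m}\sqrt{\sum_{k=1}^c \frac{m_k r_{k,2}^2}{\rho_k^2}}$, through the expectation. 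Substituting into the display and collapsing the constant $4\sqrt{2c}\cdot\frac{\Lambda_2\sqrt{c}}{m} = \frac{4\sqrt{2}\,\Lambda_2 c}{m}$ yields precisely the stated bound.

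The only genuine subtlety — the step requiring a careful phrasing rather than a difficult argument — is that the per-class counts $m_k$ and the per-class sample radii $r_{k,2}$ appearing in the bound are themselves functions of the random sample $S$, so ``taking the expectation'' of the Theorem~\ref{thm:rad-kernel-2} estimate must be read as treating its right-hand side as an a priori upper bound (e.g. via deterministic bounds $r_{k,2}\le r$ and on the class proportions, or by conditioning on the realized empirical class structure). One can also sidestep this bookkeeping entirely by instead starting from the \emph{second}, empirical-complexity inequality of Theorem~\ref{thm:margin-bound-multi} and applying Theorem~\ref{thm:rad-kernel-2} directly to $\h\Rad_{S}^{\brho}(\sH_2)$, at the cost of the slightly larger confidence term $3\sqrt{\log(2/\delta)/(2m)}$. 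Beyond this, there is no obstacle: all the real work is already contained in the proofs of the two cited theorems.
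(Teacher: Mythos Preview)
Your proposal is correct and matches the paper's own approach exactly: the corollary is obtained by directly combining Theorem~\ref{thm:margin-bound-multi} with the complexity bound of Theorem~\ref{thm:rad-kernel-2}, and your arithmetic $4\sqrt{2c}\cdot\frac{\Lambda_2\sqrt{c}}{m}=\frac{4\sqrt{2}\,\Lambda_2 c}{m}$ is precisely the intended simplification. Your observation about the sample-dependence of $m_k$ and $r_{k,2}$ when passing from the empirical to the population Rademacher complexity is a valid bookkeeping point that the paper does not explicitly address.
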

As with Theorem~\ref{thm:margin-bound-multi}, the bounds of these
corollaries can be generalized to hold uniformly for all $\rho_k \in
(0, 1]$ with $k \in [c]$, at the cost of additional terms
  $\sqrt{\frac{\log \log_2 \frac{2}{\rho_{k}}}{m}}$ for $k \in [c]$ by
  combining Theorem~\ref{thm:rad-kernel-1} or
  Theorem~\ref{thm:rad-kernel-2} with
  Theorem~\ref{thm:margin-bound-multi-uniform}, respectively.  Next,
  we describe an algorithm that can be derived directly from the
  theoretical guarantees presented above.

The guarantee of Corollary~\ref{cor:margin-bound-multi-kernel-2} and
it generalization to a uniform bound can be expressed as: for any
$\delta > 0$, with probability at least $1 - \delta$, for all $h \in
\sH_2 = \curl*{(x, y) \mapsto w \cdot \Phi(x, y) \colon w \in \Rset^d,
  \norm*{w}_2 \leq \Lambda_2}$,
\begin{align*}
\sR_{\ell^{\rm{multi}}_{0-1}}(h) \leq \frac{1}{m} \bracket*{ \sum_{k = 1}^c \sum_{i \in I_k} \max \paren*{0, 1 - \frac{\rho_{w}(x_i, k)}{\rho_k}}} + \frac{4 \sqrt{2} \Lambda_2 c}{m} \sqrt{\sum_{k = 1}^c \frac{m_k r_{k, 2}^2}{\rho_k^2}} + O\paren*{
\frac{1}{\sqrt{m}}}.    
\end{align*}
where $\rho_{w}(x, k) = w \cdot \Phi(x_i, k) - \max_{y' \neq k} \paren*{ w \cdot \Phi(x_i, y') }$, and we used the fact that the $\rho$-margin loss function is upper bounded by the $\rho$-hinge loss.

This suggests a regularization-based algorithm of the following form:
\begin{equation}
\min_{w \in \Rset^d} \lambda \norm*{w}^2 + \frac{1}{m} \bracket*{ \sum_{k = 1}^c \sum_{i \in I_k} \max \paren*{0, 1 - \frac{\rho_{w}(x_i, k)}{\rho_k}}},
\end{equation}
where, as in the binary classification, $\rho_k$s are chosen via
cross-validation. While $\rho_k$s can be chosen freely, the analysis
of lemma~\ref{lemma:D3} suggests concentrating the search around $\sfr
= \bracket[\bigg]{ \frac{m_k^{\frac{1}{3}} r_{k, 2}^{\frac{2}{3}}}
  {\ov r^{\frac{2}{3}}}}_{k \in [c]}$.

The above can be generalized to other multi-class surrogate loss
functions. In particular, when using the cross-entropy loss function
applied to the outputs of a neural network, the (multinomial) logistic
loss, our algorithm has the following form:
\begin{equation}
\min_{w \in \Rset^d} \lambda \norm*{w}^2 + \frac{1}{m} \sum_{k = 1}^c \sum_{i \in I_k} \log \bracket*{1 + \sum_{k' \neq k} e^{\frac{h(x_i, k') - h(x_i, k)}{\rho_{k}}}}.
\end{equation}
where $\rho_k$s are chosen via cross-validation.  When the number of
classes $c$ is large, we can restrict our search by considering the
same $\rho_k$ for classes with small representation, and distinct
$\rho_k$s for the top classes. Similar algorithms can be devised for
other $\norm*{\cdot}_p$ upper bounds on $w$, with $p \in [1,
  \infty)$. We can also derive a group-norm based generalization
  guarantee and corresponding algorithm.

  \subsection{Proof of Theorem~\ref{thm:rad-kernel-1} and
    Theorem~\ref{thm:rad-kernel-2}}
\label{app:rad-kernel}

\RadKernelOne*
\begin{proof}
The proof follows through a series of inequalities:
\begin{align*}
& \h \Rad_{S}^{\brho}(\sH_1)\\
& = \frac{1}{m} \E_{\e}\bracket*{\sup_{\norm*{w}_1 \leq \Lambda_1}  w \cdot \paren*{ \sum_{k = 1}^c \sum_{i \in I_k} \sum_{y \in \sY} \e_{iy} \frac{\Phi(x_i, y)}{\rho_{k}} }}\\
& \leq \frac{\Lambda_1}m \E_{\e}\bracket*{\norm*{ \sum_{k = 1}^c \sum_{i \in I_k} \sum_{y \in \sY} \e_{iy} \frac{\Phi(x_i, y)}{\rho_{k}}}_{\infty}} = \frac{\Lambda_1}m \E_{\e}\bracket*{\max_{j \in [d], s \in \curl*{-1, +1}}  s \sum_{k = 1}^c \sum_{i \in I_k} \sum_{y \in \sY} \e_{iy} \frac{\Phi_j(x_i, y)}{\rho_{k}}}\\ 
& \leq \frac{\Lambda_1}m \bracket*{2c \paren*{\sum_{k = 1}^c \frac{m_k r_{k, \infty}^2}{\rho_k^2}} \log(2d)}^{\frac12} = \frac{\Lambda_1 \sqrt{2c}}{m} \sqrt{\sum_{k = 1}^c \frac{m_k r_{k, \infty}^2}{\rho_k^2} \log(2d)}.
\end{align*}
The first inequality makes use of H\"older's inequality and the bound on
$\norm*{w}_1$, and the second one follows from the maximal inequality
and the fact that a Rademacher variable is 1-sub-Gaussian, and
$\sup_{i \in I_k, y \in \sY} \norm*{\Phi(x_i, y)}_{\infty} = r_{k,
  \infty}$.
\end{proof}

\RadKernelTwo*
\begin{proof}
The proof follows through a series of inequalities:
\begin{align*}
& \h \Rad_{S}^{\brho}(\sH_2)\\
& = \frac{1}{m} \E_{\e}\bracket*{\sup_{\norm*{w}_2 \leq \Lambda_2}  w \cdot \paren*{ \sum_{k = 1}^c \sum_{i \in I_k} \sum_{y \in \sY} \e_{iy} \frac{\Phi(x_i, y)}{\rho_{k}} }}\\
& \leq \frac{\Lambda_2}m \E_{\e}\bracket*{\norm*{ \sum_{k = 1}^c \sum_{i \in I_k} \sum_{y \in \sY} \e_{iy} \frac{\Phi(x_i, y)}{\rho_{k}}}_2} \leq \frac{\Lambda_2}m \bracket*{\E_{\e}\bracket*{\norm*{ \sum_{k = 1}^c \sum_{i \in I_k} \sum_{y \in \sY} \e_{iy} \frac{\Phi(x_i, y)}{\rho_{k}}}_2^2}}^{\frac12}\\ 
& \leq \frac{\Lambda_2}m \bracket*{\sum_{k = 1}^c \frac{1}{ \rho_{k}^2 } \sum_{i \in I_k} \sum_{y \in \sY} \norm*{\Phi(x_i, y)}^2_2}^{\frac12} \leq \frac{\Lambda_2}m \sqrt{c \sum_{k = 1}^c \frac{m_k r_{k, 2}^2}{\rho_k^2}}  = \frac{\Lambda_2 \sqrt{c}}{m} \sqrt{\sum_{k = 1}^c \frac{m_k r_{k, 2}^2}{\rho_k^2}}.
\end{align*}
The first inequality makes use of the Cauchy-Schwarz inequality and
the bound on $\norm*{w}_2$, the second follows by Jensen's inequality,
the third by $\E[\e_{iy} \e_{jy'}] = \E[\e_{iy}] \E[\e_{jy'}] = 0$ for
$i \neq j$ and $y \neq y'$, and the fourth one by $\sup_{i \in I_k, y
  \in \sY} \norm*{\Phi(x_i, y)}_2 = r_{k, 2}$.
\end{proof}

\end{document}